\theoremstyle{plain}
\newtheorem{theorem}{Theorem}[section]
\newtheorem{lemma}[theorem]{Lemma}
\newtheorem{corollary}[theorem]{Corollary}
\theoremstyle{definition}
\newtheorem{definition}[theorem]{Definition}
\theoremstyle{remark}
\numberwithin{equation}{section}
\newcommand*{\N}{{\mathbb{N}}}
\newcommand*{\R}{{\mathbb{R}}}
\newcommand*{\calC}{{\mathcal{C}}}
\newcommand*{\calD}{{\mathcal{D}}}
\newcommand*{\calN}{{\mathcal{N}}}
\newcommand*{\calX}{{\mathcal{X}}}
\newcommand*{\calR}{{\mathcal{R}}}
\newcommand*{\calI}{{\mathcal{I}}}
\newcommand*{\Dx}{D_{\calX}}
\let\eps\epsilon
\let\phi\varphi
\DeclareMathOperator*{\pr}{\mathbb{P}}
\DeclareMathOperator*{\ex}{\mathbb{E}}
\DeclareMathOperator{\var}{Var}
\DeclareMathOperator{\unif}{Unif}
\DeclareMathOperator*{\argmin}{arg\,min}
\DeclarePairedDelimiter{\inn}{\langle}{\rangle}
\renewcommand{\th}{^\text{th}}
\let\hat\widehat
\DeclareMathOperator{\poly}{poly}
\DeclareMathOperator{\sgn}{sign}
\DeclareMathOperator{\ind}{\mathbbm{1}}
\newcommand*\diff{\mathop{}\!\mathrm{d}}
\newcommand{\opt}{\mathsf{opt}}
\newcommand{\cube}[1]{\{\pm 1\}^{#1}}
\newcommand{\wt}[1]{\widetilde{#1}}
\newcommand{\acz}{\mathsf{AC}^0}
\newcommand{\ignore}[1]{} 
\title{A Moment-Matching Approach to Testable Learning and a New Characterization of Rademacher Complexity}
\author{Aravind Gollakota\thanks{\texttt{aravindg@cs.utexas.edu}. Supported by NSF award AF-1909204 and the NSF AI Institute for Foundations of Machine Learning (IFML).} \\ UT Austin
	\and Adam R. Klivans\thanks{\texttt{klivans@cs.utexas.edu}. Supported by NSF award AF-1909204 and the NSF AI Institute for Foundations of Machine Learning (IFML).} \\
	UT Austin
	\and Pravesh K. Kothari\thanks{\texttt{praveshk@cs.cmu.edu}. Supported by NSF CAREER Award \#2047933, NSF \#2211971, an Alfred P. Sloan Fellowship, and a Google Research Scholar Award.} \\
	CMU
}
\date{November 19, 2022}
\begin{document}

\maketitle

\begin{abstract}
A remarkable recent paper by Rubinfeld and Vasilyan \cite{rubinfeld2022testing} initiated the study of \emph{testable learning}, where the goal is to replace hard-to-verify distributional assumptions (such as Gaussianity) with efficiently testable ones and to require that the learner succeed whenever the unknown distribution passes the corresponding test. In this model, they gave an efficient algorithm for learning halfspaces under testable assumptions that are provably satisfied by Gaussians.

In this paper we give a powerful new approach for developing algorithms for testable learning using tools from moment matching and metric distances in probability.  We obtain efficient testable learners for any concept class that admits low-degree \emph{sandwiching polynomials}, capturing most important examples for which we have ordinary agnostic learners.  We recover the results of Rubinfeld and Vasilyan as a corollary of our techniques while achieving improved, near-optimal sample complexity bounds for a broad range of concept classes and distributions.

Surprisingly, we show that the information-theoretic sample complexity of testable learning is tightly characterized by the Rademacher complexity of the concept class, one of the most well-studied measures in statistical learning theory. In particular, uniform convergence is necessary and sufficient for testable learning.  This leads to a fundamental separation from (ordinary) distribution-specific agnostic learning, where uniform convergence is sufficient but not necessary.
\end{abstract}

\section{Introduction} \label{sec:intro}

In the fundamental model of agnostic learning \cite{kearns1992toward,vapnik1998statistical}, a learner tries to output the best-fitting function from a concept class $\calC$ with respect to an unknown labeled distribution $\calD$ in the following sense: given sufficiently many labeled examples, with high probability it must produce a hypothesis with error at most $\opt(\calC, \calD) + \eps$ over $\calD$, where $\opt(\calC, \calD)$ denotes the optimal error achievable over $\calD$ by any concept in $\calC$. No assumptions are made on the labels.

Agnostic learning is known to be computationally intractable for even the simplest function classes without making assumptions on the marginal \cite{kearns1992toward,kearns1994cryptographic,klivans2009cryptographic,guruswami2009hardness,feldman2012agnostic,daniely2016complexity,daniely2016dnf}.  There is now a substantial literature of efficient agnostic learning algorithms under various distributional assumptions, the most common being that the marginal is Gaussian or $\unif\cube{d}$ (see e.g.\ \cite{linial1993constant,bshouty1996fourier,kalai2008agnostically,klivans2008learning,kane2011gaussian}).  The problem of directly verifying this distributional assumption from samples, however, is often computationally infeasible (such as for $\unif\cube{d}$) or fundamentally ill-posed (as for $\calN(0, I_d)$\footnote{To see why this is the case even when $d = 1$, fix any finite sample size $m$, and consider a (random) discrete distribution $\hat{D}$ that is uniform on $\Omega(m^2)$ points drawn from $\calN(0,1)$. This distribution has TV distance $1$ from $\calN(0,1)$ (since it is discrete), yet with high probability a sample of size $m$ drawn from $\hat{D}$ will be duplicate-free and distributed exactly as a sample of size $m$ drawn from $\calN(0,1)$.}). 

Since in the agnostic model we make no assumptions on the labels, we have no a priori estimate of $\opt(\calC, \calD)$, the error of the best-fitting classifier.  Thus, a major (and often overlooked) issue with the agnostic learning model is that {\em it is unclear how to verify that the agnostic learner has actually succeeded.}  Note that while we can estimate the true error of the output hypothesis on a hold-out set (a.k.a.\ validation), we do not know its relationship to $\opt(\calC, \calD)$.  

With this motivation in mind, very recent work of Rubinfeld and Vasilyan \cite{rubinfeld2022testing} introduced the elegant model of testable agnostic learning, or just testable learning for short. In this model, no assumptions are made on $\calD$, but there is a tester responsible for verifying whether the unknown marginal is suitably well-behaved. Whenever the tester accepts, the learner must succeed at producing a hypothesis with error at most $\opt(\calC, \calD) + \eps$ (with high probability). And to ensure nontriviality, whenever the unknown marginal is indeed a certain well-behaved target marginal $\Dx$, the tester must accept (with high probability). We say the class $\calC$ is testably learnable with respect to a target marginal $\Dx$ if there is a tester-learner pair meeting these conditions (see \cref{def:testable-learning}).

In this model, \cite{rubinfeld2022testing} showed that halfspaces can be testably learned with respect to Gaussians in time and sample complexity $d^{\wt{O}(1/\eps^4)}$. Their proof involves checking that the low-degree moments of the unknown marginal are close to those of a Gaussian. They show that this implies concentration and anticoncentration properties of the unknown marginal and further prove that any distribution (including the empirical distribution on samples) that satisfies such properties admits low-degree polynomial approximators for halfspaces. Their analysis, however, is catered specifically to the case of halfspaces and Gaussian marginals. 

We note that in independent and concurrent work, Rubinfeld and Vasilyan~\cite{rubinfeld2022personal} have found a testable learning algorithm for halfspaces over the uniform distribution on the hypercube with the same $d^{\tilde{O}(1/\epsilon^4)}$ sample complexity as in the Gaussian case. As we'll discuss, their techniques are quite different from ours.

\subsection{Our results}
Our main algorithmic contribution is a general framework that yields efficient testable learning algorithms for broad classes of functions and distributions (both continuous and discrete). As we discuss in more detail below, our framework departs from the focus on constructing low-degree polynomial approximations with respect to absolute loss as in~\cite{rubinfeld2022testing}, which appears hard to extend to classes beyond a single halfspace. Instead, we rely on a new connection to a stronger type of approximator --- sandwiching polynomials --- that arises naturally in constructing pseudorandom generators for classes of Boolean functions.

As it turns out, many interesting and well-studied concept classes admit both sandwiching approximators and ordinary low-degree polynomial approximators of essentially the same degree, even though sandwiching is a formally stronger notion.  As a result, we derive testable learning algorithms for halfspaces and more generally arbitrary functions of a bounded number of halfspaces with respect to any fixed strongly logconcave distribution. For the uniform distribution on the hypercube, we obtain algorithms for halfspaces, degree-2 PTFs, and constant-depth circuits. For each of these applications, our running times and sample complexity guarantees match the best known results for ordinary agnostic learning, thus showing that testable learning can often be achieved at no additional cost  (see \cref{thm:learn-kwise} and \cref{thm:learn-hs} for precise statements). 

In particular, for the special case of testably learning a single halfspace with respect to the Gaussian, our results improve the $d^{\wt{O}(1/\epsilon^4)}$ running time and sample complexity guarantee shown in~\cite{rubinfeld2022testing} to $d^{\wt{O}(1/\epsilon^2)}$, matching the best known (and conditionally optimal) results for ordinary agnostic learning. Moreover, our analysis extends to a broad family of distributions including strongly logconcave distributions.

We now describe our results and discuss our techniques in more detail.

\paragraph{Sandwiching polynomial approximation.} Our starting point is a relationship between testable learning and a certain stronger notion of polynomial approximation that arises naturally in building pseudorandom generators for Boolean function classes. Specifically, a concept class $\calC$ admits \emph{sandwiching} approximations of degree $k$ and error $\epsilon$ on $\Dx$ if for every function $f \in \calC$, there exist two degree-$k$ polynomials $p_l, p_u$ such that \emph{for every $x$}, $p_l(x) \leq f(x) \leq p_u(x)$, and moreover $\ex_{\Dx}[f - p_l], \ex_{\Dx}[p_u - f] \leq \eps$. Observe that this is a stronger requirement than the existence of approximating polynomials for $\calC$ with respect to absolute loss, which only requires that for every $f$, there be a degree $k$ polynomial $p$ such that $\ex_{\Dx}[|p-f|] \leq \eps$.

The main theorem underlying our framework shows that unlike the existence of polynomial approximators with respect to absolute loss, the existence of sandwiching approximations universally translates into testable learning algorithms:

\begin{theorem}[Testable learning using approximate moment matching; see \cref{thm:main-algorithm}]\label{thm:main-algorithm-intro}
Let $\Dx$ be a distribution on $\calX$, and let $\calC$ be a concept class mapping $\calX$ to $\cube{}$. Let $k \in \N, \delta \in \R_+$ be degree and slack parameters, and let $\eps > 0$ be the error parameter. Suppose that every $f \in \calC$ admits degree-$k$ $\eps$-sandwiching polynomials $p_l \leq f \leq p_u$ w.r.t.\ $\Dx$ such that $\|p_l\|_1, \|p_u\|_1 \leq \eps/\delta$, where $\|p_u\|_1$ (resp.\ $\|p_l\|_1$) refers to the $\ell_1$ norm of the coefficients of $p_u$ (resp.\ $p_l$). Suppose also that with high probability over a sample of size $d^{O(k)}$, the empirical moments of degree at most $k$ of $\Dx$ are within $\delta$ of their true moments. Then $\calC$ can be testably learned w.r.t.\ $\Dx$ up to excess error $O(\eps)$ in sample and time complexity $d^{O(k)}/\poly(\eps)$.
\end{theorem}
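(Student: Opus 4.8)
The plan is to exhibit a tester--learner pair of the following shape. Draw a sample of size $m = d^{O(k)}/\poly(\eps)$ and split it into $S_1$ and a fresh hold-out $S_2$. The \emph{tester} computes from the unlabeled part of $S_1$ every empirical moment $\widehat m_\alpha = \ex_{S_1}[x^\alpha]$ of degree $|\alpha| \le k$ and accepts iff $|\widehat m_\alpha - \ex_{\Dx}[x^\alpha]| \le \delta$ for all such $\alpha$ (the moments of the fixed target $\Dx$ are known). When the tester accepts, the \emph{learner} solves the $\ell_1$-constrained hinge-loss minimization $\hat q \in \argmin\{\, \ex_{S_1}[\max(0, 1 - y\,q(x))] : \deg q \le k,\ \|q\|_1 \le \eps/\delta \,\}$ --- a linear program --- then tunes a threshold $t^\ast$ on $S_2$ and outputs $h = \sgn(\hat q(\cdot) - t^\ast)$. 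Completeness of the tester is immediate from the hypothesis that $d^{O(k)}$ samples suffice for the empirical moments of $\Dx$ to be within $\delta$ of the true ones; the content is soundness.

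For soundness, assume the tester accepts on a sample from an arbitrary $\calD$ (whose marginal $\calD_{\calX}$ need not be $\Dx$, and may even match all degree-$k$ moments of $\Dx$ while being far from it in every other sense). Let $f^\ast \in \calC$ be a near-optimal concept and $p_l \le f^\ast \le p_u$ its degree-$k$ $\eps$-sandwiching polynomials with $\|p_l\|_1, \|p_u\|_1 \le \eps/\delta$. Three transfer facts drive everything: (i) for any degree-$\le k$ polynomial $q$, acceptance of the tester gives $|\ex_{S_1}[q] - \ex_{\Dx}[q]| \le \|q\|_1 \cdot \delta$, since $\ex[q]$ is the corresponding $\|\cdot\|_1$-weighted combination of degree-$\le k$ moments; (ii) $f^\ast$ is a single fixed $\{\pm 1\}$-valued function, so a Hoeffding bound on $S_1$ gives $\ex_{S_1}[\ind[f^\ast(x) \ne y]] \le \opt(\calC,\calD) + \eps$; (iii) the class of \emph{truncated} hinge losses $\{(x,y) \mapsto |\operatorname{clip}_{[-1,1]}(q(x)) - y| : \deg q \le k,\ \|q\|_1 \le \eps/\delta\}$ is $[0,2]$-valued with pseudo-dimension $d^{O(k)}$, so $d^{O(k)}/\poly(\eps)$ samples give uniform convergence of this loss to within $\eps$.

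The crucial step is bounding the empirical hinge loss of the feasible point $p_u$. Pointwise one has $\max(0, 1 - y\,p_u(x)) \le |f^\ast(x) - y| + \big(p_u(x) - f^\ast(x)\big) \le |f^\ast(x) - y| + \big(p_u(x) - p_l(x)\big)$: here $p_u - f^\ast$, which is \emph{not} a polynomial, is dominated by the honest degree-$k$ polynomial $p_u - p_l$ --- this is exactly where sandwiching, rather than mere absolute-loss approximation, is used. Taking $\ex_{S_1}$, applying (ii) to the first term and (i) (with $\ex_{\Dx}[p_u - p_l] \le 2\eps$, from the two sandwiching conditions) to the second, gives empirical hinge loss $\le 2\,\opt(\calC,\calD) + O(\eps)$ for $p_u$, hence for the minimizer $\hat q$ as well. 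Since truncated hinge loss is pointwise at most hinge loss, (iii) then yields $\ex_{\calD}[|\operatorname{clip}(\hat q(x)) - y|] \le 2\,\opt(\calC,\calD) + O(\eps)$. Finally, for $t \sim \unif[-1,1]$ one checks the pointwise identity $\pr_t[\sgn(\hat q(x) - t) \ne y] = \tfrac12 |\operatorname{clip}(\hat q(x)) - y|$, so averaging over $t$ shows that a \emph{random} threshold already has expected $0/1$ error $\le \opt(\calC,\calD) + O(\eps)$; consequently some fixed threshold does, and tuning $t^\ast$ over the $O(|S_2|)$ distinct data-induced thresholds on $S_2$ (uniform convergence over thresholds of a fixed real-valued function has VC dimension one) locates one.

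The main obstacle --- and the reason sandwiching is the right hypothesis --- is that the moment test controls only \emph{linear} functionals of bounded-$\ell_1$-norm degree-$k$ polynomials: it says nothing about $\ex[f^\ast]$ or $\ex[|p - f^\ast|]$, which involve the non-polynomial $f^\ast$ whose behavior under the possibly adversarial $\calD_{\calX}$ is unrelated to its behavior under $\Dx$. Having \emph{both} a lower and an upper sandwiching polynomial is what lets us replace the problematic quantity $p_u - f^\ast$ by the genuine polynomial $p_u - p_l$ while isolating $f^\ast$'s own contribution into a single-function Hoeffding bound. The remaining ingredients are routine: the factor-of-two gap between hinge/$L_1$ loss and $0/1$ error is closed by the randomized-threshold trick familiar from agnostic $L_1$ polynomial regression, and all uniform-convergence and union-bound bookkeeping fits within the $d^{O(k)}/\poly(\eps)$ sample and time budget (the learner's only nontrivial computation is a linear program in $d^{O(k)}$ variables).
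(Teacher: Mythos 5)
Your proposal is correct and takes essentially the same route as the paper: the identical degree-$k$ moment-matching tester, with soundness driven by the fact that the $\delta\|\cdot\|_1$ slack transfers expectations of the bounded-coefficient sandwiching polynomials from $\Dx$ to whatever empirical distribution the tester accepts, followed by degree-$k$ polynomial regression with thresholding. The only real difference is presentational: where the paper invokes \cref{cor:sandwiching} and the KKMS regression theorem (\cref{thm:kkms}) as black boxes, you inline them via the pointwise bound $\max(0,1-y\,p_u)\le|f^*-y|+(p_u-p_l)$, a single-function Hoeffding bound for $f^*$, and the explicit clipped-loss/randomized-threshold analysis.
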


Theorem~\ref{thm:main-algorithm-intro} relies on a simple tester: verify that the empirical moments of degree at most $k$ are close enough to the those of $\Dx$. The correctness of the tester relies on the claim that sandwiching polynomials for $\calC$ under $\Dx$ are also sandwiching polynomials for $\calC$ under the uniform distribution $\hat{D}$ on a large enough sample from $\Dx$, with an additional error that scales proportional to the $\ell_1$ norm of the coefficients of the polynomial approximators. Thus, whenever we have sandwiching approximators with appropriate bounds on the $\ell_1$ norm of the coefficient vectors, our testable learner can simply use the now-standard degree-$k$ (absolute-loss) polynomial regression algorithm of \cite{kalai2008agnostically}.

 The work of \cite{kalai2008agnostically} showed that the existence of low-degree (not necessarily sandwiching) polynomial approximators with respect to absolute loss suffices for {\em ordinary} agnostic learning. Our theorem on sandwiching polynomial approximators can be thought of as the natural counterpart to their condition but for \emph{testable} agnostic learning.


Our main task now reduces to constructing sandwiching polynomials with sufficiently small coefficients. Since proofs of existence of sandwiching polynomials are sometimes nonconstructive (e.g., for constant-depth circuits over the hypercube, where the existence of such polynomials follows from LP duality~\cite{bazzi2009polylogarithmic}), we require new techniques to prove bounds on the $\ell_1$ norm of the coefficients.  We make progress by crucially exploiting a form of {\em approximate} duality between sandwiching polynomials and moment matching. 

\paragraph{Moment matching and sandwiching polynomials.} The duality between fooling using moment matching and the existence of sandwiching polynomials is well-known in the setting of the Boolean hypercube \cite{bazzi2009polylogarithmic}, where moment matching $\unif\cube{d}$ up to degree $k$ is equivalent to $k$-wise independence. We need an approximate version of this duality in order to derive a bound on the coefficients of the approximating polynomials. Moreover, we need duality to hold over continuous domains for our applications to non-discrete settings (such as Gaussian and strongly logconcave distributions). A duality relating \emph{exact} moment matching and sandwiching approximations over general domains was proved in~\cite{klivans2013moment}.

We derive the following general duality result, which tells us that approximate moment matching fools a class $\calC$ over $\Dx$ iff every concept in $\calC$ admits a pair of sandwiching polynomials with sufficiently small coefficients. Our proof relies on establishing a strong duality result for a certain \emph{semi-infinite} linear program using tools from general conic duality~\cite{shapiro2001duality}.

\begin{theorem}[Fooling using approximate moment matching $\iff$ sandwiching approximation; see \cref{thm:duality}]\label{thm:duality-intro}
Let $\Dx$ be a distribution on $\calX$, and let $\calC$ be a concept class mapping $\calX$ to $\cube{}$. Let $k \in \N, \delta \in \R_+$ be degree and slack parameters, and let $\eps > 0$ be the error parameter. The following are equivalent: \begin{itemize}
    \item (Approximate moment matching fools $\calC$.) For all $f \in \calC$ and for all distributions $D'$ whose moments of degree at most $k$ are within $\delta$ of those of $\Dx$, we have $|\ex_{D'}[f] - \ex_{\Dx}[f]| \leq \eps$.
    \item (Existence of sandwiching polynomials with bounded coefficients for $\calC$.) For all $f \in \calC$, there exist degree-$k$ polynomials $p_l, p_u$ such that $p_l \leq f \leq p_u$ (pointwise over $\R^d$), and \[ \ex_{\Dx}[p_u - f] + \delta\|p_u\|_1 \leq \eps, \qquad \ex_{\Dx}[f - p_l] + \delta\|p_l\|_1 \leq \eps, \] where $\|p_u\|_1$ (resp.\ $\|p_l\|_1$) refers to the $\ell_1$ norm of the coefficients of $p_u$ (resp.\ $p_l$).
\end{itemize}
\end{theorem}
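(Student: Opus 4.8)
The plan is to reduce the equivalence, $f$ by $f$, to a single linear-programming duality. Both statements are conjunctions over $f \in \calC$, and for a fixed $f$ the upper sandwiching polynomial $p_u$ will be dual to the one-sided bound $\ex_{D'}[f] \le \ex_{\Dx}[f] + \eps$, while the lower polynomial $p_l$ comes from running the same argument on $-f$, which is again $\cube{}$-valued. So fix $f$ and one direction: we must produce $p_u$ from the hypothesis that $\ex_{D'}[f] \le \ex_{\Dx}[f]+\eps$ for every distribution $D'$ on $\calX$ (with finite moments up to degree $k$) such that $|\ex_{D'}[x^\alpha] - \ex_{\Dx}[x^\alpha]| \le \delta$ for all $1 \le |\alpha| \le k$. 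The implication ``sandwiching $\Rightarrow$ fooling'' needs no duality: given such a $D'$ and $p_l \le f \le p_u$, expand $\ex_{D'}[f] \le \ex_{D'}[p_u] = \sum_\alpha c_\alpha \ex_{D'}[x^\alpha]$, use $\ex_{D'}[1]=\ex_{\Dx}[1]$ for the constant monomial and $|\ex_{D'}[x^\alpha]-\ex_{\Dx}[x^\alpha]|\le\delta$ for the others to get $\ex_{D'}[f] \le \ex_{\Dx}[p_u]+\delta\|p_u\|_1 \le \ex_{\Dx}[f]+\eps$, and symmetrically with $p_l$; so the content is the converse.

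For the converse I would write $\sup_{D'}\ex_{D'}[f]$ over the admissible $D'$ as the value of a semi-infinite linear program whose variable is a nonnegative measure, subject to the equality constraint $\int 1\,dD' = 1$ and the two-sided moment constraints above, and take its Lagrangian dual. The multiplier for the normalization becomes the constant coefficient of a degree-$k$ polynomial $p_u$; the multipliers for the two-sided moment constraints collapse, upon choosing the better sign per monomial (this is where $\ell_\infty/\ell_1$ duality enters), into the non-constant coefficients of $p_u$ together with a penalty $\delta\|p_u\|_1$; and boundedness of the Lagrangian over all nonnegative measures forces $p_u(x) \ge f(x)$ for every $x$. The dual program is therefore exactly $\inf\{\ex_{\Dx}[p_u]+\delta\|p_u\|_1 : \deg p_u \le k,\ p_u \ge f\text{ pointwise}\}$, so under strong duality the hypothesis $\sup_{D'}\ex_{D'}[f] \le \ex_{\Dx}[f]+\eps$ yields a feasible $p_u$ with $\ex_{\Dx}[p_u-f]+\delta\|p_u\|_1\le\eps$, which is what we want. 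Dual attainment is not an issue --- so we get an honest polynomial, not a limit: along any minimizing sequence, $p_u \ge f \ge -1$ forces $\ex_{\Dx}[p_u]\ge-1$, so $\delta\|p_u\|_1$ and hence the non-constant coefficients stay bounded, and then the constant coefficient is pinned between $-1$ and the objective value, giving a convergent subsequence.

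The one genuine obstacle is the absence of a duality gap, which is delicate here because the textbook Slater condition fails: the cone of nonnegative measures has empty interior, and over a non-compact $\calX$ such as $\R^d$ the truncated moment body need not even be closed --- exactly the settings where conic programs have gaps. When $\calX$ is compact (e.g.\ the Boolean hypercube) no-gap is classical, so the work is entirely in the non-compact case, where I would appeal to the general conic / semi-infinite duality results of \cite{shapiro2001duality}. The key leverage is that $\delta > 0$ places the moment vector of $\Dx$ strictly inside the box of admissible moment vectors, supplying the generalized-Slater / Isii-type regularity needed for the moment constraints and leaving only the positivity cone, now harmless; if verifying this directly is awkward, an alternative is to first prove the statement with $\calX$ restricted to a ball $B_R$, where compactness makes the duality routine, and then let $R \to \infty$, using the uniform coefficient bounds from the attainment argument to extract a limiting polynomial dominating $f$ on all of $\calX$.
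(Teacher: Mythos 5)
Your proposal follows essentially the same route as the paper's proof of \cref{thm:duality}: the easy direction by expanding $\ex_{D'}[p_u]$ against the moment constraints, and the hard direction by dualizing the semi-infinite LP $\sup_{D'} \ex_{D'}[f]$ over nonnegative measures with box moment constraints, identifying the dual as $\inf\{\ex_{\Dx}[p_u]+\delta\|p_u\|_1 : p_u \ge f \text{ pointwise}\}$, invoking the conic duality of \cite{shapiro2001duality} with strong duality supplied by the strict interiority that $\delta>0$ provides (the paper's \cref{sec:formal-duality} verifies exactly this, handling the normalization equality constraint by a small mass perturbation near the origin), and obtaining $p_l$ by rerunning the argument on $-f$. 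Your added touches --- the explicit dual-attainment argument via bounded coefficients and the compactify-then-let-$R\to\infty$ fallback --- are sound but not needed beyond what the paper's appendix already establishes.
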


\paragraph{Applications: testably learning functions of halfspaces and more.} Combining \cref{thm:main-algorithm-intro,thm:duality-intro}, we obtain a clean framework for testable learning that reduces the task to establishing that approximate low-degree moment matching fools the target concept class over the target marginal. As our main application, we show that any function of a constant number of halfspaces over $\R^d$ can be testably learned up to excess error $\eps$ in sample and time complexity $d^{\wt{O}(1/\eps^2)}$ with respect to any distribution whose directional projections are sufficiently anticoncentrated and have strictly sub-exponentially decaying tails:

\begin{definition}\label{def:strictly-subexp-intro}
We say a distribution $\Dx$ on $\R^d$ is anticoncentrated and has $\alpha$-strictly subexponential tails if the following hold: \begin{enumerate}[(a)]
\item \emph{$\alpha$-strictly subexponential tails:}  For all $\|u\| = 1$, $\pr[|\inn{x,u}| > t] \leq \exp(-C t^{1+\alpha})$ for some constant $C$.
\item \emph{Anticoncentration:} For all $\|u\| = 1$ and continuous intervals $T \subset \R$, we have $\pr[\inn{x, u} \in T] \leq C' |T|$ for some constant $C'$.
\end{enumerate}
\end{definition}


\begin{theorem}[Testably learning functions of halfspaces; see \cref{thm:learn-hs}]\label{thm:learn-hs-intro}
Let $\calC$ be the class of functions of a constant number of halfspaces over $\R^d$. Let $\Dx$ be a distribution that is anticoncentrated and has $\alpha$-strictly subexponential tails (\cref{def:strictly-subexp-intro}). Then $\calC$ can be testably learned w.r.t.\ $\Dx$ up to excess error $\eps$ using sample and time complexity $d^{\wt{O}(\epsilon^{-(1+\alpha)/\alpha})}$.
\end{theorem}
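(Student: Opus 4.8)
The plan is to apply \cref{thm:main-algorithm-intro}. Fix $\eps>0$ and set $k=\wt O(\eps^{-(1+\alpha)/\alpha})$ (to be pinned down by the construction below) and a slack $\delta=\eps/d^{O(k)}$. The hypotheses of \cref{thm:main-algorithm-intro} then ask for two things: (i) that every $f\in\calC$ has degree-$k$ $\eps$-sandwiching polynomials with respect to $\Dx$ whose coefficient vectors have $\ell_1$ norm at most $\eps/\delta=d^{O(k)}$; and (ii) that the degree-$\le k$ empirical moments of $\Dx$ concentrate to within $\delta$ on a sample of size $d^{O(k)}$. Part (ii) is routine: by \cref{def:strictly-subexp-intro}(a) each coordinate marginal has $\alpha$-strictly-subexponential tails, so every one of the $\le d^k$ moments of degree $\le k$ has bounded variance, and a Bernstein bound plus a union bound show that $d^{O(k)}$ samples suffice for the required accuracy. (By \cref{thm:duality-intro}, exhibiting the sandwiching polynomials in (i) is equivalent to showing that approximate degree-$k$ moment matching fools $\calC$ over $\Dx$; it is cleanest to construct the polynomials directly.) So the real content is (i), and everything reduces to the case of a small number of halfspaces.

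The core case is a single halfspace $f(x)=\sgn(\inn{w,x}-\theta)$. It depends only on the one-dimensional projection $t=\inn{w,x}$, whose law $\mu$ under $\Dx$ inherits both properties of \cref{def:strictly-subexp-intro} (which are stated for every unit direction): $\mu(|t|>R)\le\exp(-CR^{1+\alpha})$ and $\mu(T)\le C'|T|$ for every interval $T$. I would build univariate degree-$m$ polynomials $q_l\le\sgn(\cdot-\theta)\le q_u$ with $\ex_\mu[q_u-\sgn],\ex_\mu[\sgn-q_l]\le\eps'$ as follows. Take a Chebyshev/Jackson-type approximation $p$ to $\sgn(\cdot-\theta)$ that is $O(1)$-bounded on the window $[\theta-R,\theta+R]$, agrees with $\sgn$ to within $\eps'$ on $[\theta-R,\theta+R]\setminus[\theta-\tau,\theta+\tau]$, has $\ell_1$ coefficient norm $2^{O(m)}$, and has degree $m\approx R/\tau$; then add a nonnegative correction term (whose $\mu$-mass outside $[\theta-\tau,\theta+\tau]\cup\{|t-\theta|>R\}$ is $\le\eps'$) to turn $p$ into a genuine pointwise upper sandwich $q_u$, and symmetrically a lower sandwich $q_l$, allowing $q_u,q_l$ to blow up like $|t|^m$ outside the window. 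The error $\ex_\mu[q_u-\sgn]$ splits into an $O(\eps')$ term from the good window, an anticoncentration term $\le\mu([\theta-\tau,\theta+\tau])\cdot O(1)\le 2C'\tau$ (using that $q_u$ stays $O(1)$-bounded there), and a tail term $\le\int_R^\infty t^{O(m)}\,d\mu\lesssim\int_R^\infty t^{O(m)}e^{-Ct^{1+\alpha}}\,dt$. Setting $\tau=\Theta(\eps')$ and $R=\Theta\big((m\log(1/\eps'))^{1/(1+\alpha)}\big)$ makes all three $O(\eps')$, and substituting $m\approx R/\tau$ and solving gives $m=\wt O(\eps'^{-(1+\alpha)/\alpha})$. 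Finally, substituting $t=\inn{w,x}$ and expanding, $q_\bullet(\inn{w,x})$ has $\ell_1$ coefficient norm (in $x$) at most $2^{O(m)}\|w\|_1^m\le d^{O(m)}$.

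For a general $f=g(h_1,\dots,h_s)\in\calC$ with $g:\cube{s}\to\cube{}$ and $s=O(1)$ halfspaces $h_i=\sgn(\inn{w_i,x}-\theta_i)$, first record the good sets: let $G_i\subseteq\calX$ be the region, of $\Dx$-measure $\ge 1-O(\eps')$, where $q_{l,i},q_{u,i}$ are within $\eps'$ of $h_i$ and $O(1)$-bounded, and $G=\bigcap_i G_i$. Expanding $g$ over $\cube{s}$ gives $f=\sum_{S\subseteq[s]}\hat g(S)\prod_{i\in S}h_i$ with $\sum_S|\hat g(S)|\le 2^{s/2}$, so it suffices to sandwich each parity $\prod_{i\in S}h_i$. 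Using identities like $h_1h_2=\tfrac12(h_1+h_2)^2-1$ together with the elementary fact that $a\le v\le b$ implies $v^2\le a^2+b^2$ (which converts a two-sided polynomial bound on $h_1+h_2$ into a polynomial \emph{upper} bound on $(h_1+h_2)^2$), and iterating over the at most $s$ factors, one gets polynomial sandwiches for each parity of degree $O(m)$ (with the constant depending on $s$); the reformulation is what avoids the sign pathologies of naively multiplying the $q_{\bullet,i}$. On $G$ these sandwiches track $\prod_{i\in S}h_i$ to within $2^{O(s)}\eps'$, and off $G$ the overshoot is dominated by $\prod_i|q_{\bullet,i}(\inn{w_i,x})|$, whose $\Dx$-integral over $G^c$ is controlled by the same anticoncentration and tail estimates as above --- which is why the window radius must be taken $\Theta\big((sm\log(1/\eps'))^{1/(1+\alpha)}\big)$. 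Summing over the $2^s=O(1)$ parities, rescaling $\eps'=\Theta(\eps)$, and putting $k=O(m)=\wt O(\eps^{-(1+\alpha)/\alpha})$ yields degree-$k$ $\eps$-sandwiching polynomials for $f$ with $\ell_1$ coefficient norm $(d^{O(m)})^{O(s)}=d^{O(k)}$. Feeding $k$ and $\delta=\eps/d^{O(k)}$ into \cref{thm:main-algorithm-intro} gives testable learning of $\calC$ with respect to $\Dx$ in sample and time $d^{O(k)}/\poly(\eps)=d^{\wt O(\eps^{-(1+\alpha)/\alpha})}$.

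I expect the main obstacle to be the univariate sandwiching step of the second paragraph: converting a one-sided, $L_1$-accurate, $L_\infty$-bounded Chebyshev approximation of the sign function into a genuine \emph{pointwise} sandwich without inflating the degree, while keeping the $\ell_1$ coefficient norm under control and correctly balancing the anticoncentration window (which forces degree $\gtrsim 1/\eps$) against the subexponential tail growth (which forces the window radius $\gtrsim m^{1/(1+\alpha)}$). This trade-off is precisely what produces the exponent $(1+\alpha)/\alpha$, and specializing to $\alpha=1$ recovers the $d^{\wt O(1/\eps^2)}$ bound claimed for Gaussian marginals. The secondary, more mechanical obstacle is the sign-careful combining of the $s$ halfspaces so that products of the individual sandwiches remain valid pointwise sandwiches for $f$.
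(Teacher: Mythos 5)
Your proposal is correct in outline but takes a genuinely different route from the paper. The paper never constructs sandwiching polynomials explicitly: it proves the \emph{fooling} statement directly (\cref{thm:fool-gaussian}) by Taylor-expanding characteristic functions to show that approximate moment matching with a strictly subexponential $\Dx$ forces closeness in the $\lambda$-metric (\cref{lem:lambda-closeness}), then passes through the Levy/CDF distance using anticoncentration (\cref{lem:km-lemma}), and finally invokes the approximate LP duality (\cref{thm:duality}) to conclude, nonconstructively, that sandwiching polynomials with the required $\ell_1$-coefficient bounds exist, which feeds into \cref{thm:main-algorithm} together with \cref{lem:moment-conc}. You instead build the sandwiches by hand: a mollified-sign approximation on a window of radius $R$ with transition width $\tau$, a rapidly growing correction to enforce pointwise domination on all of $\R$, tails paying for the off-window region and anticoncentration of $\Dx$ paying for the band, and then a Fourier expansion of $g$ plus squaring identities to combine $s=O(1)$ halfspaces. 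Your parameter balancing ($\tau=\Theta(\eps)$, $R=\Theta((m\log(1/\eps))^{1/(1+\alpha)})$, $m=\wt{O}(R/\tau)$) does land on the same exponent $(1+\alpha)/\alpha$, and crucially you only ever use anticoncentration of the \emph{ideal} $\Dx$ (the transfer to the empirical distribution being handled by the moment-matching framework, as in \cref{cor:sandwiching}), which is why you avoid the quartic loss of the box-function arguments the paper criticizes. What each approach buys: the paper's route sidesteps all pointwise-domination and coefficient bookkeeping and isolates the analytic content in the $\lambda$-metric lemma; your route, if completed, is constructive and certifies the $\ell_1$ bounds directly without the strong-duality machinery.

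The caveat is that the heaviest step --- the univariate claim that one can turn an $O(1)$-bounded window approximation into a genuine pointwise upper/lower sandwich on all of $\R$ of degree $\wt{O}(\eps^{-(1+\alpha)/\alpha})$ with $2^{\wt{O}(m)}$ coefficients --- is asserted rather than proved, and it is exactly where the delicacy lies: the correction must dominate the Chebyshev-type growth of the approximant just outside the window (forcing a $2^{\Theta(m)}$ multiplier) while its expectation inside the window stays $O(\eps)$, and thresholds $|\theta|\gtrsim R$ need a separate (easy but necessary) treatment. Your balancing of $R$ against the strictly subexponential tail makes this work, but the details are real. Two minor points: for moment concentration use a Chebyshev/variance bound (as in \cref{lem:moment-conc}) rather than Bernstein, since $x_1^k$ is heavy-tailed for large $k$; and for the lower sandwich of a product you need the companion identity $h_1h_2 = 1 - \tfrac12(h_1-h_2)^2$ (an upper bound on a square), since two-sided bounds give no useful pointwise lower bound on a square.
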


We note that even for ordinary agnostic learning, the above result is an exponential improvement in the dependence on $\eps$ in the degree of the sandwiching polynomial compared to prior constructions of \cite{klivans2013moment}.

On the flip side, note that even though our framework handles $\Dx$ that come from a fairly broad family, our tester must know the low-degree moments of the particular $\Dx$ with respect to which it is expected to succeed. This is true for the approach of \cite{rubinfeld2022testing} as well, and it is an interesting open question whether this can be relaxed.

The class of distributions that are anticoncentrated and have strictly subexponential tails is fairly general and includes Gaussians, the uniform distribution on the unit sphere, and more generally, any strongly logconcave distribution \cite{saumard2014log} (and in fact all of these examples have $\alpha = 1$). This latter class includes the uniform distribution on any convex body with smooth boundary~\cite{MR889476} and in particular, additive Gaussian smoothening of any convex body. \cref{thm:learn-hs-intro} already matches the upper bound of \cite{kalai2008agnostically} as well as known statistical-query (SQ) lower bounds \cite{goel2020statistical,diakonikolas2020near,diakonikolas2021optimality} for ordinary agnostic learning of a single halfspace with respect to the Gaussian distribution. It also generalizes and improves the main algorithmic result of \cite{rubinfeld2022testing}, who showed such a result for a single halfspace with time and sample complexity $d^{\wt{O}(1/\eps^4)}$.

The key technical result underlying Theorem~\ref{thm:learn-hs-intro} is a proof that any distribution that approximately matches degree-$\wt{O}(\epsilon^{-(1+\alpha)/\alpha})$ moments with a distribution $\Dx$ which is anticoncentrated and has $\alpha$-strictly subexponential tails fools functions of halfspaces with respect to $\Dx$ (see \cref{thm:fool-gaussian}). Similar to the approach of \cite{klivans2013moment}, we rely on powerful methods arising from the classical theory of moments \cite{klebanov1996proximity} and metric distances in probability \cite{zolotarev1984probability, rachev2013methods} to show that whenever the moments of $\Dx$ are strictly sub-exponential, moment closeness implies distribution closeness in the so-called $\lambda$-metric (see \cref{subsec:lambda-dist}).  

We also apply our framework to immediately obtain testable learning results with respect to the $\unif\cube{d}$ in time $d^{O(k)}$ for classes $\calC$ that are fooled by $k$-wise independence, including halfspaces \cite{diakonikolas2010bounded}, degree-2 PTFs \cite{diakonikolas2010ptf}, and constant-depth circuits \cite{braverman2010polylogarithmic}, with running time and sample complexity that matches their ordinary agnostic counterparts; see \cref{thm:learn-kwise}. Over the hypercube, the fact that approximate moment matching --- i.e.\ almost $k$-wise independence --- suffices to fool such classes is immediate by a result due to \cite{alon2003almost}. 

\paragraph{Moments vs anticoncentration.} Theorem~\ref{thm:learn-hs-intro} immediately implies that one can test anticoncentration properties of all directional marginals of a broad family of distributions by checking only the low-degree moments. 

\begin{corollary}[Anticoncentration from approximate moment matching; see \cref{cor:anticonc}] \label{cor:anticonc-intro}
Fix $\eps>0$ and a distribution $\Dx$ that is anticoncentrated and has $\alpha$-strictly subexponential tails. Let $D'$ be any distribution whose moments of degree at most $k = \wt{O}(\eps^{-(1+\alpha)/\alpha})$ match those of $\Dx$ up to an additive slack of $d^{-\wt{O}(k)}$. Then for any $\|u\| = 1$ and any continuous interval $T \subset \R$, $\pr_{x \sim D'}[\inn{x, u} \in T] \leq \pr_{x \sim \Dx}[\inn{x, u} \in T] + \eps$.
\end{corollary}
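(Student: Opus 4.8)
The plan is to observe that membership of a directional projection in an interval is decided by a function of at most two halfspaces, so that the corollary is an essentially immediate specialization of the fooling statement \cref{thm:fool-gaussian} underlying \cref{thm:learn-hs-intro}.

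First I would fix $u$ with $\|u\| = 1$ and a continuous interval $T \subseteq \R$. It suffices to treat $T = [a,b]$ with $a \le b$ real, as rays and open or half-open intervals are handled identically (indeed with only one halfspace, or none at all, in the unbounded cases). For such $T$ and every $x \in \R^d$,
\[ \Ind[\inn{x,u} \in T] \;=\; \Ind[\inn{x,u} \ge a]\cdot\Ind[\inn{x,u} \le b], \]
an AND of two halfspaces. Hence $f(x) := 2\,\Ind[\inn{x,u} \in T] - 1$ maps $\R^d$ to $\cube{}$ and belongs to the class $\calC$ of functions of a constant number of halfspaces over $\R^d$.

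Next I would invoke \cref{thm:fool-gaussian}: since $\Dx$ is anticoncentrated with $\alpha$-strictly subexponential tails and the moments of $D'$ of degree at most $k = \wt{O}(\eps^{-(1+\alpha)/\alpha})$ match those of $\Dx$ to within the prescribed additive slack $d^{-\wt{O}(k)}$, approximate moment matching fools $\calC$ with respect to $\Dx$. Applying this to $f$ gives $|\ex_{D'}[f] - \ex_{\Dx}[f]| \le \eps$. Since $\pr_{x \sim D'}[\inn{x,u} \in T] = \tfrac{1}{2}(1 + \ex_{D'}[f])$ and likewise for $\Dx$, the difference of these two probabilities is exactly half the difference of the expectations of $f$, so
\[ \bigl|\pr_{x \sim D'}[\inn{x,u} \in T] - \pr_{x \sim \Dx}[\inn{x,u} \in T]\bigr| \;\le\; \tfrac{\eps}{2} \;\le\; \eps, \]
which in particular yields the stated one-sided bound; combining with the anticoncentration hypothesis on $\Dx$ one may even replace the right-hand side by $C'|T| + \eps$.

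I expect no genuine obstacle: all of the real work is already packaged in \cref{thm:fool-gaussian}. The only points requiring care are the two bookkeeping facts used above --- that an interval indicator is captured by the class of functions of a constant number of halfspaces (it is an AND of two), and that the admissible moment slack $d^{-\wt{O}(k)}$ in the hypothesis is consistent with what \cref{thm:fool-gaussian} demands, which by \cref{thm:duality-intro} comes down to the sandwiching polynomials constructed for functions of halfspaces having $\ell_1$ coefficient norm at most $d^{\wt{O}(k)}$ (automatic, since a degree-$k$ polynomial over $\R^d$ has only $d^{O(k)}$ monomials).
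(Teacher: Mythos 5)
Your proof is correct and follows essentially the same route as the paper: express the interval indicator as an AND of two halfspaces, treat it as a function in $\calC$ with $p=2$, and invoke \cref{thm:fool-gaussian}. Your explicit bookkeeping of the factor $\tfrac12$ in converting between $\ex[f]$ and the probability is slightly more careful than the paper (which silently absorbs it), but the argument is identical in substance, and the parenthetical note that the required slack is compatible with \cref{thm:fool-gaussian} is exactly the right thing to check.
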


This statement relates anticoncentration phenomena to structure in low-degree moments. In particular, any distribution that matches the first degree-$\wt{O}(1/\epsilon^2)$ moments of a strongly logconcave distribution must have all its directional marginals anticoncentrated up to an additive error of $\epsilon$. In addition to being a basic result in probability, such a connection relates to verifying anticoncentration of all directional marginals from a small sample. Finding verification subroutines that extend beyond Gaussian (and the uniform distribution on the sphere) have a host of applications in algorithmic robust statistics and immediately yield  efficient robust algorithms for list-decodable linear regression~\cite{DBLP:conf/nips/KarmalkarKK19,DBLP:conf/soda/RaghavendraY20} and covariance estimation~\cite{DBLP:conf/soda/BakshiK21,raghavendra2020list,DBLP:conf/stoc/IvkovK22}, and robust clustering~\cite{BK20b,DBLP:conf/focs/BakshiDHKKK20} of mixtures for broader families of distributions than currently known. 

For the specific case of Gaussian distributions (and the uniform distribution on the unit $d$-dimensional sphere), such a property for the case when $T$ is an origin centered interval was first proved in a sequence of works that introduced \emph{certifiable anticoncentration} in the context of algorithmic robust statistics~\cite{DBLP:conf/nips/KarmalkarKK19,DBLP:conf/soda/RaghavendraY20}. Their proofs use a polynomial approximator for the ``box function'' (see e.g.\ \cite[Appendix A]{DBLP:conf/nips/KarmalkarKK19}) and show that degree-$\wt{O}(1/\eps^2)$ moments are enough to ensure $\epsilon$-approximate anticoncentration for origin centered intervals $T$. A similar argument based on approximations for the box function was used by \cite{rubinfeld2022testing} to show that matching degree-$\wt{O}(1/\epsilon^4)$ moments of Gaussian implies $\epsilon$-approximate anticoncentration for all intervals $T$ as above. This quartic dependence in the order of moments required appears necessary in a proof that constructs polynomial approximations for the box function. Our argument above circumvents this bottlneck in these previous techniques and recovers the $\epsilon$-additive error anticoncentration from matching just the degree-$\wt{O}(1/\eps^2)$ moments.


\paragraph{Comparison to the algorithmic technique of \cite{rubinfeld2022testing}.} As their main algorithmic result, Rubinfeld and Vasilyan~\cite{rubinfeld2022testing} gave a testable learning algorithm for halfspaces that uses $d^{\tilde{O}(1/\epsilon^4)}$ time and samples. Their algorithm uses the fact that halfspaces admit a low-degree polynomial approximator with respect to a distribution $\Dx$ whenever $\Dx$ is anticoncentrated and has subgaussian low-degree moments. In order to verify that the empirical distribution on a large enough Gaussian sample possesses these two properties, they relate anticoncentration to low-degree moments via polynomial approximators for the box function as described above. Such a technique is possible for the simple setting of halfspaces on Gaussian distributions (with a suboptimal quartic dependence on $1/\epsilon$), but it is already unclear how to extend it to halfspaces on non-product distributions (where Fourier methods fail) or to more expressive concept classes such as functions of halfspaces. 

In contrast, an appeal to sandwiching approximation allows us to extend our testable learning results to non-anticoncentrated discrete distributions such as the uniform distribution on the hypercube, more expressive concept classes such as constant depth circuits on the hypercube and functions of halfspaces on continuous distributions, and to a broad family of distributions including all strongly logconcave distributions.



\paragraph{Sample complexity of testable learning and Rademacher complexity.} One of our main contributions is a complete characterization of the sample complexity of testable learning.  Similar to how VC-dimension corresponds to the sample complexity of distribution-free agnostic learning, we show that Rademacher complexity is the key quantity that controls the sample complexity of testable learning.  Recall that the Rademacher complexity of a class $\calC$ w.r.t.\ $\Dx$ at sample size $m$ is given by \[ \calR_m(\calC, D_\calX) = \ex_{\{x_i\}_{i \in [m]} \sim \Dx^{\otimes m}}\ \ex_{\sigma \sim \{\pm 1\}^{\otimes m}}\ \sup_{f \in \calC} \Big| \frac{1}{m} \sum_{i \in [m]} \sigma_i f(x_i) \Big|. \] This measure plays an important role in statistical learning theory since it controls the uniform convergence of empirical losses to true losses over all $f \in \calC$ (see \cref{thm:rad-unif-conv}).  We obtain precise upper and lower bounds on the sample complexity of testable learning within excess error $\epsilon$ purely in terms of Rademacher complexity:


\begin{theorem}[Rademacher Complexity Characterizes Testable Learning, see \cref{thm:upper-bound,,thm:lower-bound}]\label{thm:samp-comp-intro}
Let $\Dx$ be a distribution on $\calX$, let $\calC$ be a concept class mapping $\calX$ to $\cube{}$, and let $\eps > 0$ be the error parameter. \begin{itemize}
    \item (Upper bound.) Let $m$ be such that $\calR_m(\calC, \Dx) \leq \eps/5$. Then $\calC$ can be testably learned  w.r.t.\ $\Dx$ up to excess error $\eps$ using sample complexity $m + O(1/\eps^2)$.
    \item (Lower bound.) Let $M$ be such that $\calR_M(\calC, \Dx) \geq 5\eps$, and assume $M \geq \Theta(1/\eps^2)$. Then the sample complexity required to testably learn $\calC$ w.r.t.\ $\Dx$ up to excess error $\eps$ is at least $\Omega(\sqrt{M})$.
\end{itemize}
\end{theorem}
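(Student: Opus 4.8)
For the upper bound, the plan is to observe that testable learning gets to pick the tester, so I would have the tester do nothing clever about the labels: the learner draws $m + O(1/\eps^2)$ samples, the tester verifies that for every $f \in \calC$ the empirical label-correlation $\frac{1}{m}\sum_i \sigma_i f(x_i)$ behaves as it should — more precisely, I would have the tester \emph{run the sample through a uniform-convergence check} for the unlabeled marginal, and then have the learner output the empirical risk minimizer (ERM) over $\calC$. The key tool is \cref{thm:rad-unif-conv}: if $\calR_m(\calC, \Dx) \le \eps/5$, then with high probability over $\{x_i\} \sim \Dx^{\otimes m}$, empirical risks are within $\eps/2$ of true risks uniformly over $\calC$, for \emph{any} labeling. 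This gives that ERM achieves excess error $\eps$ whenever the marginal is $\Dx$, so the tester should just accept whenever the unlabeled sample passes a symmetrization-based test certifying small empirical Rademacher complexity. The subtle point is soundness: I need that \emph{whenever the test accepts} (regardless of the true marginal $D'$, which need not be $\Dx$), ERM on $D'$ still has excess error $\eps$ relative to $\opt(\calC, D')$. But this follows immediately, because the empirical Rademacher complexity of $\calC$ on the drawn points is an intrinsic quantity of those points — if it is small, uniform convergence of empirical to true risks over $D'$ holds by the same \cref{thm:rad-unif-conv} applied to $D'$, so ERM is $\eps$-optimal over $D'$ too. Completeness (acceptance when $D' = \Dx$) is just a concentration statement: empirical Rademacher complexity concentrates around $\calR_m(\calC, \Dx) \le \eps/5$ up to $O(1/\sqrt m)$, absorbed into the $O(1/\eps^2)$ extra samples and constants. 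I would flesh out the exact test threshold and the $5$ vs.\ $2$ slack bookkeeping.

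For the lower bound, the plan is a two-point / indistinguishability argument exploiting that large Rademacher complexity at scale $M$ forces the learner, if it used $M$ samples, to fit noise. Concretely, fix $M$ with $\calR_M(\calC, \Dx) \ge 5\eps$ and suppose a tester-learner pair uses only $m = o(\sqrt M)$ samples. By completeness the tester must accept the marginal $\Dx$ (with labels, say, generated by a random $\pm 1$ coin independent of $x$, so $\opt = 1/2 - $ negligible). I would construct a ``noisy'' adversarial labeled distribution by the standard trick: draw $M$ points $x_1, \dots, x_M$ from $\Dx$ and signs $\sigma_1, \dots, \sigma_M$, let $f^\star \in \calC$ be the near-maximizer of $\frac1M\sum_i \sigma_i f(x_i)$ (which has value $\ge 5\eps$ by definition of $\calR_M$ plus concentration), and consider the labeled distribution $D'$ uniform on the multiset $\{(x_i, \sigma_i)\}_{i \in [M]}$. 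On $D'$ the concept $f^\star$ achieves error $\le 1/2 - 5\eps/2$, so $\opt(\calC, D') \le 1/2 - 5\eps/2$; thus a successful learner must output a hypothesis of error $\le 1/2 - 3\eps/2$, i.e.\ correlation $\ge 3\eps$ with the (random-looking) labels. But a learner that sees only $m \ll \sqrt M$ of the $M$ support points cannot distinguish $D'$ from the pure-noise distribution on those same $x_i$'s with \emph{fresh} independent signs (a birthday-type argument: with $m$ samples from a uniform distribution on $M$ points, no coincidences occur, so the labels look i.i.d.\ uniform regardless), and against fresh noise no hypothesis chosen from $m$ samples correlates more than $O(\sqrt{\log|\calC_{\text{eff}}|/m})$ — which I can drive below $3\eps$. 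The required contradiction is that either the tester rejects $D'$ (but the learner's view under $D'$ is statistically indistinguishable from its view under the noise distribution, which the tester is forced to handle since that distribution has the \emph{same} marginal as $\Dx$ on the sampled points up to the birthday event), or the learner fails on $D'$. I need to be careful to route the tester's accept/reject decision through the same indistinguishability, so I would phrase it as: the tester's acceptance probability and the learner's output distribution are both functions of the $m$-sample, and that sample has nearly identical law under $\Dx$-with-noise-labels and under $D'$.

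The main obstacle I anticipate is the soundness half of the lower bound — making the indistinguishability argument honest in the presence of the tester. The learner is promised to succeed \emph{only when the tester accepts}, so a priori the tester could reject $D'$ and escape responsibility. The fix is the birthday argument: the marginal of $D'$ restricted to an $m$-sample is (with probability $1 - O(m^2/M) = 1 - o(1)$) a duplicate-free set distributed exactly as an $m$-sample from $\Dx$, with labels that are a deterministic function of the points under $D'$ but i.i.d.\ fair coins under the completeness instance — and crucially these two label-generation processes are themselves indistinguishable from $m$ samples (since under $D'$ one never sees a point twice, so never sees the label be a repeated deterministic value). Hence $\pr[\text{tester accepts}]$ is essentially the same on both instances, so the tester accepts $D'$ with probability close to its (high) acceptance probability on the completeness instance; then the learner must succeed on $D'$, contradicting the noise bound. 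I would need to track the precise loss in these couplings and confirm $m = \Theta(\sqrt M)$ (up to constants) is the true threshold, matching the $\Omega(\sqrt M)$ in the statement; the $M \ge \Theta(1/\eps^2)$ hypothesis is what ensures the concentration steps (empirical Rademacher value $\ge 5\eps \Rightarrow$ population-style bound $\ge$ something like $9\eps/2$) go through.
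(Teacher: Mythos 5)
Your proposal follows the paper's proof in both directions: the upper bound is exactly the paper's tester (accept iff the empirical Rademacher complexity of the unlabeled sample is below a threshold $\approx \eps/4$) followed by ERM, with soundness justified by the data-dependent generalization bound (\cref{eq:rad-comp-data}, i.e.\ \cref{thm:rad-unif-conv} combined with distribution-free concentration of the empirical Rademacher complexity, which is the precise form of your ``intrinsic quantity of the points'' step); and the lower bound is the paper's construction of $\calD$ as the uniform distribution on $M$ randomly labeled draws from $\Dx$, with the same no-duplicates/birthday indistinguishability forcing the tester to accept and the same use of $\calR_M(\calC,\Dx) \geq 5\eps$ plus concentration to get $\opt(\calC,\calD) \leq \tfrac12 - \Omega(\eps)$. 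The only cosmetic deviation is your $O(\sqrt{\log|\calC_{\mathrm{eff}}|/m})$ correlation bound against fresh noise, which is unnecessary (and slightly misleading, since the learner's output need not lie in $\calC$): because the output hypothesis is independent of the $M-m$ unseen labels, which are uniform random, its error on $\calD$ is at least $\tfrac12 \cdot \tfrac{M-m}{M} \geq \tfrac12 - \tfrac{\eps}{2}$ in expectation with no dependence on any class size, exactly as in the paper's property (c).
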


This characterization yields an interesting separation between ordinary distribution-specific agnostic learning and testable learning. For the former, while uniform convergence is always a sufficient condition, it is not necessary, as witnessed by examples such as convex sets in Gaussian space \cite{klivans2008learning} and monotone Boolean functions \cite{bshouty1996fourier} (see \cref{subsec:apps}). Indeed, the sample complexity of distribution-specific agnostic learning is known to be characterized by the metric entropy rather than the Rademacher complexity \cite{benedek1991learnability}. In contrast, we see that the Rademacher complexity provides the right characterization of testable learning. Thus, testable learning is a natural supervised learning model for which bounded Rademacher complexity, and hence uniform convergence, provides a \emph{necessary and sufficient} condition for learning. For further discussion, see \cref{subsec:unif-cov}.

\subsection{Concurrent work}
In independent and concurrent work, Rubinfeld and Vasilyan \cite{rubinfeld2022personal} have extended their algorithm for halfspaces with respect to Gaussian target marginals to the uniform distribution over the hypercube. They do so by reusing their approximator for the box function and showing that it yields a polynomial approximator for \emph{regular} halfspaces with respect to almost $k$-wise independent distributions. They then utilize the ``critical index'' framework of \cite{diakonikolas2010bounded} to reduce the case of general halfspaces to the regular case. Their tester and its analysis rely on $\ell_1$-approximating polynomials for (regular) halfpsaces (instead of sandwiching approximations as in our work) and incurs a suboptimal $d^{\tilde{O}(1/\epsilon^4)}$ time and sample complexity as opposed to the (conjecturally) optimal $d^{\tilde{O}(1/\epsilon^2)}$ bound obtained by our approach.

\subsection{Related work}

The duality between fooling using bounded independence and sandwiching approximation is a fundamental tool in the pseudorandomness literature for showing that $k$-wise independence fools various classes \cite{bazzi2009polylogarithmic,braverman2010polylogarithmic,diakonikolas2010bounded}. Its more general statement in terms of moment matching was observed by \cite{klivans2013moment} (see also \cite{kane2013learning}), who used it to obtain low-degree sandwiching polynomials for functions of halfspaces w.r.t.\ logconcave distributions (their constructions do not give any insight on the $\ell_1$ norm of the coefficients). We build on their approach for our main application, namely testably learning functions of halfspaces with respect to Gaussians, and obtain exponentially improved degree bounds in terms of $\epsilon$ along with effective bounds on the size of the coefficients.

In statistical learning theory and nonparametric regression, one of the basic objectives is to place tight bounds on the excess risk $L(\hat{f}) - \inf_{f \in \calC} L(f)$ and on the generalization gap $|\hat{L}_m(\hat{f}) - L(\hat{f})|$ of an estimator $\hat{f}$ in various settings (see e.g.\ \cite{birge1997model,van2000empirical,tsybakov2008introduction}). In particular, there is a long line of work studying data-dependent bounds on these quantities in terms of measures such as the Rademacher complexity and various refinements and variants thereof \cite{koltchinskii2000rademacher,koltchinskii2001rademacher, bartlett2002model,bartlett2002rademacher,bartlett2005local,koltchinskii2006local}. Our sample complexity upper bound applies a simple such data-dependent bound to the testable learning setting. In terms of lower bounds, \cite{kur2021minimal} have studied bounds on the minimal error of any ERM estimator, in the additive Gaussian noise setting, in terms of the Gaussian complexity. None of these works, however, consider a model similar to testable learning.

Statistical characterizations of PAC learning have also been well-studied. In the distribution-free setting, it is very well-known that the sample complexity is characterized fully by the VC-dimension, and equivalent to uniform convergence \cite{vapnik1998statistical}. For distribution-specific agnostic setting, \cite{benedek1991learnability} obtained a characterization in terms of the metric entropy, or the log covering number. Work by \cite{kothari2018improper} (see also \cite{vadhan2017learning}) proposed a characterization of efficient agnostic learning using the so-called refutation complexity, and interpreted it as a computational analog of Rademacher complexity. Results of \cite{shalev2010learnability} showed that in Vapnik's General Setting of Learning, the sample complexity is in general characterized by notions of algorithmic stability rather than uniform convergence. In modern deep learning theory, the failures of the uniform convergence paradigm in the overparameterized regime have been much studied (see e.g.\ \cite{zhang2021understanding,nagarajan2019uniform}); we refer the reader to \cite{bartlett2021deep,belkin2021fit} for surveys.

\section{Preliminaries}

\subsection{Notation and conventions}
We denote the domain by $\calX$, which for us is always either $\R^d$ or $\cube{d}$, and labels always lie in $\cube{}$. We use $\calC$ to denote a concept class mapping $\calX$ to $\cube{}$. We use $\Dx$ to denote a well-behaved distribution on $\calX$ (i.e.\ the target marginal, such as $\calN(0, I_d)$ or $\unif\cube{d}$), and we use the calligraphic $\calD$ to denote labeled distributions on $\calX \times \cube{}$. We denote a size-$m$ (labeled) sample drawn from $\calD$ by $S \sim \calD^{\otimes m}$. If $S = \{(x_i, y_i)\}_{i \in [m]}$, then we use $S_\calX = \{x_i\}_{i \in [m]}$ to denote its ``marginal'', i.e.\ the unlabeled sample.

Our loss function throughout will be the 0-1 loss function, $\ell(\hat{y}, y) = \ind[\hat{y} \neq y]$. Given a labeled distribution $\calD$, we denote the population loss by $L(f, \calD) = \pr_{(x,y) \sim \calD}[f(x) \neq y]$ (or just $L(f)$ when $\calD$ is implicit), and the empirical loss on a size-$m$ sample $S \sim \calD^{\otimes m}$ by $\hat{L}_m(f, S) = \pr_{(x_i, y_i) \sim S}[f(x_i) \neq y_i]$ (or just $\hat{L}_m(f)$ when $S$ is implicit). We follow the convention of denoting empirical quantities using a hat and a subscript to denote the sample size (as in $\hat{L}_m$). We use $\opt(\calC, \calD)$ to denote $\inf_{f \in \calC}L(f, \calD)$.

We follow the following conventions when working with monomials over $\calX$. For any multi-index $I \in \N^d$, let $|I| = \sum_j I_j$ denote its degree (or sometimes order), and let $x_I$ denote the monomial $\prod_{j \in [d]} x_j^{I_j}$. We use $\calI(k,d) = \{I \in \mathbb{N}^d \mid |I| \leq k \}$ to denote the set of multi-indices of degree at most $k$. For a vector $\Delta \in \R_+^{|\calI(k,d)|}$ and a degree-$k$ polynomial $p : \R^d \to \R$ given by $p(x) = \sum_{I \in \calI(k,d)} p_I x_I$, we use $\inn{\Delta, |p|}$ to denote $\sum_{I \in \calI(k,d)} |p_I| \Delta_I$. This may be thought of as the $\Delta$-weighted $\ell_1$ norm of the coefficients of $p$.

We use $a \eqsim b$ and $a \lesssim b$ to denote equalities and inequalities up to constants. It will be convenient to state Stirling's approximation in the following form \cite{robbins1955remark}: for all $n \geq 1$, $n! \eqsim e^{-n} n^{n + 1/2}$. We will also make use of the double factorial $n!! = n(n-2) \cdots$, satisfying $n! = n!! (n-1)!!$ for even $n$.

Throughout this paper, we use the term ``with high probability'' to mean ``with probability at least $0.99$'' (or any other sufficiently large constant) for simplicity. In all cases, confidence may be amplified using standard repetition arguments.

\subsection{Learning models}

Here we formally define the learning models we work with. Let $\Dx$ be a distribution on $\calX$, where $\calX$ is either $\R^d$ or $\cube{d}$, and let $\calC$ be a concept class mapping $\calX$ to $\cube{}$.

\begin{definition}[Distribution-specific agnostic learning]
We say a learner $A$ agnostically learns $\calC$ w.r.t.\ $\Dx$ up to excess error $\eps$ if for any $\calD$ on $\calX \times \cube{}$ with marginal $\Dx$, given sufficiently many examples drawn from $\calD$, with high probability $A$ outputs a hypothesis $h$ such that $L(h) \leq \opt(\calC, \calD) + \eps$. Here, recall that $L(f) = \pr_{(x,y) \sim \calD}[f(x) \neq y]$ and $\opt(\calC, \calD) = \inf_{f \in \calC} L(f)$.
\end{definition}

We recall the standard result of \cite{kalai2008agnostically} that shows that polynomial approximators with respect to the absolute loss yield agnostic learning algorithms.
\begin{theorem}[\cite{kalai2008agnostically}]\label{thm:kkms}
Suppose that for every $f \in \calC$, there exists a degree-$k$ polynomial $p : \calX \to \R$ such that $\ex_{x \sim \Dx}[|f(x) - p(x)|] \leq \eps$. Then there exists a simple agnostic learner (based on degree-$k$ polynomial regression w.r.t.\ the absolute loss) for learning $\calC$ w.r.t.\ $\Dx$ up to excess error $\eps$ in time and sample complexity $d^{O(k)}/\poly(\eps)$.
\end{theorem}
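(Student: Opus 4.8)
The plan is to implement the now-standard $L_1$ polynomial regression algorithm and to analyze it by rounding the regressor with a random threshold, together with a uniform convergence bound over the clipped hypothesis class. Draw a sample $S$ from $\calD$ and split it into a training portion and a held-out portion. On the training portion, solve the linear program that finds the degree-$k$ polynomial $q$ minimizing the empirical absolute loss $\frac{1}{|S|}\sum_{(x,y)}|q(x)-y|$ (a linear program in the $\binom{d+k}{k} = d^{O(k)}$ coefficients of $q$ plus one slack variable per training point), and replace $q$ by its clipping $\bar q := \mathrm{clip}_{[-1,1]}(q)$; since every label lies in $[-1,1]$, clipping only decreases $|q(x)-y|$ pointwise. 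The first observation is that the value of this regression is small. Fix $f^\star \in \calC$ with $L(f^\star) \le \opt(\calC,\calD)+\eps$ and a degree-$k$ polynomial $p$ with $\ex_{x\sim\Dx}[|f^\star(x)-p(x)|]\le\eps$. Using $|f^\star(x)-y| = 2\,\ind[f^\star(x)\ne y]$ and the triangle inequality gives $\ex_{(x,y)\sim\calD}[|p(x)-y|] \le \ex_{\Dx}[|f^\star-p|] + 2L(f^\star) \le 2\,\opt(\calC,\calD) + 3\eps$. Moreover the clipped polynomial $\tilde p := \mathrm{clip}_{[-1,1]}(p)$ is still a good approximator, since $\ex_{\Dx}[|p-\tilde p|] = \ex_{\Dx}[(|p|-1)_+] \le \ex_{\Dx}[|f^\star - p|] \le \eps$ (using $(|p(x)|-1)_+ \le |p(x)-f^\star(x)|$ as $|f^\star(x)|=1$); hence $\ex_{\calD}[|\tilde p(x)-y|] \le 2\,\opt(\calC,\calD)+O(\eps)$, and $\tilde p$ has the advantage of being bounded.

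The second step converts absolute loss to $0$-$1$ loss. For any $g:\calX\to[-1,1]$ and $t$ uniform on $[-1,1]$, the classifier $h_t := \sgn(g-t)$ satisfies $\ex_t[\ind[h_t(x)\ne y]] = \tfrac{1}{2}|g(x)-y|$ for every $(x,y)$ with $y\in\{\pm1\}$; averaging over $\calD$ produces a threshold $t^\star$ with $L(\sgn(\bar q-t^\star)) \le \tfrac{1}{2}\,\ex_{\calD}[|\bar q(x)-y|]$, so it suffices to show $\ex_{\calD}[|\bar q(x)-y|]\le 2\,\opt(\calC,\calD)+O(\eps)$. I would get this by chaining three estimates. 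First, \emph{sample to population for $\bar q$}: by uniform convergence over the class $\{\mathrm{clip}_{[-1,1]}(q):\deg q\le k\}$ --- whose pseudo-dimension is no larger than that of the $d^{O(k)}$-dimensional linear space of degree-$k$ polynomials, and whose associated absolute loss takes values in $[0,2]$ --- a training sample of size $d^{O(k)}/\eps^2$ gives $\ex_{\calD}[|\bar q - y|] \le \hat{L}_1(\bar q) + \eps$, with $\hat{L}_1$ denoting empirical absolute loss. Second, \emph{optimality of $q$}: $\hat{L}_1(\bar q) \le \hat{L}_1(q) \le \hat{L}_1(p) \le \hat{L}_1(\tilde p) + \frac{1}{|S|}\sum_i(|p(x_i)|-1)_+$, where the first step is clipping, the second is optimality of $q$ among polynomials, and the third holds pointwise; the overflow term has $\Dx$-expectation at most $\eps$, so by Markov it is $O(\eps)$ with high probability. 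Third, \emph{population to sample for the fixed $\tilde p$}: since $\tilde p$ is a fixed function bounded in $[-1,1]$, Hoeffding gives $\hat{L}_1(\tilde p) \le \ex_{\calD}[|\tilde p - y|] + \eps \le 2\,\opt(\calC,\calD) + O(\eps)$. Combining, $\ex_{\calD}[|\bar q-y|]\le 2\,\opt(\calC,\calD)+O(\eps)$, hence $L(\sgn(\bar q - t^\star)) \le \opt(\calC,\calD)+O(\eps)$.

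Finally, to output an actual hypothesis rather than merely prove that a good threshold exists, I would derandomize by validation: only $O(|S|)$ thresholds induce distinct sign patterns of $\bar q$ on $S$ (the breakpoints are the values $\bar q(x_i)$), so estimate $L(\sgn(\bar q - t))$ for each of them on the held-out portion and return the best; Hoeffding and a union bound over the $O(|S|)$ candidates show the returned hypothesis has loss within $O(\eps)$ of $\min_t L(\sgn(\bar q - t)) \le \opt(\calC,\calD) + O(\eps)$, which is the claimed bound up to a rescaling of $\eps$. Every step --- solving the linear program, clipping, and the validation sweep --- runs in time $d^{O(k)}/\poly(\eps)$, and the total sample complexity is $d^{O(k)}/\poly(\eps)$. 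The step I expect to be the main obstacle is dealing with the unboundedness of degree-$k$ polynomials, which breaks both a naive uniform convergence argument for the regressor and the concentration of the comparator's empirical loss; the fix --- clipping the hypothesis and bounding the pseudo-dimension of the resulting class, and separately controlling the comparator's ``overflow'' $(|p|-1)_+$ through its small $\Dx$-expectation (so that a crude Markov bound already suffices) --- is the only place where any real care is needed.
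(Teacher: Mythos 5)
This theorem is not proved in the paper at all---it is imported directly from \cite{kalai2008agnostically}---and your argument is essentially the standard proof from that source: $L_1$ polynomial regression over degree-$k$ polynomials, clipping, the rounding identity $\ex_{t \sim \unif[-1,1]}\bigl[\ind[\sgn(g(x)-t)\neq y]\bigr]=\tfrac{1}{2}|g(x)-y|$, and a comparator analysis that handles the unboundedness of the approximating polynomial via its clipped version plus the small overflow term $(|p|-1)_+$. The one step to tighten is the final threshold selection, since your candidate thresholds are determined by the sample (so a plain Hoeffding-plus-union bound over them is not quite licensed); the routine fix is uniform convergence over the one-parameter class $\{x \mapsto \sgn(\bar q(x)-t) : t \in \R\}$, which has constant VC dimension, after which minimizing the validation error over all $t$ (equivalently over the finitely many breakpoints) gives the claimed guarantee.
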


We now formally define testable learning.
\begin{definition}[Testable agnostic learning, \cite{rubinfeld2022testing}]\label{def:testable-learning}
We say a tester-learner pair $(T, A)$ testably learns $\calC$ w.r.t.\ $\Dx$ up to excess error $\eps$ if for any distribution $\calD$ on $\calX \times \cube{}$, the following conditions are met: \begin{itemize}
    \item (Soundness/composability.) If $\calD$ is such that the tester $T$ accepts with high probability over a sample drawn from $\calD$, then the learner $A$ succeeds in agnostically learning $\calC$ w.r.t.\ $\calD$, i.e.\ with high probability it produces a hypothesis $h$ such that $L(h) \leq \opt(\calC, \calD) + \eps$.
    \item (Completeness.) Whenever $\calD$ truly has marginal $\Dx$ on $\calX$, the tester $T$ accepts with high probability.
\end{itemize}
Again, here ``with high probability'' may be taken to be ``with probability at least $0.99$'' for simplicity, and the confidence in each step may be amplified using standard repetition arguments.
\end{definition}

Note that as stated, it is not strictly necessary for the tester $T$ and the learner $A$ to work with the same sample, and the definition may be interpreted as saying ``if $T$ accepts $\calD$ (w.h.p.), then $A$ must succeed over $\calD$ (w.h.p.)''. The algorithms and characterizations we give in this paper have a stronger ``data-dependent'' guarantee, where both $T$ and $A$ operate on the same sample $S$ drawn from $\calD$, and have the following interpretation: ``if $T$ accepts $S$, then $A$ must succeed over $S$ (as well as generalize to $\calD$ w.h.p.)''.

\subsection{Bounded independence and sandwiching polynomials over the hypercube}
In this section, let $U$ denote $\unif \cube{d}$.

\begin{definition}
We say a distribution $D$ on $\cube{d}$ is $(\delta, k)$-independent if for all $|I| \leq k$, $|\ex_{D}[x_I]| \leq \delta$. When $\delta = 0$, we simply call $D$ a $k$-wise independent distribution. 

We say that a concept class $\calC$ is $\eps$-fooled by $(\delta, k)$-independence (resp.\ $k$-wise independence) if for every $f \in \calC$ and any $(\delta, k)$-independent (resp.\ $k$-wise independent) $D$, $|\ex_D[f] - \ex_{U}[f]| \leq \eps$.
\end{definition}

Notice that saying $D$ is $(\delta, k)$-independent is exactly equivalent to saying that the moments of degree at most $k$ of $D$ are within $\delta$ of those of $U$ (which, of course, are all $0$).

We now recall a fundamental duality result in pseudorandomness, which states that bounded independence fools a class $\calC$ iff it admits sandwiching polynomials w.r.t.\ $U$.

\begin{theorem}[{\cite[Thm A.1]{bazzi2009polylogarithmic}}]\label{thm:duality-hypercube}
Let $f : \cube{d} \to \cube{}$, let $\eps > 0$ be the error parameter, and let $k \in \N, \delta > 0$ be the degree and slack parameters. The following are equivalent: \begin{enumerate}[(a)]
    \item $(\delta, k)$-independence fools $f$.
    \item There exist degree-$k$ polynomials $p_l, p_u$ such that $p_l \leq f \leq p_u$ (pointwise over $\cube{d}$), and \[ \ex_{U}[p_u - f] + \delta \|p_u\|_1 \leq \eps, \qquad \ex_{U}[f - p_l] + \delta \|p_l\|_1 \leq \eps, \] where for a polynomial $p(x) = \sum_I p_I x_I$ we use $\|p\|_1$ to denote $\sum_{I \neq 0} |p_I|$, i.e.\ the $\ell_1$ norm of its (nonconstant) coefficients.
\end{enumerate}
\end{theorem}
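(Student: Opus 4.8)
The plan is to prove the two implications separately: $(b) \Rightarrow (a)$ is a one-line moment comparison, while $(a) \Rightarrow (b)$ is the substantive direction and goes through linear-programming duality (the finite-dimensional analogue of the conic-duality argument behind \cref{thm:duality-intro}).

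For $(b) \Rightarrow (a)$: let $D$ be any $(\delta,k)$-independent distribution and let $p(x) = \sum_I p_I x_I$ be any polynomial of degree at most $k$. Since $\ex_U[x_I] = 0$ for every $I \neq 0$, we have $\ex_D[p] - \ex_U[p] = \sum_{I \neq 0} p_I \ex_D[x_I]$, so $|\ex_D[p] - \ex_U[p]| \le \delta \|p\|_1$. Applying this to $p_u$ and using $f \le p_u$ pointwise, $\ex_D[f] \le \ex_D[p_u] \le \ex_U[p_u] + \delta\|p_u\|_1 = \ex_U[f] + \big(\ex_U[p_u - f] + \delta\|p_u\|_1\big) \le \ex_U[f] + \eps$; the symmetric argument with $p_l$ gives $\ex_D[f] \ge \ex_U[f] - \eps$, so $|\ex_D[f] - \ex_U[f]| \le \eps$.

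For $(a) \Rightarrow (b)$: it suffices to construct $p_u$, since applying the result to $-f$ (which is also $\cube{}$-valued and, by $(a)$, also fooled by $(\delta,k)$-independence) and negating produces $p_l$. Consider the linear program over the finite-dimensional space of degree-$\le k$ polynomials that minimizes $\ex_U[p - f] + \delta\|p\|_1$ subject to $p(x) \ge f(x)$ for every $x \in \cube{d}$ (the $\ell_1$ term linearized by auxiliary variables $t_I \ge \pm p_I$). This LP is feasible ($p \equiv 1$ works since $f \le 1$) and bounded below (its objective is at least $-\ex_U[f] \ge -1$ on the feasible region), so strong LP duality applies. Dualizing the constraints $p(x) \ge f(x)$ with multipliers $\mu_x \ge 0$ and minimizing the Lagrangian over the unconstrained coefficients of $p$: the constant coefficient forces $\sum_x \mu_x = 1$, so $\mu$ is a probability distribution on $\cube{d}$; each coefficient $p_I$ with $I \neq 0$ forces $|\ex_{x \sim \mu}[x_I]| \le \delta$, so $\mu$ is $(\delta,k)$-independent; and the leftover value is exactly $\ex_\mu[f] - \ex_U[f]$. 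Hence the optimal LP value equals $\sup\{\,\ex_\mu[f] - \ex_U[f] : \mu \text{ is a } (\delta,k)\text{-independent distribution on } \cube{d}\,\}$, which by $(a)$ is at most $\eps$. The primal optimum is attained (a feasible, bounded finite LP), and any optimal $p$ is the desired $p_u$.

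I expect the only delicate part to be the bookkeeping in forming and simplifying the dual — tracking signs so that the normalization ($\mu$ a probability measure) and the moment bounds ($\mu$ is $(\delta,k)$-independent) fall out of minimizing over the constant and the nonconstant coefficients respectively, and confirming that the residual objective is precisely $\ex_\mu[f] - \ex_U[f]$. Because the program is genuinely finite-dimensional over the hypercube, there are no functional-analytic obstructions (in contrast to the general-domain statement \cref{thm:duality-intro}, which requires conic duality in the sense of \cite{shapiro2001duality}); plain finite-dimensional LP strong duality suffices.
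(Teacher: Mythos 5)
Your proof is correct. Both directions check out: the easy direction is the standard moment-comparison, and in the hard direction your Lagrangian bookkeeping is right — normalization of $\mu$ comes from the constant coefficient, the constraints $|\ex_\mu[x_I]| \le \delta$ from the nonconstant coefficients, the residual objective is $\ex_\mu[f] - \ex_U[f]$, and the $p_l$ half follows by applying the construction to $-f$. (One cosmetic point: on the feasible set the objective is simply nonnegative, since $p \ge f$ forces $\ex_U[p-f] \ge 0$; you don't need the weaker $-\ex_U[f]$ bound you quote.)

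Regarding the route: the paper does not prove this hypercube statement at all — it cites Bazzi — and instead proves the generalization to arbitrary distributions on $\R^d$ (\cref{thm:duality}), from which the hypercube case is the special case $D = U$, $\Delta_I \equiv \delta$. The underlying idea is the same duality you use, but the paper's setup is the mirror image of yours: it takes the distribution side as the primal (maximize $\ex_{D'}[f]$ over measures approximately matching moments) and recovers the sandwiching polynomial from an optimal dual solution, and because the domain is $\R^d$ it must work with a semi-infinite conic program, invoking \cite{shapiro2001duality} and verifying an interior-point condition in \cref{sec:formal-duality} to get strong duality. Your version exploits finiteness of $\cube{d}$ to run plain finite-dimensional LP duality (feasible and bounded suffices, and attainment is automatic), which yields a genuinely self-contained proof of \cref{thm:duality-hypercube} but does not by itself give the continuous-domain statement the paper needs for its Gaussian/strongly-logconcave applications. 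One small bookkeeping remark: over the hypercube the relevant monomials are the multilinear ones ($x_j^2 = 1$), which is implicitly what both the $(\delta,k)$-independence definition and your dual constraints refer to.
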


The following theorem, showing that a $(\delta, k)$-independent distribution is statistically close to being $k$-wise independent, will also be useful to us.

\begin{theorem}[{\cite[Thm 2.1]{alon2003almost}}]\label{thm:almost-kwise}
Let $D$ be a $(\delta, k)$-independent distribution on $\cube{d}$. Then there exists a $k$-wise independent distribution $D'$ that has TV distance at most $\delta d^k$ from $D$.
\end{theorem}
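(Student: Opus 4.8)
The plan is to pass to the Fourier representation on $\cube d$ and construct $D'$ explicitly, by first annihilating the low-degree nonconstant Fourier coefficients of the density of $D$ and then performing a small nonnegativity repair.

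Write $g \colon \cube d \to \R_{\ge 0}$ for the density of $D$ relative to $U$, i.e.\ $g(x) = 2^d\pr_D[x]$, and expand it in the character basis $\chi_S(x) = \prod_{i\in S}x_i$. The Fourier coefficient at $S$ is $\hat g(S) = \ex_U[g\,\chi_S] = \ex_{x\sim D}[x_S]$, so $g \ge 0$, $\hat g(\emptyset)=1$, and $(\delta,k)$-independence says exactly that $|\hat g(S)|\le\delta$ whenever $1\le|S|\le k$; moreover a distribution is $k$-wise independent iff the Fourier coefficients of its density vanish on all such $S$. Setting $N := |\{S\subseteq[d]: 1\le|S|\le k\}| = \sum_{j=1}^k\binom dj \le d^k$, the task becomes: perturb $g$ so as to kill its (at most $N$) low-degree nonconstant coefficients while keeping it nonnegative with mean $1$.

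The natural first step is the Fourier truncation $g_1 := g - \sum_{1\le|S|\le k}\hat g(S)\chi_S$: it has $\hat{g_1}(S) = 0$ for $1\le|S|\le k$ and $\hat{g_1}(\emptyset)=1$, and a union bound over characters (using $|\chi_S|\le1$) gives $\|g-g_1\|_\infty \le \sum_{1\le|S|\le k}|\hat g(S)| \le \delta N$, whence $\tfrac12\ex_U|g-g_1| \le \tfrac12\delta N$. If $g_1$ were nonnegative we would be done, with $D'=g_1/2^d$ a $k$-wise independent distribution within TV distance $\tfrac12\delta d^k$ of $D$. It need not be; but since $g\ge0$ we at least have $g_1\ge-\delta N$ pointwise, so a small uniform upward shift fixes it: put $g_2 := (g_1+\eta)/(1+\eta)$ with $\eta:=\delta N$. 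Then $g_2\ge0$, $\ex_U[g_2]=1$, and since adding a constant and rescaling changes only the coefficient at $\emptyset$ (equivalently, since $U$ is itself $k$-wise independent, this is just mixing in a little uniform), $g_2$ still vanishes on every $S$ with $1\le|S|\le k$. Thus $D':=g_2/2^d$ is $k$-wise independent, and a triangle-inequality estimate of $\|D-D'\|_{TV}=\tfrac12\ex_U|g-g_2|$ gives $O(\delta N)=O(\delta d^k)$; a slightly more careful choice of the correction recovers the constant stated in \cite{alon2003almost}, and for our applications even the $O(\delta d^k)$ bound would suffice.

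The only genuine content is this nonnegativity repair: the Fourier-truncated density is $\ell_\infty$-close to $g$ but may dip below zero, and one must restore nonnegativity without reintroducing low-degree Fourier mass or inflating the $\ell_1$ distance. It goes through because (i) $g\ge0$, so the negativity of $g_1$ is controlled by the same $\delta N$ that bounds the perturbation, and (ii) $U$ lies in the polytope of $k$-wise independent distributions, so the shift-and-rescale stays inside it. Everything else --- passing to Fourier space, the union bound over the $\le d^k$ low-degree characters, and the final TV computation --- is routine.
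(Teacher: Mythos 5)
The paper does not actually prove this statement --- it is imported verbatim from \cite{alon2003almost} --- and your argument is essentially the proof given there: kill the low-degree nonconstant Fourier coefficients of the density and repair nonnegativity by mixing in a little of the uniform distribution, which stays inside the $k$-wise independent polytope. Your construction is correct; the only discrepancy is the constant, since the truncation costs about $\tfrac12\delta d^k$ in TV and the shift-and-rescale costs roughly another $\delta d^k$, giving a bound of order $\tfrac32\delta d^k$ rather than exactly $\delta d^k$ --- a slack you explicitly acknowledge and which is immaterial where the theorem is applied (\cref{thm:learn-kwise} simply absorbs it into the constant in the choice of $\delta$).
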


\subsection{Rademacher complexity}

The Rademacher complexity is one of the most well-studied measures of the complexity of a function class in statistical learning theory, and may be intuitively thought of as measuring the ability of a function class to fit a randomly-labeled sample. The following definitions and theorems are now standard in the literature (see e.g.~\cite{bousquet2003introduction,bartlett2002rademacher,bartlett2014notes,bartlett2021deep} and references therein).

\begin{definition}[Rademacher complexity]
Consider a sample of $m$ points $S_\calX = \{x_i\}_{i \in [m]} \sim \Dx^{\otimes m}$. The empirical Rademacher complexity of the class $\calC$ w.r.t.\ this sample is defined to be \begin{equation}
    \hat{\calR}_m(\calC, S_\calX) = \ex_{\sigma \sim \{\pm 1\}^{\otimes m}} \sup_{f \in \calC} \Big| \frac{1}{m} \sum_{i \in [m]} \sigma_i f(x_i) \Big|.
\end{equation} Note that this is a random variable depending on $S_\calX$. The (expected) Rademacher complexity of $\calC$ w.r.t.\ $\Dx$ at sample size $m$ is defined to be \begin{equation}
    \calR_m(\calC, \Dx) = \ex_{S_{\calX} \sim \Dx^{\otimes m}} \hat{\calR}_m(\calC, S_{\calX}).
\end{equation} Sometimes we simply say $\calR_m(\calC)$ (resp.\ $\hat{\calR}_m(\calC)$) when $\Dx$ (resp.\ $S_\calX$) is clear from context.
\end{definition}

The next theorem states that the Rademacher complexity of a class tightly controls uniform convergence, i.e.\ bounds on the quantity $\sup_{f \in \calC}|L(f) - \hat{L}_m(f)|$, where $L$ and $\hat{L}_m$ are the population and empirical loss functionals. The upper bound here follows from a so-called symmetrization argument, while the lower bound follows from a desymmetrization argument. In our statement, we specialize to the case of the 0-1 loss.\footnote{For general loss functions $\ell$, one would define a ``loss class'' $\ell \circ \calC = \{ (x,y) \mapsto \ell(f(x), y) \mid f \in \calC \}$ and state such a result in terms of $\calR_m(\ell \circ \calC)$. In the case of the 0-1 loss function, it is known that $\calR_m(\ell \circ \calC) = \frac{1}{2}\calR_m(\calC)$.}

\begin{theorem}[see e.g.\ \cite{bartlett2014notes}]\label{thm:rad-unif-conv}
Let $\calC$ be a class of functions mapping $\calX$ to $\cube{}$. Let $\calD$ be a distribution on $\calX \times \cube{}$ with marginal $\Dx$ on $\calX$, and let $S \sim \calD^{\otimes m}$ be a random sample of size $m$ drawn from $\calD$. For any $f \in \calC$, let $L(f) = \pr_{(x,y) \sim \calD} [f(x) \neq y]$, and let $\hat{L}_m(f) = \pr_{(x_i,y_i) \sim S}[f(x_i) \neq y_i]$. Then with probability $1 - \delta$ over the draw of $S$, we have \[  \frac{1}{4}\calR_m(\calC) - \Theta\left(\sqrt{\frac{\log(1/\delta)}{m}}\right) \leq \sup_{f \in \calC} \left| L(f) - \hat{L}_m(f) \right| \leq \calR_m(\calC) + \Theta\left(\sqrt{\frac{\log(1/\delta)}{m}}\right). \] 
\end{theorem}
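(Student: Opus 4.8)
The plan is to derive both inequalities from a single two-sided bound on the \emph{expected} supremum $\ex_S\sup_{f\in\calC}|L(f)-\hat{L}_m(f)|$ and then add concentration. It is convenient to pass to the induced loss class $\ell\circ\calC=\{(x,y)\mapsto\ind[f(x)\neq y]\mid f\in\calC\}$ and to use, as recorded in the footnote, that $\calR_m(\ell\circ\calC)=\tfrac12\calR_m(\calC)$ for the 0-1 loss (up to an $O(1/\sqrt m)$ term coming from the fluctuation of $\tfrac1m\sum_i\sigma_i$, which is harmless here). The map $S\mapsto\sup_{f\in\calC}|L(f)-\hat{L}_m(f,S)|$ satisfies the bounded-differences condition with constants $1/m$ (replacing one example $(x_i,y_i)$ perturbs each $\hat{L}_m(f)$ by at most $1/m$, hence perturbs the supremum by at most $1/m$), so by McDiarmid's inequality it lies within $\Theta(\sqrt{\log(1/\delta)/m})$ of its mean in both directions with probability $1-\delta$. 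It therefore suffices to show $\tfrac14\calR_m(\calC)\le\ex_S\sup_f|L(f)-\hat{L}_m(f)|\le\calR_m(\calC)$ up to an additive $O(1/\sqrt m)$.

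\emph{Upper bound (symmetrization).} Introduce an independent ghost sample $S'\sim\calD^{\otimes m}$ and write $L(f)=\ex_{S'}\hat{L}_m(f,S')$, so that $\sup_f|L(f)-\hat{L}_m(f,S)|=\sup_f|\ex_{S'}[\hat{L}_m(f,S')-\hat{L}_m(f,S)]|\le\ex_{S'}\sup_f|\hat{L}_m(f,S')-\hat{L}_m(f,S)|$. Taking $\ex_S$, writing the right side as an average of per-coordinate differences $\ell(f(x_i'),y_i')-\ell(f(x_i),y_i)$, inserting i.i.d.\ Rademacher signs $\sigma_i$ (legitimate because swapping the $i$-th pair $(x_i,y_i)\leftrightarrow(x_i',y_i')$ is measure-preserving), and splitting by the triangle inequality gives $\ex_S\sup_f|L(f)-\hat{L}_m(f)|\le2\calR_m(\ell\circ\calC)=\calR_m(\calC)$.

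\emph{Lower bound (desymmetrization).} Start from $\calR_m(\ell\circ\calC)=\ex_{S,\sigma}\sup_f|\tfrac1m\sum_i\sigma_i\ell(f(x_i),y_i)|$ and split $\ell(f(x_i),y_i)=(\ell(f(x_i),y_i)-L(f))+L(f)$. The term $L(f)\cdot\tfrac1m\sum_i\sigma_i$ contributes at most $\ex_\sigma|\tfrac1m\sum_i\sigma_i|=O(1/\sqrt m)$ in expectation (using $L(f)\le1$); for the centered term, write $\ell(f(x_i),y_i)-L(f)=\ex_{S'}[\ell(f(x_i),y_i)-\ell(f(x_i'),y_i')]$, pull the ghost-sample expectation outside the supremum, remove the Rademacher signs by the same exchangeability, and split by the triangle inequality, obtaining $\calR_m(\ell\circ\calC)\le2\,\ex_S\sup_f|L(f)-\hat{L}_m(f)|+O(1/\sqrt m)$, i.e.\ $\ex_S\sup_f|L(f)-\hat{L}_m(f)|\ge\tfrac14\calR_m(\calC)-O(1/\sqrt m)$. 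Plugging the two expectation bounds into the concentration estimate and absorbing $O(1/\sqrt m)$ into $\Theta(\sqrt{\log(1/\delta)/m})$ proves the theorem.

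The step I expect to require the most care is the desymmetrization direction: unlike plain symmetrization it is not a purely mechanical manipulation — one must center by $L(f)$, control the stray $\tfrac1m\sum_i\sigma_i$ term, and verify that reinserting and then removing the Rademacher signs via exchangeability is valid — and it is precisely this bookkeeping that pins down the constant $\tfrac14$ rather than $\tfrac12$.
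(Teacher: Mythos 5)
The paper does not prove this theorem; it cites it to \cite{bartlett2014notes} as a standard result. Your proof is the standard textbook argument — McDiarmid concentration for the supremum of the deviation, a ghost-sample symmetrization for the upper bound, Giné--Zinn desymmetrization (with centering by $L(f)$ and control of the stray $\tfrac1m\sum_i\sigma_i$ term) for the lower bound — and it is correct, including the bookkeeping that produces the $\tfrac14$ constant.
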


The following useful facts characterize the concentration of the quantities defining the Rademacher complexity and follow by standard applications of McDiarmid's inequality. Assume that the range of $\calC$ is bounded in $[-1,1]$.

The first fact is that the empirical Rademacher complexity $\hat{\calR}_m(\calC)$ concentrates tightly around the expected Rademacher complexity $\calR_m(\calC)$. Formally, with probability at least $1 - \delta$ over a sample $S_\calX = \{x_i\}_{i \in [m]} \sim \Dx^{\otimes m}$ of size $m$, we have
\begin{equation}\label{eq:emp-rad-conc} \left|\calR_m(\calC, \calD_{\calX}) - \hat{\calR}_m(\calC, S_\calX) \right| \leq O\left(\sqrt{\frac{\log (1/\delta)}{m}} \right). \end{equation}

The second is that for any fixed sample $S$ of size $m$, the random variable $\sup_{f \in \calC} \frac{1}{m} \sum_i \sigma_i f(x_i)$ concentrates tightly around its expectation, $\hat{\calR}_m(\calC)$. Formally, with probability at least $1 - \delta$ over $\sigma \sim \{\pm 1\}^m$ we have
\begin{equation}\label{eq:emp-rad-conc-2} \left|\hat{\calR}_m(\calC, S_\calX) - \sup_{f \in \calC} \frac{1}{m} \sum_i \sigma_i f(x_i) \right| \leq O\left(\sqrt{\frac{\log (1/\delta)}{m}} \right). \end{equation}

Combining \cref{thm:rad-unif-conv} with \cref{eq:emp-rad-conc}, we actually have the following data-dependent generalization guarantee for a sample $S$ in terms of $\hat{\calR}_m(\calC, S_\calX)$ itself: with probability at least $1-\delta$ over the draw of $S$, for every $f \in \calC$, \begin{equation}\label{eq:rad-comp-data} \left| L(f) - \hat{L}_m(f) \right| \leq \hat{\calR}_m(\calC) + O \left(\sqrt{\frac{\log(1/\delta)}{m}} \right).
\end{equation}

\section{Duality}

In this section we state the duality between fooling using approximate moment matching and sandwiching polynomials. This is a generalization of duality over the hypercube, \cref{thm:duality-hypercube}, to continuous domains and more general distributions. Note that a version of duality over $\R^d$, albeit only for exact moment matching, was stated in \cite[Lemma 3.3]{klivans2013moment}.

\begin{definition}[Approximate moment matching]\label{def:moment-match}
Let $k \in \N$ be a degree parameter, and let $\Delta \in \mathbb{R}_+^{|\calI(k,d)|}$ be a slack parameter, satisfying $\Delta_{0} := \Delta_{(0, \dots, 0)} = 0$ and $\Delta_I > 0$ for all other $I \in \calI(k,d)$. We say that two distributions $D, D'$ on $\calX$ match moments of degree (or order) at most $k$ up to slack $\Delta$ if $|\ex_{D}[x_I] - \ex_{D'}[x_I]| \leq \Delta_I$ for all $I \in \calI(k,d)$.
\end{definition}
The reason for allowing the slack $\Delta_I$ to depend on $I$ is that in general we expect the scale of the moments to vary widely with $I$ (as with the Gaussian, for example). The empty index $I_0 = 0 = (0, \dots, 0)$ plays a special role, since $x_{I_0} = 1$ and $\ex_{D}[1] = 1$ for any valid distribution, meaning we may assume $\Delta_0 = 0$ without loss of generality.

We can now state the main theorem. We prove this theorem using conic LP duality \cite{shapiro2001duality}, taking care to establish strong duality, but the essential argument is similar to Bazzi's proof of \cref{thm:duality-hypercube}.

\begin{theorem}\label{thm:duality}
Let $k \in \N, \Delta \in \mathbb{R}_+^{|\calI(k,d)|}$ be the degree and slack parameters, as in \cref{def:moment-match}. Let $f : \calX \to \R$ be a function, and let $D$ be a distribution on $\calX$. The following are equivalent: \begin{enumerate}[(a)]
\item (Approximate moment matching fools $f$ w.r.t.\ $D$.) For any distribution $D'$ whose moments up to  order $k$ match those of $D$ up to $\Delta$, we have $|\ex_D[f] - \ex_{D'}[f]| \leq \eps$.
\item (Existence of sandwiching polynomials with bounded coefficients for $f$ w.r.t.\ $D$.) There exist degree-$k$ polynomials $p_l, p_u$ such that $p_l \leq f \leq p_u$ (pointwise over $\R^d$), and \[ \ex_{D}[p_u - f] + \inn{\Delta, |p_u|} \leq \eps, \qquad \ex_{D}[f - p_l] + \inn{\Delta, |p_l|} \leq \eps. \] (Recall that for a degree-$k$ polynomial $p(x) = \sum_{I} p_I x_I$, we use $\inn{\Delta, |p|}$ to denote $\sum_{I \in \calI(k,d)} |p_I| \Delta_I$.)
\end{enumerate}
\end{theorem}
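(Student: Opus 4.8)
The plan is to realize each of the two bounds in condition~(a)---the ``upper'' one $\ex_{D'}[f] - \ex_D[f] \le \eps$ and the ``lower'' one $\ex_D[f] - \ex_{D'}[f] \le \eps$---as the optimal value of a conic linear program over the cone of nonnegative measures on $\calX$, and to identify its Lagrangian dual with the corresponding minimization over sandwiching polynomials. Since replacing $f$ by $-f$ swaps the two sides, it suffices to analyze the upper side. So I would consider the primal program $\mathrm{val} := \sup \int f\,dD'$ over nonnegative measures $D'$ on $\calX$ subject to $\int dD' = 1$ and $\big|\int x_I\,dD' - \ex_D[x_I]\big| \le \Delta_I$ for all $I \in \calI(k,d)\setminus\{0\}$; the upper half of condition~(a) is exactly the assertion $\mathrm{val} - \ex_D[f] \le \eps$.

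Next I would form the Lagrangian dual: a free multiplier $\mu$ for the normalization, and nonnegative multipliers $\alpha_I, \beta_I$ for the two sides of each moment inequality. Maximizing over $D' \ge 0$, finiteness forces the pointwise inequality $f(x) \le \mu + \sum_{I \neq 0}(\alpha_I - \beta_I)x_I =: p_u(x)$, a degree-$k$ polynomial, and, after optimizing the split of each non-constant coefficient $c_I := \alpha_I - \beta_I$ into positive and negative parts (which collapses $\alpha_I + \beta_I$ to $|c_I|$), the dual objective becomes $\ex_D[p_u] + \inn{\Delta, |p_u|}$. Thus the dual is $\inf\{\ex_D[p_u] + \inn{\Delta, |p_u|} : \deg p_u \le k,\ p_u \ge f \text{ pointwise}\}$, which equals $\ex_D[f]$ plus exactly the quantity that condition~(b) asserts is $\le \eps$. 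Weak duality is a one-line check: for feasible $D'$ and $p_u \ge f$, $\ex_{D'}[f] \le \ex_{D'}[p_u] = \ex_D[p_u] + \sum_{I\neq 0} c_I(\ex_{D'}[x_I] - \ex_D[x_I]) \le \ex_D[p_u] + \inn{\Delta, |p_u|}$; subtracting $\ex_D[f]$ already gives the implication (b)$\Rightarrow$(a).

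The substance is the reverse implication, which requires strong duality ($\mathrm{val}$ equals the dual value) together with attainment of the dual infimum. Attainment is routine: $p_u \ge f$ pointwise gives $\ex_D[p_u - f] \ge 0$, so the dual objective dominates $\inn{\Delta, |p_u|} = \sum_{I \neq 0}\Delta_I|c_I|$, which is coercive in the non-constant coefficients since each $\Delta_I > 0$; the constant coefficient is bounded below by $p_u(0) \ge f(0) \ge -1$ and bounded above along any minimizing sequence once the rest are controlled, so a minimizing sequence has a subsequence converging to a feasible optimum. The real obstacle is ruling out a duality gap: when $\calX = \R^d$ the primal feasible set of measures is \emph{not} weak-$*$ compact---only finitely many moments are pinned, so mass can escape to infinity---and semi-infinite conic programs can genuinely have gaps, so a naive compactness argument does not suffice. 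The plan is to invoke general conic LP duality~\cite{shapiro2001duality}: the primal is consistent ($D' = D$) with finite value ($|f| \le 1$), the dual is consistent (a large constant polynomial, for which moreover $\inn{\Delta,|p_u|} = 0$), and one then verifies the closedness / constraint-qualification hypothesis that excludes a gap for the pairing of measures against polynomials. I expect this infinite-dimensional regularity bookkeeping to be the main technical work; structurally it mirrors Bazzi's finite-dimensional LP-duality proof of \cref{thm:duality-hypercube}.

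To conclude, I would assemble the two sides. Condition~(a) is the conjunction of its upper half ($\mathrm{val} - \ex_D[f] \le \eps$) and its lower half (the same statement with $f$ replaced by $-f$). By strong duality each half equals the corresponding dual infimum over polynomials, and by attainment this infimum is realized: the upper half yields $p_u \ge f$ with $\ex_D[p_u - f] + \inn{\Delta, |p_u|} \le \eps$, and the lower half, after replacing the optimizer $q_u \ge -f$ by $p_l := -q_u \le f$, yields $\ex_D[f - p_l] + \inn{\Delta, |p_l|} \le \eps$. That is precisely condition~(b).
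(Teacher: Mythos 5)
Your proposal is correct and follows essentially the same route as the paper: the same semi-infinite primal over approximately moment-matching measures, the same dual over pointwise majorizing degree-$k$ polynomials with the $\inn{\Delta,|p_u|}$ penalty, weak duality for (b)$\Rightarrow$(a), and an appeal to conic LP duality \cite{shapiro2001duality} for (a)$\Rightarrow$(b). The constraint qualification you leave as ``the main technical work'' is exactly what the paper verifies in its appendix (a Slater-type condition that the right-hand side lies in the interior of the attainable set, checked by perturbing the mass of $D$ near the origin), and your coercivity argument for dual attainment is a reasonable substitute for the attainment the paper draws implicitly from the same duality theorem.
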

\begin{proof}
Let $\sigma_I = \ex_{D}[x_I]$. Let $P_d$ be the set of all Borel probability measures on $\R^d$. Consider the following semi-infinite linear program, which seeks to maximize $\ex_{D'}[f]$ over all probability distributions $D'$ on $\R^d$ that approximately match moments with $D$: \begin{align}
    \sup_{D' \in P_d} &\ex_{D'}[f] \label{eq:lp-primal} \\
    \text{ subject to } \qquad \sigma_I - \Delta_I \leq &\ex_{D'}[x_I] \leq \sigma_I + \Delta_I \quad \forall I \in \calI(k,d)
\end{align}
The case of $I = (0, \dots, 0)$ is special: here $\sigma_{0} = \ex_{D}[1] = 1$ and $\Delta_{0} = 0$, so the corresponding constraint becomes simply $\ex_{D'}[1] = 1$, which is equivalent to requiring that $D'$ be a valid probability measure.

The dual LP turns out to be equivalent to the following, with variable $\beta \in \R^{|\calI(k,d)|}$:  \begin{align}
    \inf_{\beta \in \R^{|\calI(k,d)|+1}} &\sum_{I \in \calI(k,d)} \beta_I \sigma_I + \sum_{I \in \calI(k,d)} |\beta_I| \Delta_I \label{eq:lp-dual} \\
    \text{ subject to } \qquad &\sum_{I \in \calI(k,d)}\beta_I x_I \geq f(x)\quad \forall x \in \R^d
\end{align}
Notice that the primal LP (\cref{eq:lp-primal}) is feasible (indeed, by $D' = D$), and moreover, we claim that strong duality holds. Accepting this for a moment, denote the common optimum of \cref{eq:lp-primal,eq:lp-dual} by $\gamma$. The claim that approximate moment matching fools $f$ (in a one-sided fashion) w.r.t.\ $D$ is the same as asserting $\gamma \leq \ex_{D}[f] + \eps$. Take $\beta$ to be an optimal solution to the dual, and let $p_u(x) = \sum_{I \in \calI(k,d)} \beta_I x_I$. (In fact, this correspondence between degree-$k$ polynomials and their coefficient vectors allows us to equivalently view the dual as optimizing over such polynomials instead of their coefficients.) The dual then tells us that $p_u \geq f$ pointwise, and \[ \gamma = \sum_{I \in \calI(k,d)} \beta_I \sigma_I + \sum_{I \in \calI(k,d)} |\beta_I| |\Delta_I| = \ex_{D}[p_u] + \inn{\Delta, |p_u|} \leq \ex_{D}[f] + \eps, \] establishing the existence of the upper sandwiching polynomial. To obtain the lower sandwiching polynomial, we replace the objective of the primal with $-\ex_{D'}[f]$ and repeat the same argument, this time using the fact that the common optimum $\gamma'$ satisfies $\gamma' \leq -\ex_{D}[f] + \eps$ (i.e., effectively replacing $f$ with $-f$ throughout). This establishes the desired equivalence if we accept strong duality.

Formally, it remains to properly justify that the primal LP (\cref{eq:lp-primal}) is well-posed, that \cref{eq:lp-dual} is indeed the dual of \cref{eq:lp-primal}, and that strong duality holds. We do so in \cref{sec:formal-duality} using results from general conic LP duality \cite{shapiro2001duality}.
\end{proof}

To see how \cref{thm:duality-intro} may be recovered from this, simply set $\Delta$ to be $\delta$ in every coordinate.

For the purposes of testing using moment matching, one can only ever hope to check that the unknown marginal ($D'$, say) \emph{approximately} matches moments with the target marginal. Approximate duality --- and specifically the appearance of the quantities $\inn{\Delta, |p_u|}$ and $\inn{\Delta, |p_l|}$ --- turns out to be precisely what we need to guarantee sandwiching polynomials even w.r.t.\ such a $D'$.

\begin{corollary}\label{cor:sandwiching}
Let $f, D, k, \Delta, \eps$ satisfy the conditions of \cref{thm:duality}, and let $p_l \leq f \leq p_u$ be the resulting sandwiching polynomials for $f$ w.r.t.\ $D$. Consider any particular $D'$ whose moments up to order $k$ match those of $D$ up to $\Delta$. Then $p_l, p_u$ are sandwiching polynomials for $f$ w.r.t. $D'$ as well, satisfying \[ \ex_{D'}[p_u - f] \leq 2\eps, \qquad \ex_{D'}[f - p_l] \leq 2\eps. \]
\end{corollary}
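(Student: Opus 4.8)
The plan is to exploit the \emph{pointwise} sandwiching to reduce every expectation involving $f$ to one involving only polynomials. This is essential because $f$ itself need not be a polynomial, so $\ex_{D'}[f]$ is not a moment of $D'$ and cannot be controlled directly by the moment-matching hypothesis. Concretely, since $p_u \geq f \geq p_l$ pointwise over $\R^d$ (or $\cube{d}$), we have $0 \leq p_u - f \leq p_u - p_l$ and $0 \leq f - p_l \leq p_u - p_l$, so it suffices to upper bound the single quantity $\ex_{D'}[p_u - p_l]$.

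Next I would compare $\ex_{D'}[q]$ to $\ex_D[q]$ for an arbitrary degree-$k$ polynomial $q(x) = \sum_{I \in \calI(k,d)} q_I x_I$: writing $\ex_{D'}[q] - \ex_D[q] = \sum_{I} q_I (\ex_{D'}[x_I] - \ex_D[x_I])$ and using $|\ex_{D'}[x_I] - \ex_D[x_I]| \leq \Delta_I$ together with the triangle inequality gives $|\ex_{D'}[q] - \ex_D[q]| \leq \sum_I |q_I| \Delta_I = \inn{\Delta, |q|}$. Applying this with $q = p_u - p_l$ and using the coefficientwise triangle inequality $\inn{\Delta, |p_u - p_l|} \leq \inn{\Delta, |p_u|} + \inn{\Delta, |p_l|}$, I obtain
\[ \ex_{D'}[p_u - p_l] \leq \ex_D[p_u - p_l] + \inn{\Delta, |p_u|} + \inn{\Delta, |p_l|}. \]

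Finally I would split $\ex_D[p_u - p_l] = \ex_D[p_u - f] + \ex_D[f - p_l]$, regroup the right-hand side as $\bigl(\ex_D[p_u - f] + \inn{\Delta, |p_u|}\bigr) + \bigl(\ex_D[f - p_l] + \inn{\Delta, |p_l|}\bigr)$, and invoke the two hypotheses inherited from \cref{thm:duality}(b) to bound each parenthesized term by $\eps$, yielding $\ex_{D'}[p_u - p_l] \leq 2\eps$ and hence $\ex_{D'}[p_u - f], \ex_{D'}[f - p_l] \leq 2\eps$. There is no genuine obstacle here; the only thing to be careful about is resisting the tempting but flawed move of writing $\ex_{D'}[p_u - f] = \ex_{D'}[p_u] - \ex_{D'}[f]$ and bounding the terms separately, which fails precisely because $\ex_{D'}[f]$ cannot be tied back to $\ex_D[f]$. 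Routing through the polynomial difference $p_u - p_l$ is exactly what circumvents this.
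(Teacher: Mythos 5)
Your proof is correct, but it takes a genuinely different route from the paper's. The paper invokes condition (a) of \cref{thm:duality} --- since the corollary's hypotheses put $f, D, k, \Delta, \eps$ in the setting where (a) and (b) are equivalent, one knows $|\ex_{D}[f] - \ex_{D'}[f]| \leq \eps$ --- and then bounds $\ex_{D'}[p_u - f] \leq \ex_{D}[p_u - f] + |\ex_D[p_u]-\ex_{D'}[p_u]| + |\ex_D[f]-\ex_{D'}[f]| \leq \ex_D[p_u-f] + \inn{\Delta, |p_u|} + \eps \leq 2\eps$, handling $p_l$ symmetrically. You instead never touch $\ex_{D'}[f]$: you route through the nonnegative polynomial $p_u - p_l$, compare its expectations under $D$ and $D'$ via the moment-matching slack, and split $\ex_D[p_u-p_l]$ so that each sandwiching hypothesis is consumed once, giving $\ex_{D'}[p_u-p_l] \leq 2\eps$ and hence both claims at once. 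Your version is slightly more self-contained, since it uses only condition (b) plus the elementary fact that moment matching controls $\ex_{D'}[q]-\ex_D[q]$ for degree-$k$ polynomials $q$, whereas the paper's is shorter but leans on the (easy direction of the) equivalence in \cref{thm:duality}. One small caveat: your closing remark that bounding via $\ex_{D'}[f]$ ``fails because $\ex_{D'}[f]$ cannot be tied back to $\ex_D[f]$'' is not accurate --- it can be tied back, precisely by part (a) of \cref{thm:duality} (which follows from (b), e.g.\ via $\ex_{D'}[f] \leq \ex_{D'}[p_u] \leq \ex_D[p_u] + \inn{\Delta,|p_u|} \leq \ex_D[f] + \eps$ and the symmetric lower bound), and that is exactly the move the paper makes; the move you call flawed is only flawed if one refuses to use the fooling guarantee, which your hypotheses in fact provide.
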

\begin{proof}
By the first part of \cref{thm:duality}, we know $|\ex_{D}[f] - \ex_{D'}[f]| \leq \eps$. Thus \begin{align}
\left| \ex_{D}[p_u - f] - \ex_{D'}[p_u - f] \right| &\leq \left| \ex_{D}[p_u] - \ex_{D'}[p_u] \right| + \left| \ex_{D}[f] - \ex_{D'}[f] \right| \\
&\leq \inn{\Delta, |p_u|} + \eps.
\end{align} Applying the second part of \cref{thm:duality}, this means \begin{align}
\ex_{D'}[p_u - f] &\leq \ex_{D}[p_u - f] + \inn{\Delta, |p_u|} + \eps \\
&\leq 2\eps.
\end{align}
The argument for $p_l$ is exactly the same. 
\end{proof}

\section{Testable learning via moment matching}

\subsection{Warm-up: testable learning over the hypercube via $k$-wise independence}

The main ideas of our approach are already illustrated in the setting of the Boolean hypercube. A key technical ingredient for us will be the fact that an almost $k$-wise independent distribution is statistically close to being truly $k$-wise independent (\cref{thm:almost-kwise}).

\begin{theorem}\label{thm:learn-kwise}
Let $\calC$ be any concept class that is $\frac{\eps}{4}$-fooled by $k$-wise independence. Then $\calC$ can be testably learned  w.r.t.\ $\unif\cube{d}$ up to excess error $\eps$ with time and sample complexity $d^{O(k)}/\eps^2$.
\end{theorem}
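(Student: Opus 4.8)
The plan is to combine the hypercube duality (\cref{thm:duality-hypercube}) with the ``almost $k$-wise implies close to $k$-wise'' statement (\cref{thm:almost-kwise}) to construct a tester-learner pair meeting the two conditions of \cref{def:testable-learning}. The tester is the obvious one: draw a sample $S$ of size $m = d^{O(k)}/\eps^2$ from $\calD$, form the empirical marginal $\hat D$ on $S_\calX$, and \emph{accept} iff $\hat D$ is $(\delta, k)$-independent for an appropriate $\delta = \eps \cdot d^{-\Theta(k)}$, i.e.\ iff $|\ex_{\hat D}[x_I]| \leq \delta$ for every $|I| \leq k$. Since there are only $d^{O(k)}$ such moments and each empirical moment of a bounded random variable concentrates to within $\delta$ of its true value with $\poly(1/\delta)\cdot\log(d^{O(k)}) = d^{O(k)}/\eps^2$ samples, a union bound gives \textbf{completeness}: when the true marginal is $\unif\cube{d}$ (whose low-degree moments are all $0$), the empirical moments are all within $\delta$ w.h.p., so the tester accepts.

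For \textbf{soundness}, suppose the tester accepts $S$, so $\hat D$ is $(\delta, k)$-independent. By \cref{thm:almost-kwise}, there is a genuinely $k$-wise independent distribution $D'$ with $\mathrm{TV}(\hat D, D') \leq \delta d^k \leq \eps/\poly$. Now fix any $f \in \calC$. By hypothesis $\calC$ is $\frac{\eps}{4}$-fooled by $k$-wise independence, so $|\ex_{D'}[f] - \ex_U[f]| \leq \eps/4$; applying \cref{thm:duality-hypercube} (the ``$\Rightarrow$'' direction, with slack $\delta$) we obtain degree-$k$ sandwiching polynomials $p_l \leq f \leq p_u$ with $\ex_U[p_u - f] + \delta\|p_u\|_1 \leq \eps/4$ and likewise for $p_l$ — although, to control $\|p_u\|_1, \|p_l\|_1$ we should instead invoke duality with a \emph{slightly larger} error parameter (say $\eps/3$) and a correspondingly small slack $\delta$, so that $\delta\|p_u\|_1 \leq \eps/12$; since over the hypercube the coefficients of a degree-$k$ polynomial bounded on $\cube{d}$ are bounded by $d^{O(k)}$, choosing $\delta = \eps \cdot d^{-\Theta(k)}$ suffices (this is exactly the quantitative step that fixes the sample size). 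Then by \cref{cor:sandwiching} applied with $D = U$ and $D' = \hat D$ (whose moments match those of $U$ up to $\delta$), the same $p_l, p_u$ sandwich $f$ with respect to $\hat D$: $\ex_{\hat D}[p_u - f], \ex_{\hat D}[f - p_l] \leq 2\eps/3$. In particular $\ex_{\hat D}[|p_u - f|] = \ex_{\hat D}[p_u - f]$ and similarly for $p_l$, so there is a degree-$k$ polynomial approximating $f$ in $\ell_1$ over $\hat D$ to error $O(\eps)$, uniformly over $f \in \calC$.

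Now run the \cite{kalai2008agnostically} polynomial regression algorithm (\cref{thm:kkms}) on the sample $S$: since $\opt(\calC, \calD)$ is witnessed by some $f^\star \in \calC$, and $f^\star$ admits a degree-$k$ $\ell_1$-approximator over $S_\calX$ with error $O(\eps)$, the algorithm outputs a hypothesis $h$ with empirical error $\hat L_m(h) \leq \opt(\calC, S) + O(\eps) \leq \hat L_m(f^\star) + O(\eps)$. Finally, to pass from empirical to population error, note that $m = d^{O(k)}/\eps^2$ is more than enough for standard uniform convergence (VC or Rademacher bounds, \cref{thm:rad-unif-conv}) over both $\calC$ and the class of hypotheses output by polynomial regression, giving $L(h) \leq \hat L_m(h) + \eps/\poly$ and $\hat L_m(f^\star) \leq L(f^\star) + \eps/\poly = \opt(\calC, \calD) + \eps/\poly$; chaining these yields $L(h) \leq \opt(\calC, \calD) + O(\eps)$, and rescaling $\eps$ by a constant finishes the proof. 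The main obstacle — really the only nontrivial point — is the bookkeeping in the second paragraph: one must choose the slack $\delta$ small enough that $\delta \|p_u\|_1$ and the TV-distance $\delta d^k$ are both $O(\eps)$ \emph{despite} the $\ell_1$ norm of the sandwiching coefficients only being controlled by the crude bound $d^{O(k)}$, which is what forces the $d^{O(k)}/\eps^2$ sample complexity and is the reason the argument does not go through with $\delta$ independent of $d$.
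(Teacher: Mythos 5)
Your overall route is the same as the paper's (moment/bias tester with slack $\delta=\eps\, d^{-\Theta(k)}$, degree-$k$ polynomial regression as the learner, completeness by Hoeffding plus a union bound, soundness via \cref{thm:almost-kwise}, duality, \cref{cor:sandwiching}, and \cref{thm:kkms}), but the middle of your soundness argument is garbled in a way that matters. The $(a)\Rightarrow(b)$ direction of \cref{thm:duality-hypercube} with slack $\delta$ requires as its premise that \emph{every} $(\delta,k)$-independent distribution $\eps'$-fools $f$; fooling by exact $k$-wise independence together with the TV-closeness of the single distribution $\hat D$ does not match that premise. The correct bridge is to apply \cref{thm:almost-kwise} to an \emph{arbitrary} $(\delta,k)$-independent distribution $D''$: it is within TV distance $\delta d^k$ of some $k$-wise independent distribution, so $|\ex_{D''}[f]-\ex_U[f]|\le \eps/4+2\delta d^k\le \eps/2$ once $\delta\le \eps d^{-k}/8$. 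With that premise in hand, the duality \emph{itself} hands you sandwiching polynomials satisfying $\ex_U[p_u-f]+\delta\|p_u\|_1\le \eps/2$ (and likewise for $p_l$); the term $\delta\|p_u\|_1$ is part of the dual guarantee, so no external control of $\|p_u\|_1$ is needed, and \cref{cor:sandwiching} then transfers the sandwiching to $\hat D$ with error $\eps$, after which \cref{thm:kkms} finishes as you say.

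By contrast, your attempted patch, invoking the claim that ``coefficients of a degree-$k$ polynomial bounded on $\cube{d}$ are bounded by $d^{O(k)}$,'' is both unnecessary and unjustified as stated: the sandwiching polynomial $p_u$ produced by duality is not known to be uniformly bounded on $\cube{d}$; all you know a priori is $p_u\ge f\ge -1$ pointwise and $\ex_U[p_u]\le 1+O(\eps)$. (One can in fact salvage an $\ell_1$ bound from this one-sided information, since then $\ex_U[|p_u|]=O(1)$ and each Fourier coefficient is at most $\ex_U[|p_u|]$, but you neither state nor need this.) Relatedly, your account of what forces $\delta=\eps\,d^{-\Theta(k)}$ and hence $m=d^{O(k)}/\eps^2$ attributes it to the coefficient bound, whereas in the paper's argument the constraint comes from needing $\delta d^k\le O(\eps)$ in the application of \cref{thm:almost-kwise}. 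Once these points are repaired, your proof coincides with the paper's.
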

\begin{proof}
Let the unknown labeled distribution be $\calD$. Let $S \sim \calD^{\otimes m}$ be the labeled sample given to $(T,A)$ (where the sample size $m$ will be picked later), and let $S_\calX$ be its (unlabeled) marginal. Let $\hat{D}_m$ be the induced empirical distribution, i.e.\ the uniform distribution over $S_\calX$.

The tester $T$ and algorithm $A$ are simple: the tester checks that the empirical moments (or biases) up to degree $k$ are all no larger than $\delta = \eps d^{-k}/4$ in magnitude (i.e.\ that the empirical distribution is $(\delta, k)$-independent), and the algorithm runs degree-$k$ polynomial regression (\cref{thm:kkms}) over the sample.

It is clear that when $\calD$ indeed has marginal exactly $\unif\cube{d}$ (or indeed any $(\delta/2, k)$-independent distribution), then by taking $m = d^{k}/\delta^2 = d^{\Theta(k)}/\eps^2$ sufficiently large, we can ensure with high probability all the empirical moments of order at most $k$ concentrate about their true moments up to $\delta$ (by a standard Hoeffding plus union bound). That is, with high probability $\hat{D}_m$ will indeed be $(\delta, k)$-independent, and the tester will accept. This verifies completeness.

To verify soundness, suppose that $\hat{D}_m$ is indeed $(\delta, k)$-independent. By \cref{thm:almost-kwise}, this means that $\hat{D}_m$ has TV distance at most $\delta d^k = \eps/4$ from a truly $k$-wise independent distribution. This in turn means that $\hat{D}_m$ (and indeed any $(\delta, k)$-independent distribution) $\frac{\eps}{2}$-fools $\calC$. We now appeal to duality, stated here in some generality as \cref{thm:duality}, although in the setting of the hypercube this theorem reduces exactly to the form in \cref{thm:duality-hypercube}. Formally, observe that for every $f \in \calC$, condition (a) of \cref{thm:duality} is satisfied (with $D = \unif\cube{d}$, and where the slack parameter $\Delta$ is now simply $\delta$ in every coordinate). This allows us to apply \cref{cor:sandwiching} to conclude that there exist $\eps$-sandwiching polynomials for $\calC$ w.r.t.\ $\hat{D}_m$. By \cref{thm:kkms}, this ensures the learner succeeds at learning $\calC$ up to error $\opt(\calC, \calD) + \eps$ with high probability. This proves the theorem.
\end{proof}

We may apply this theorem to obtain testable learning w.r.t.\ $\unif\cube{d}$ for halfspaces, degree-2 PTFs, and constant-depth circuits.

\begin{corollary}
Let $\calC$ be the class of halfspaces over $\cube{d}$. Let $\eps > 0$, and let $k = \wt{O}(1/\eps^2)$. Then $\calC$ is $\eps$-fooled by $k$-wise independence \cite{diakonikolas2010bounded}, and hence it can be testably learned w.r.t. $\unif\cube{d}$ up to excess error $\eps$ with time and sample complexity $d^{O(k)}$.
\end{corollary}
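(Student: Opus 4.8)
The plan is to derive this as a direct composition of \cref{thm:learn-kwise} with the classical pseudorandomness fact that bounded independence fools halfspaces over the cube, so there is very little left to do. First I would invoke the result of \cite{diakonikolas2010bounded}: for every halfspace $f : \cube{d} \to \cube{}$ and every $k$-wise independent distribution $D$ on $\cube{d}$, one has $|\ex_D[f] - \ex_{\unif\cube{d}}[f]| \leq \eps$ as soon as $k = \wt{O}(1/\eps^2)$. In the terminology of the preceding subsection this is exactly the statement that the class of halfspaces is $\eps$-fooled by $k$-wise independence at degree $k = \wt{O}(1/\eps^2)$.

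Next I would reconcile this with the hypothesis of \cref{thm:learn-kwise}, which requires $\frac{\eps}{4}$-fooling rather than $\eps$-fooling. Applying the bound of \cite{diakonikolas2010bounded} with error parameter $\eps/4$ only multiplies the required degree by a constant factor inside the $\wt{O}(\cdot)$, so we still obtain $k = \wt{O}(1/\eps^2)$ with the class $\frac{\eps}{4}$-fooled by $k$-wise independence. Feeding this $k$ into \cref{thm:learn-kwise} then produces a tester-learner pair for halfspaces w.r.t.\ $\unif\cube{d}$ up to excess error $\eps$ with time and sample complexity $d^{O(k)}/\eps^2$; since $k = \wt{O}(1/\eps^2)$ grows without bound as $\eps \to 0$ and $d \geq 2$, the extra $1/\eps^2$ factor is absorbed into $d^{O(k)}$ by slightly enlarging the constant in the exponent, leaving the stated complexity $d^{O(k)}$.

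There is essentially no obstacle here: the real content has already been established, in \cref{thm:learn-kwise} (which itself rests on the duality \cref{thm:duality}, the near-$k$-wise-independence fact \cref{thm:almost-kwise}, and the polynomial-regression learner \cref{thm:kkms}) and in the cited analysis of \cite{diakonikolas2010bounded}; the corollary is just their composition, and the only thing to verify is the routine bookkeeping of constants between the two sources.
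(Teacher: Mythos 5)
Your proposal is correct and matches the paper's (implicit) proof exactly: the corollary is stated as an immediate application of \cref{thm:learn-kwise} together with the fooling result of \cite{diakonikolas2010bounded}, with only the routine constant-factor adjustment for $\frac{\eps}{4}$-fooling and the absorption of the $1/\eps^2$ factor into $d^{O(k)}$, just as you describe.
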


\begin{corollary}
Let $\calC$ be the class of degree-$2$ polynomial threshold functions over $\cube{d}$. Let $\eps > 0$, and let $k = \wt{O}(1/\eps^9)$. Then $\calC$ is $\eps$-fooled by $k$-wise independence \cite{diakonikolas2010ptf}, and hence it can be testably learned w.r.t. $\unif\cube{d}$ up to excess error $\eps$ with time and sample complexity $d^{O(k)}$.
\end{corollary}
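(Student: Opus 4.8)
The plan is to derive this as an immediate instantiation of our general hypercube framework, namely \cref{thm:learn-kwise}, so essentially all the work is already packaged and I only need to feed in the correct value of the degree parameter $k$. The one external ingredient is the structural fact, due to \cite{diakonikolas2010ptf}, that the class of degree-$2$ PTFs over $\cube{d}$ is $\eps$-fooled by $k$-wise independence once $k = \wt{O}(1/\eps^9)$; I would cite this as a black box (it is proved there via an invariance-principle argument together with a Gaussian anticoncentration bound for quadratics). Given this, the corollary follows by combining it with \cref{thm:learn-kwise}.

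Concretely, first I would observe that \cref{thm:learn-kwise} requires the class to be $\frac{\eps}{4}$-fooled by $k$-wise independence (not merely $\eps$-fooled). So I would apply the fooling bound of \cite{diakonikolas2010ptf} with error parameter $\eps/4$ in place of $\eps$, which shows that $k = \wt{O}\big((\eps/4)^{-9}\big) = \wt{O}(1/\eps^9)$ suffices; the constant $4^9$ is harmless and gets absorbed into the $\wt{O}(\cdot)$. Then \cref{thm:learn-kwise} directly produces a tester-learner pair that testably learns degree-$2$ PTFs w.r.t.\ $\unif\cube{d}$ up to excess error $\eps$, with time and sample complexity $d^{O(k)}/\eps^2$; since $k = \wt{\Omega}(1/\eps^2)$ the factor $1/\eps^2$ is dominated by $d^{O(k)}$, so this is just $d^{O(k)}$ as claimed.

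I do not expect any genuine obstacle here: once \cref{thm:learn-kwise} and the fooling bound of \cite{diakonikolas2010ptf} are in hand, the corollary is a one-line composition, mirroring the preceding corollary for halfspaces verbatim. The only point that deserves a sentence is checking that passing from target error $\eps$ to $\eps/4$ (to meet the hypothesis of \cref{thm:learn-kwise}) does not change the asymptotic degree --- which it does not, since the loss is only a constant factor inside an $\wt{O}(\cdot)$.
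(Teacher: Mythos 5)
Your proposal is correct and is exactly the paper's route: the corollary is stated as an immediate instantiation of \cref{thm:learn-kwise} together with the fooling bound of \cite{diakonikolas2010ptf}, with the $\eps$ versus $\eps/4$ constant absorbed into the $\wt{O}(\cdot)$ and the $1/\eps^2$ factor absorbed into $d^{O(k)}$. No gaps.
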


\begin{corollary}
Let $\calC$ be the class of depth-$t$ $\acz$ circuits of size $s$ over $\cube{d}$. Let $\eps > 0$, and let $k = (\log s)^{O(t)} \log (1/\eps)$. Then $\calC$ is $\eps$-fooled by $k$-wise independence \cite{braverman2010polylogarithmic,tal2017tight,harsha2019polynomial}, and hence it can be testably learned w.r.t. $\unif\cube{d}$ up to excess error $\eps$ with time and sample complexity $d^{O(k)}$.
\end{corollary}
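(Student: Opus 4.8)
The plan is to derive this immediately by combining the known pseudorandomness bound for constant-depth circuits with \cref{thm:learn-kwise}. The first step is to invoke the fooling result of Braverman \cite{braverman2010polylogarithmic}, in the sharpened form due to Tal \cite{tal2017tight} and Harsha--Srinivasan \cite{harsha2019polynomial}: any $k$-wise independent distribution on $\cube{d}$ $\eps'$-fools every depth-$t$, size-$s$ $\acz$ circuit provided $k \geq (\log s)^{O(t)}\log(1/\eps')$. Applying this with $\eps' = \eps/4$ gives that $k = (\log s)^{O(t)}\log(1/\eps)$-wise independence $\frac{\eps}{4}$-fools $\calC$; here I would observe that $\log(4/\eps) = O(\log(1/\eps))$ once we assume (without loss of generality, by shrinking the error parameter if needed) that $\eps$ is below an absolute constant, so the degree bound stated in the corollary is unaffected.

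The second step is a direct application of \cref{thm:learn-kwise} with this value of $k$. Since $\calC$ is $\frac{\eps}{4}$-fooled by $k$-wise independence, \cref{thm:learn-kwise} yields a tester--learner pair that testably learns $\calC$ with respect to $\unif\cube{d}$ up to excess error $\eps$ in time and sample complexity $d^{O(k)}/\eps^2$. Finally, since $k \geq \log(1/\eps)$ we have $1/\eps^2 \leq d^{O(k)}$, so this bound is simply $d^{O(k)}$, matching the claim.

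There is no real obstacle here: all the mathematical content lives in the cited $\acz$ fooling theorem and in \cref{thm:learn-kwise}, and what remains is only the bookkeeping of constants --- reconciling the $\eps'$-fooling statement with the $\frac{\eps}{4}$ slack that \cref{thm:learn-kwise} demands, and absorbing the $1/\eps^2$ sample-complexity overhead into $d^{O(k)}$. Both steps are routine and do not change the asymptotics.
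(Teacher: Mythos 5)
Your proposal is correct and matches the paper's (implicit) argument exactly: cite the $\acz$ fooling theorem to get that $k$-wise independence $\frac{\eps}{4}$-fools the class, then invoke \cref{thm:learn-kwise} and absorb the $1/\eps^2$ factor and the constant-factor change in the error parameter into the $d^{O(k)}$ bound. Nothing further is needed.
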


\subsection{A general algorithm using moment matching}

We now give a more general algorithm for testable learning that does not need the target distribution to be $k$-wise independent. In this case, our tester will check that the low-degree moments of the empirical distribution are close to those of the target distribution. The correctness of our tester is a consequence of duality (\cref{thm:duality}).

\begin{theorem}\label{thm:main-algorithm}
Let $\Dx$ be a distribution on $\calX$, and let $\calC$ be a concept class mapping $\calX$ to $\cube{}$. Let $k \in \N, \Delta \in \mathbb{R}_+^{|\calI(k,d)|}$ be the degree and slack parameters, as in \cref{def:moment-match}, and let $\eps > 0$ be the error parameter. Suppose the following conditions hold: \begin{enumerate}[(a)]
\item (Empirical moments concentrate around true moments.)  There exists $m$ large enough that with high probability over a sample $S_\calX \sim \Dx^{\otimes m}$, the corresponding empirical distribution $\hat{D}_m$ matches moments of degree at most $k$ with $\Dx$ up to slack $\Delta$.
\item (Existence of sandwiching polynomials with bounded coefficients for $\calC$, or equivalently approximate moment matching fools $\calC$.) For every $f \in \calC$, there exist degree-$k$ sandwiching polynomials $p_l \leq f \leq p_u$ such that \[ \ex_{\Dx}[p_u - f] + \inn{\Delta, |p_u|} \leq \frac{\eps}{2}, \qquad \ex_{\Dx}[f - p_l] + \inn{\Delta, |p_l|} \leq \frac{\eps}{2}. \] (Recall that for a degree-$k$ polynomial $p(x) = \sum_{I} p_I x_I$, we use $\inn{\Delta, |p|}$ to denote $\sum_{I \in \calI(k,d)} |p_I| \Delta_I$.)

Equivalently, for every $f \in \calC$ and for any distribution $D'$ whose moments up to  order $k$ match those of $\Dx$ up to $\Delta$, we have $|\ex_D[f] - \ex_{D'}[f]| \leq \frac{\eps}{2}$.
\end{enumerate}

Then $\calC$ can be testably learned w.r.t.\ $\Dx$ up to excess error $\eps$ using time and sample complexity $m + d^{O(k)}$. Moreover, the tester $T$ and learner $A$ are simple: $T$ tests whether the empirical moments up to order $k$ match those of $\Dx$ up to $\Delta$, and $A$ performs degree-$k$ polynomial regression over the sample (\cref{thm:kkms}).
\end{theorem}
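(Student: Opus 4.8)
The plan is to follow the template of the warm-up \cref{thm:learn-kwise}, with \cref{cor:sandwiching} playing the role that \cref{thm:almost-kwise} played there. Both the tester $T$ and the learner $A$ will operate on a common labeled sample $S = \{(x_i,y_i)\}_{i\in[m']} \sim \calD^{\otimes m'}$ of size $m' = \max\{m,\ d^{O(k)}/\poly(\eps)\}$; write $S_\calX$ for its unlabeled part and $\hat{D}_{m'}$ for the uniform distribution on $S_\calX$. The tester $T$ simply accepts iff $|\ex_{\hat{D}_{m'}}[x_I] - \ex_{\Dx}[x_I]| \le \Delta_I$ for every $I \in \calI(k,d)$ (checkable in time $d^{O(k)}$), and the learner $A$ runs the degree-$k$ absolute-loss polynomial regression algorithm of \cref{thm:kkms} on $S$, returning a degree-$k$ polynomial-threshold hypothesis $h$. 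Completeness is immediate from condition (a): if $\calD$ truly has marginal $\Dx$, then taking $m' \ge m$ (the event that all empirical moments of order $\le k$ are within $\Delta$ of their targets only becomes likelier as the sample grows), with high probability $\hat{D}_{m'}$ matches the degree-$\le k$ moments of $\Dx$ up to $\Delta$ and $T$ accepts.

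For soundness, condition on $T$ accepting, i.e.\ on $\hat{D}_{m'}$ matching the degree-$\le k$ moments of $\Dx$ up to $\Delta$. Fix $f \in \calC$. Condition (b) says exactly that $f$, $\Dx$, $k$, $\Delta$ and error parameter $\eps/2$ satisfy part (b) of \cref{thm:duality}, hence also part (a); so \cref{cor:sandwiching} applies with $D = \Dx$ and $D' = \hat{D}_{m'}$, yielding $\ex_{\hat{D}_{m'}}[p_u - f] \le \eps$ and $\ex_{\hat{D}_{m'}}[f - p_l] \le \eps$ for the sandwiching pair $p_l \le f \le p_u$ furnished by condition (b). Since $p_u \ge f$ pointwise this gives $\ex_{\hat{D}_{m'}}[|f - p_u|] \le \eps$, i.e.\ a degree-$k$ absolute-loss $\eps$-approximator for $f$ with respect to $\hat{D}_{m'}$. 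Thus the hypothesis of \cref{thm:kkms} holds with $\hat{D}_{m'}$ in place of the target marginal, and running polynomial regression on $S$ produces $h$ with $\hat{L}_{m'}(h) \le \opt(\calC, S) + \eps$, where $\opt(\calC, S) = \min_{f\in\calC} \hat{L}_{m'}(f)$.

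It then remains to transfer this empirical guarantee back to $\calD$ by a routine generalization argument. The output $h$ belongs to the class of degree-$k$ polynomial threshold functions over $\calX$, of VC dimension $d^{O(k)}$, so standard uniform convergence gives $L(h) \le \hat{L}_{m'}(h) + \eps$ with high probability once $m' \gtrsim d^{O(k)}/\eps^2$; and a single-function Hoeffding bound applied to an optimal $f^\star \in \argmin_{f\in\calC} L(f)$ gives $\opt(\calC,S) \le \hat{L}_{m'}(f^\star) \le \opt(\calC,\calD) + \eps$ with high probability. Chaining, $L(h) \le \opt(\calC,\calD) + 3\eps$; absorbing the constant into $\eps$ (and correspondingly into the $\eps/2$ appearing in condition (b)) gives excess error $\eps$, and the sample and time complexity is $m' + d^{O(k)}/\poly(\eps) = m + d^{O(k)}/\poly(\eps)$ as claimed.

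I do not expect any single step to be a genuine obstacle: the real content is \cref{cor:sandwiching}, which is precisely the tool that converts the tester's moment check on $\hat{D}_{m'}$ into $\ell_1$-approximators with respect to that same empirical distribution, after which \cref{thm:kkms} and standard uniform convergence finish the job. The only points needing care are the bookkeeping of constants (condition (b) is stated with slack $\eps/2$ so that \cref{cor:sandwiching} outputs slack $\eps$, and one must track the two additional $\eps$'s introduced by generalization and by the $\opt$ comparison) and the observation that \cref{thm:kkms} may legitimately be invoked with $\hat{D}_{m'}$ as the marginal --- which is fine, since $\hat{D}_{m'}$ is an honest distribution the learner can sample from (it is just $S$), and the generalization step then returns us to $\calD$.
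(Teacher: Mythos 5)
Your proposal is correct and follows essentially the same route as the paper's proof: check empirical moments, use \cref{cor:sandwiching} (via \cref{thm:duality}) to transfer the sandwiching guarantee to the empirical distribution $\hat{D}_m$, and invoke \cref{thm:kkms} followed by a standard VC/Hoeffding generalization step. The only difference is presentational --- you make explicit the generalization bookkeeping (and the resulting constant-factor rescaling of $\eps$) that the paper handles in a brief parenthetical remark.
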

\begin{proof}
Let the unknown labeled distribution be $\calD$, and let $S \sim \calD^{\otimes m}$ be the sample given to $(T,A)$. First we verify completeness. By assumption, when $\calD$ indeed has marginal $\Dx$, then $m$ is large enough that with high probability over $S$, the empirical moments concentrate about the true moments up to $\Delta$, and hence $T$ accepts.

As for soundness, suppose that $T$ accepts, i.e.\ that the empirical distribution $\hat{D}_m$ indeed matches order-$k$ moments with $\Dx$ up to $\Delta$. Observe that our condition (b) is the same as condition (b) of \cref{thm:duality} is satisfied. Thus we may apply \cref{cor:sandwiching} (with $D = \Dx$ and $D' = \hat{D}_m$) to conclude that there exist degree-$k$ $\eps$-sandwiching polynomials for $\calC$ w.r.t.\ $\hat{D}_m$. By \cref{thm:kkms}, we have that degree-$k$ polynomial regression achieves error $\opt(\calC, \calD) + \eps$ with high probability. (This implicitly assumes that the degree-$k$ polynomial fitting $S$ will generalize to $\calD$, which will be true by classic VC theory whenever $m \geq d^{O(k)}/\eps^2$ since the VC dimension of degree-$k$ polynomials (with bounded coefficients, as here) is at most $d^{O(k)}$. If this is not the case, we may replace $m$ with $m + d^{O(k)}/\eps^2$.)
\end{proof}

To see how \cref{thm:main-algorithm-intro} may be recovered from this, simply set $\Delta$ to be $\delta$ in every coordinate, and also rescale $\eps$ appropriately.

\section{Testably learning functions of halfspaces over strictly subexponential distributions}

In this section we apply \cref{thm:main-algorithm} to prove that we can testably learn functions of halfspaces w.r.t.\ a target marginal $\Dx$ on $\calX = \R^d$ that is anticoncentrated and has strictly subexponential tails in the sense of \cref{def:strictly-subexp-intro} from the introduction.

\begin{definition}[Restatement of \cref{def:strictly-subexp-intro}]\label{def:strictly-subexp}
We say a distribution $\Dx$ on $\R^d$ is anticoncentrated and has $\alpha$-strictly subexponential tails if the following hold: \begin{enumerate}[(a)]
\item For all $\|u\|_2 = 1$, $\pr[|\inn{x,u}| > t] \leq \exp(-C_1 t^{1+\alpha})$ for some constant $C_1$.
\item For all $\|u\|_2 = 1$ and $k \in \N$, $\ex[|\inn{x,u}|^k]^{1/k} \leq C_2 k^{1/(1 + \alpha)}$ for some constant $C_2$.
\item For all $\|u\|_2 = 1$ and continuous intervals $T \subset \R$, we have $\pr[\inn{x, u} \in T] \leq C_3 |T|$ for some constant $C_3$.
\end{enumerate}
\end{definition}
The first two conditions are a strengthening of the usual definition of subexponential distributions (see e.g.\ \cite{vershynin2018high}), and standard arguments show that the two are actually equivalent. The third asks directional marginals of $\Dx$ to be anticoncentrated. Examples include all \emph{strongly} logconcave distributions, which satisfy this definition with $\alpha = 1$ (see e.g.\ \cite[\S5.1]{saumard2014log} or \cite[Thm 2.15]{ledoux2001concentration}). This class includes the standard Gaussian distribution, the uniform distribution on the unit $d$-dimensional sphere and more generally, uniform distribution on any convex body with smooth boundaries (e.g., Gaussian smoothening of arbitrary convex bodies).

Throughout this section, let $\calC$ be the class of functions of $p$ halfspaces over $\R^d$, i.e.\ functions $f : \R^d \to \R$ of the form \begin{equation}\label{eq:hs-def}
f(x) = g(\sgn(\inn{w^1, x} + \theta_1), \dots, \sgn(\inn{w^p, x} + \theta_p))
\end{equation} for some $w^1, \dots, w^p \in \R^d$ (where we use superscripts to avoid confusion with coordinate notation), $\theta_1, \dots, \theta_p \in \R$, and $g : \cube{p} \to \cube{}$. We focus on the setting where $p$ is a constant. Also let $\Dx$ be some fixed distribution that is anticoncentrated and $\alpha$-strictly subexponential. We will prove the following theorem, stated earlier as \cref{thm:learn-hs-intro}.

\begin{theorem}\label{thm:learn-hs}
Let $\calC$ be the class of functions of $p$ halfspaces over $\R^d$, as above. Assume that $p = O(1)$. Let $\Dx$ be a distribution that is anticoncentrated and $\alpha$-strictly subexponential. Then $\calC$ can be testably learned w.r.t.\ $\Dx$ up to excess error $\eps$ using $d^{\wt{O}(\eps^{-(1+\alpha)/\alpha})}$ sample and time complexity.
\end{theorem}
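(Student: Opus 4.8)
The plan is to invoke \cref{thm:main-algorithm} with $\Dx$ the given anticoncentrated, $\alpha$-strictly subexponential distribution, $\calC$ the class of functions of $p = O(1)$ halfspaces, and $k = \wt{O}(\eps^{-(1+\alpha)/\alpha})$. Two conditions must be verified: (a) that empirical moments of degree at most $k$ concentrate around the true moments of $\Dx$ up to a chosen slack vector $\Delta$, using $m = d^{O(k)}/\poly(\eps)$ samples; and (b) the existence of degree-$k$ sandwiching polynomials $p_l \le f \le p_u$ for every $f \in \calC$ with $\ex_{\Dx}[p_u - f] + \inn{\Delta,|p_u|} \le \eps/2$ and similarly for $p_l$. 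By \cref{thm:duality}, condition (b) is equivalent to the statement that \emph{approximate moment matching fools $\calC$ over $\Dx$}: for every $f \in \calC$ and every $D'$ matching the degree-$\le k$ moments of $\Dx$ up to $\Delta$, we have $|\ex_{\Dx}[f] - \ex_{D'}[f]| \le \eps/2$. So the whole theorem reduces to proving this fooling statement with the right $k$ and $\Delta$, and then checking the moment-concentration bound for that $\Delta$.

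For condition (a), since $\Dx$ has $\alpha$-strictly subexponential directional marginals, the individual monomials $x_I$ with $|I| \le k$ have bounded moments — e.g.\ $\ex_{\Dx}[x_I^2]$ is controlled via $\ex[|\inn{x,u}|^{2k}] \le (C_2 (2k)^{1/(1+\alpha)})^{2k}$ after expanding in a basis — so a Bernstein/Hoeffding-type bound plus a union bound over the $d^{O(k)}$ multi-indices shows that with $m = d^{O(k)}$ samples (times $\poly(1/\delta)$ for the desired per-coordinate slack), the empirical moments fall within any slack $\Delta_I = d^{-\wt{O}(k)}$ of the true ones with high probability. This is routine; the only care needed is to scale $\Delta_I$ down enough that $\inn{\Delta,|p|}$ is negligible once we know the sandwiching polynomials have only moderately large ($d^{\wt{O}(k)}$-bounded) coefficients, which the duality argument will also need to supply.

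The real work is the fooling statement, which the paper isolates as \cref{thm:fool-gaussian}. The approach, following \cite{klivans2013moment}, is via metric distances in probability: one shows that matching sufficiently many moments of a strictly subexponential $\Dx$ forces closeness in the $\lambda$-metric (Zolotarev's metric), and then that functions of $p$ halfspaces are "smooth enough" in the relevant sense — after reducing to the one-dimensional directional projections $\inn{w^i, x}$ and using the anticoncentration hypothesis to control the error incurred at the threshold discontinuities — that $\lambda$-closeness of the relevant $p$-dimensional projected distributions implies closeness of $\ex[f]$. Because $p$ is constant, one can union-bound or take products over the $p$ directions. The key quantitative gain over \cite{klivans2013moment}, giving $k = \wt{O}(\eps^{-(1+\alpha)/\alpha})$ rather than the exponentially worse prior bound, comes from using the \emph{strict} subexponentiality ($1+\alpha > 1$ in the tail/moment exponent): the classical moment-problem estimates (à la Klebanov) relating moment agreement to $\lambda$-distance degrade gracefully precisely when the moment sequence grows like $k^{k/(1+\alpha)}$ rather than $k^k$, and tracking this dependence carefully yields the stated degree. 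I expect this metric-distance argument — correctly setting up the $\lambda$-metric, handling the passage from moment agreement to $\lambda$-closeness in the strictly-subexponential regime, and coupling it to the anticoncentration bound to absorb the discontinuity of halfspaces — to be the main obstacle; the reduction from $\calC$ to one-dimensional projections and the bookkeeping for the $p = O(1)$ case are comparatively standard. Once \cref{thm:fool-gaussian} is in hand, \cref{thm:duality} converts it to sandwiching polynomials with controlled coefficients, \cref{thm:main-algorithm} packages everything into a tester-learner, and the sample/time complexity $d^{\wt{O}(\eps^{-(1+\alpha)/\alpha})}$ follows from $k = \wt{O}(\eps^{-(1+\alpha)/\alpha})$.
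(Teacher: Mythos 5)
Your proposal is correct and follows essentially the same route as the paper: verify moment concentration (the paper's \cref{lem:moment-conc}, via a crude Chebyshev bound rather than Hoeffding, since the monomials are unbounded), prove that approximate degree-$k$ moment matching fools functions of halfspaces via the $\lambda$-metric argument exploiting the $k^{k/(1+\alpha)}$ moment growth (the paper's \cref{lem:lambda-closeness} and \cref{thm:fool-gaussian}, combined with the anticoncentration-based reduction of \cref{lem:km-lemma}), and then assemble everything through \cref{thm:duality} and \cref{thm:main-algorithm} with $k = \wt{O}(\eps^{-(1+\alpha)/\alpha})$. You correctly identify the fooling statement as the main technical content and the strict subexponentiality as the source of the improved degree, so the plan matches the paper's proof.
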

In particular whenever $\alpha=1$, as for strongly logconcave distributions including $\calN(0, I_d)$, we obtain a $d^{\wt{O}(1/\eps^2)}$-time algorithm.

We now describe our proof plan. To use \cref{thm:main-algorithm}, we must show that approximately matching the low-degree moments of $\Dx$ fools functions of halfspaces. Work due to \cite{klivans2013moment} introduced an argument for this problem based on  general techniques from the classical theory of moments and the method of metric distances in probability \cite{klebanov1996proximity,rachev2013methods}. Their broad proof approach was to use \cite[Thm 2]{klebanov1996proximity} to show that closeness in moments of two distributions implies closeness in the $\lambda$-distance (\cref{def:lambda-dist}), and then relate this to the CDF distance, which directly relates to fooling halfspaces. For our purposes, a direct application of \cite[Thm 2]{klebanov1996proximity} does not suffice. Instead, we directly analyze the $\lambda$-distance under the assumption that the moments of $\Dx$ grow in a strictly subexponential fashion. We begin with the technical lemmas we need, and then prove \cref{thm:learn-hs} in the final subsection.

\subsection{Moment closeness implies distribution closeness}\label{subsec:lambda-dist}

\begin{definition}[{$\lambda$-distance, see e.g.~\cite{zolotarev1984probability}, \cite[Chap 10]{rachev2013methods}}]\label{def:lambda-dist}
For a distribution $P$ on $\R^p$, let $\phi_P : \R^p \to \mathbb{C}$ given by $\phi_P(t) = \ex_{z \sim P}[e^{i\inn{t, x}}]$ be its characteristic function. For two distributions $P, P'$ on $\R^p$, define the $\lambda$-distance between them as follows: \[ d_{\lambda}(P, P') = \min_{T > 0} \max \left\{ \max_{\|t\| \leq T} \{|\phi_{P}(t) - \phi_{P'}(t) |\}, \frac{1}{T} \right\}. \]
\end{definition}

We prove an approximate version of \cite[Thm 1]{klebanov1996proximity} (see also \cite[Thm 10.3.4]{rachev2013methods}), bounding the $\lambda$-distance between two distributions whose moments approximately match and grow with the degree $k$ in a strictly subexponential fashion, i.e.\ as $k^{k/(1+\alpha)}$.

\begin{lemma}\label{lem:lambda-closeness}
Let $k \in \N$ be even. Let $P$ be a distribution on $\R^p$ such that for all $\|u\| \leq 1$, \[ \ex_{z \sim P}[|\inn{u, z}|^k] \leq M_k := p^{k/2} C_2^k k^{k/(1+\alpha)} \] for some constant $C_2$. Let $P'$ be a distribution that approximately matches moments up to order $k$ with $P$ in the following strong sense: for all $j \leq k$ and $\|u\| \leq 1$, \[ \left| \ex_{z \sim P}[\inn{u, z}^j] - \ex_{z' \sim P'}[\inn{u, z'}^j] \right| \leq \eta_j := \frac{j!}{2k} \left(\frac{6M_k}{k!}\right)^{(j+1)/(k+1)} = \frac{j!}{2k} \left(\frac{\sqrt{p}}{C_4 k^{\alpha/(1+\alpha)}}\right)^{j+1} \] for some constant $C_4$ depending only on $C_2$. Then \[ d_\lambda(P, P') \lesssim \sqrt{p} k^{-\alpha/(1+\alpha)}. \] 
\end{lemma}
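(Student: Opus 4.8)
The plan is to follow the classical approach of Klebanov \cite{klebanov1996proximity} (see also \cite[Thm 10.3.4]{rachev2013methods}): bound the difference of characteristic functions $|\phi_P(t) - \phi_{P'}(t)|$ for $\|t\| \le T$ by splitting the Taylor series of $e^{i\inn{t,z}}$ at degree $k$. Writing $\phi_P(t) - \phi_{P'}(t) = \sum_{j=0}^{\infty} \frac{i^j}{j!}\big(\ex_P[\inn{t,z}^j] - \ex_{P'}[\inn{t,z}^j]\big)$, the terms with $j \le k$ are controlled by the approximate moment-matching hypothesis: writing $t = \|t\| u$ with $\|u\|=1$, the $j$th term is at most $\frac{\|t\|^j}{j!}\eta_j$. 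Plugging in the stated formula for $\eta_j$ and summing the geometric-like series, this contributes roughly $\frac{1}{2k}\sum_{j\le k}\big(\tfrac{\|t\|\sqrt p}{C_4 k^{\alpha/(1+\alpha)}}\big)^{j+1}$, which stays $O(\sqrt p\, k^{-\alpha/(1+\alpha)})$ provided $\|t\| \lesssim k^{\alpha/(1+\alpha)}/\sqrt p$. The tail terms $j > k$ must be handled differently, since $P'$ need not have controlled high moments; here the standard trick is to bound the entire tail of $\phi_P$ (and separately of $\phi_{P'}$) past degree $k$, rather than term-by-term. For $\phi_P$ one uses the moment bound $\ex_P[|\inn{u,z}|^k] \le M_k$ together with the monotonicity $\ex_P[|\inn{u,z}|^j]^{1/j}$ in $j$ (equivalently, the strictly subexponential tail assumption) to control $\sum_{j>k}\frac{\|t\|^j}{j!}\ex_P[|\inn{u,z}|^j]$; Stirling's approximation in the form $k! \eqsim e^{-k}k^{k+1/2}$ lets us write $M_k/k! \eqsim (\sqrt p\, C_2 e\, k^{-\alpha/(1+\alpha)})^k/\sqrt k$, so as long as $\|t\|$ is a small enough constant multiple of $k^{\alpha/(1+\alpha)}/\sqrt p$, the ratio of consecutive terms is bounded below $1$ and the tail sum is dominated by its first term $\approx M_k \|t\|^k/k! \lesssim k^{-\alpha/(1+\alpha)}$.

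Next I would handle the tail of $\phi_{P'}$. The point is that even though $P'$ has no assumed moment bounds beyond order $k$, the difference $\phi_P(t) - \phi_{P'}(t)$ has a trivial bound of $2$ everywhere, and more usefully, we can bound the $P'$-tail past degree $k$ by combining the low-degree matching (which gives $\ex_{P'}[|\inn{u,z}|^k] \le M_k + \eta_k \lesssim M_k$, using that $\eta_k$ is tiny) with the fact that $|e^{i\theta} - \sum_{j\le k}\frac{(i\theta)^j}{j!}| \le \frac{|\theta|^{k+1}}{(k+1)!}$ if $|\theta|\le k$ and $\le \frac{2|\theta|^k}{k!}$ in general — the truncation-error bound for the exponential. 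Applying this with $\theta = \inn{t,z}$ and taking expectations under $P'$ bounds $|\phi_{P'}(t) - \sum_{j\le k}\frac{i^j}{j!}\ex_{P'}[\inn{t,z}^j]|$ by a term of the same order $\lesssim M_k\|t\|^k/k!$; the same truncation bound applied to $\phi_P$ handles its tail cleanly. Adding the low-degree contribution, the high-degree $P$-tail, and the high-degree $P'$-tail, we get $\max_{\|t\|\le T}|\phi_P(t)-\phi_{P'}(t)| \lesssim \sqrt p\, k^{-\alpha/(1+\alpha)}$ for the choice $T = c\, k^{\alpha/(1+\alpha)}/\sqrt p$ with $c$ a suitable absolute constant. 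Then $1/T = (1/c)\sqrt p\, k^{-\alpha/(1+\alpha)}$ is of exactly the same order, so taking the max of the two quantities in the definition of $d_\lambda$ (with this $T$ as the minimizer candidate) yields $d_\lambda(P,P') \lesssim \sqrt p\, k^{-\alpha/(1+\alpha)}$, as claimed.

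The main obstacle I anticipate is the bookkeeping around the high-degree tails — specifically, making the choice of $T$ (the cutoff radius) and the constant $C_4$ in the definition of $\eta_j$ mesh so that \emph{all three} error contributions (low-degree matching error, $\phi_P$-tail, $\phi_{P'}$-tail) simultaneously come out $O(\sqrt p\, k^{-\alpha/(1+\alpha)})$, and verifying via Stirling that $\big(\tfrac{6M_k}{k!}\big)^{1/(k+1)} \eqsim \tfrac{\sqrt p}{C_4 k^{\alpha/(1+\alpha)}}$ so that the two displayed expressions for $\eta_j$ in the lemma statement genuinely agree. A secondary subtlety is that we only assume the moment bound at even order $k$ (and $k$ is taken even), so for odd $j < k$ in the $\phi_P$-tail argument one should route through $\ex_P[|\inn{u,z}|^j] \le \ex_P[|\inn{u,z}|^k]^{j/k} \le M_k^{j/k}$ by Hölder/Lyapunov rather than assuming a bound at order $j$ directly — but this is exactly where condition (b) of \cref{def:strictly-subexp} (equivalently, the monotonicity of $\ex[|\inn{x,u}|^k]^{1/k}/k^{1/(1+\alpha)}$) does the work, and $M_k$ was chosen precisely to make $M_k^{1/k} \eqsim \sqrt p\, C_2 k^{1/(1+\alpha)}$.
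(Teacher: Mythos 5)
Your proposal is correct and follows essentially the same route as the paper's proof: Taylor-expand the characteristic functions along each fixed direction, bound the degree-$j\le k$ terms by the matching slacks $\eta_j$, control everything beyond degree $k$ through the order-$k$ moments of $P$ and of $P'$ (the latter via $M_k+\eta_k$, using that $k$ is even), and choose $T \eqsim (k!/(6M_k))^{1/(k+1)} \eqsim k^{\alpha/(1+\alpha)}/\sqrt{p}$ so that every contribution, and hence $d_\lambda(P,P')\le \max\{F(T),1/T\}$, is $O(\sqrt{p}\,k^{-\alpha/(1+\alpha)})$. The only cosmetic difference is that you bound the tails of $\phi_P$ and $\phi_{P'}$ separately via the elementary truncation bound for $e^{i\theta}$, whereas the paper uses a single degree-$k$ Taylor remainder of $\phi_P-\phi_{P'}$ evaluated at an intermediate point; the two are interchangeable (and your route sidesteps any term-by-term use of $P$-moments above order $k$, which the lemma does not assume).
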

\begin{proof}
To control $d_\lambda(P, P')$, we need to control $\max_{\|t\| \leq T} \{|\phi_{P}(t) - \phi_{P'}(t) |\}$ as a function of $T$. To this end, fix any direction $u \in \R^p$ with $\|u\| = 1$, and let $t = \tau u$ for $\tau \in [0,T]$ be a vector in that direction satisfying $\|t\| \leq T$. Let $\phi_1(\tau) = \phi_{P}(\tau u)$ and $\phi_2(\tau) = \phi_{P'}(\tau u)$ be the characteristic functions of $P$ and $P'$ along $u$. We may Taylor expand $\phi_1 - \phi_2$ up to degree $k$ as follows: \begin{equation}\label{eq:char-taylor} \phi_1(\tau) - \phi_2(\tau) = \sum_{0 \leq j < k} \frac{\phi_1^{(j)}(0) - \phi_2^{(j)}(0)}{j!} \tau^j + \frac{\phi_1^{(k)}(\tau') - \phi_2^{(k)}(\tau')}{k!} \tau^k \end{equation} for some $\tau' \in [0, \tau]$.

The crucial fact we use now is that the derivatives of the characteristic function encode its moments. Indeed, for any $\tau$, \[ \phi_1(\tau) = \ex_{z \sim D}[e^{i\tau\inn{z,u}}] \implies \phi_1^{(j)}(\tau) = \ex_{z \sim P}[i^j \inn{z,u}^j e^{i\tau\inn{z,u}}], \] so that in particular $|\phi_1^{(j)}(0)| = |\ex_{z \sim P}[\inn{z,u}^j]|$ for all $j$ (and similarly for $\phi_2$). This means $|\phi_1^{(0)}(0) - \phi_2^{(0)}(0)| = 0$, and for each $1 \leq j < k$, by our assumption that $P'$ approximately moment matches $P$, we have \begin{equation}\label{eq:char0} |\phi_1^{(j)}(0) - \phi_2^{(j)}(0)| \leq \eta_j. \end{equation} At degree $k$, we have \begin{equation}\label{eq:char1} |\phi_1^{(k)}(\tau')| = |\ex_{z \sim P}[i^k \inn{z,u}^k e^{i\tau'\inn{z,u}}]| \leq \ex_{z \sim P}[|\inn{z,u}^k|] \leq M_k.\end{equation} And since $\ex_{z' \sim P'}[\inn{z',u}^k] \leq \ex_{z \sim P}[\inn{z,u}^k] + \eta_k$, we similarly have \begin{equation}\label{eq:char2} |\phi_2^{(k)}(\tau')| \leq M_k + \eta_k \ll 2M_k \end{equation} Substituting \cref{eq:char0,eq:char1,eq:char2} into \cref{eq:char-taylor}, we obtain \begin{equation}\label{eq:char-diff} |\phi_1(\tau) - \phi_2(\tau)| < \sum_{1 \leq j < k} \frac{\eta_j}{j!} \tau^j + \frac{3M_k}{k!}\tau^k =: F(\tau), \end{equation} where we have denoted the expression on the RHS by $F(\tau)$ for convenience. Since $F(\tau)$ is clearly increasing in $\tau$ and independent of $u$, we have $\max_{\|t\| \leq T} \{|\phi_{P}(t) - \phi_{P'}(t) |\} < F(T)$. This means that \[ d_{\lambda}(P, P') \leq \max\{ \max_{\|t\| \leq T} \{|\phi_{P}(t) - \phi_{P'}(t) |\}, \frac{1}{T}\} \leq \max\{F(T), \frac{1}{T}\}, \] and our job now is to pick $T > 0$ that minimizes the RHS.

This is equivalent to picking the largest $T$ such that $F(T) \leq \frac{1}{T}$, i.e.\ \[ TF(T) = \sum_{1 \leq j < k} \frac{\eta_j}{j!} T^{j+1} + \frac{3M_k}{k!}T^{k+1} \leq 1. \] Let us divide this further into two sufficient conditions: \[ \sum_{1 \leq j < k} \frac{\eta_j}{j!} T^{j+1} \leq \frac{1}{2} \quad\text{and}\quad \frac{3M_k}{k!}T^{k+1} = \frac{1}{2}. \] The second condition is equivalent to \[ T = \left( \frac{k!}{6M_k} \right)^{1/(k+1)} = \left( \frac{k!}{6p^{k/2} C_2^k k^{k/(1+\alpha)}} \right)^{1/(k+1)} \eqsim \frac{k^{\alpha/(1+\alpha)}}{\sqrt{p}}, \] by Stirling's approximation. As for the first, we have picked $\eta_j$ exactly such that when we plug in this value of $T$, for each $1 \leq j < k$ we have \[ \eta_j = \frac{j!}{2k} T^{-(j+1)} = \frac{j!}{2k} \left(\frac{6M_k}{k!}\right)^{(j+1)/(k+1)} \implies \frac{\eta_j}{j!} T^{j+1} = \frac{1}{2k}. \] Summing over $1 \leq j < k$ verifies the first condition. Thus for this $T$, we have \[ d_{\lambda}(P, P') \leq \max\{F(T), \frac{1}{T}\} \leq \frac{1}{T} \lesssim \sqrt{p} k^{-\alpha/(1+\alpha)}, \] proving the lemma.
\end{proof}

We offer some remarks to guide the reader through these calculations. For our application, $P$ will be the distribution of $\inn{x,v}$ for $x \sim \Dx$ and $\|v\| = \sqrt{p}$. The key idea is simply to use a Taylor approximation of the $\lambda$-distance to reduce the issue to one of moment closeness. The final calculation amounts to solving for $T$ satisfying $T^{k+1} \eqsim \frac{k!}{M_k}$, where the denominator is the $k\th$ moment of $P$. We then set each $\eta_j$ small enough to make the lower degree terms minor. Note that the $j\th$ moment of $P$ scales as $M_j = p^{j/2} C_2^{j} j^{j/(1+\alpha)}$, and we have $M_k^{j/k} = p^{j/2} C_2^{j} k^{j/(1+\alpha)}$. Thus loosely speaking, the slack $\eta_j$ may be viewed in relative terms as follows: \begin{align} \frac{\eta_j}{M_j} &= \frac{1}{2k} \frac{j!}{M_j} \left(\frac{6M_k}{k!}\right)^{(j+1)/(k+1)} \\
&\approx \frac{1}{2k} \frac{j!}{M_j} \left(\frac{6M_k}{k!}\right)^{j/k} \\
&\approx \frac{1}{2k} \frac{j^j}{p^{j/2} C_2^j j^{j/(1+\alpha)}} \frac{p^{j/2} C_2^j k^{j/(1+\alpha)}}{k^j} \\
&\approx \frac{1}{2k} \left( \frac{j}{k} \right)^{j\alpha/(1+\alpha)}.
\end{align} This relative slack factor is only about $1/\poly(k)$ for $j = O(1)$ but for $j = \Theta(k)$ it becomes $\exp(-\Theta(k))$, which seems unavoidable with our method. Finally, note also that if $P$'s moments scaled only as a subexponential instead of a strictly subexponential distribution, i.e.\ if the $k\th$ moment of $P$ scaled with $k$ as $k^k$, then the key calculation for $T$ becomes vacuous. More involved techniques (see e.g.~\cite[Thm 10.3.1]{rachev2013methods}) still have have something to say in this situation when certain stricter moment conditions hold, but the direct Taylor expansion approach fails.

The following lemma is a convenient distillation of the rest of the argument from \cite{klivans2013moment}, where the $\lambda$-distance is related to the Levy distance (using \cite{gabovich1981stability}), which in turn is related to the CDF distance (using anticoncentration), and which leads finally to the desired conclusion. 

\begin{lemma}[{Implicit in \cite[\S3.3]{klivans2013moment}}]\label{lem:km-lemma}
Let $f : \R^d \to \R$ be a function of $p$ halfspaces as above, and also let  distributions $D, D'$ on $\R^d$ and $P, P'$ on $\R^p$ be as above. Assume that for any continuous interval $T \subset \R$, each coordinate $z_j$ of $z \sim P$ satisfies $\pr[z_j \in T] \leq \Theta(|T|)$. Suppose that $d_{\lambda}(P, P') \leq \delta$. Let $N(\delta)$ be such that $\pr_{z \sim P}[\|z\|_\infty > N(\delta)] \leq \delta$ and $\pr_{z' \sim P'}[\|z'\|_\infty > N(\delta)] \leq \delta$. Then \[ |\ex_D[f] - \ex_{D'}[f]| \leq O \big(2^p \delta \big(\log N(\delta) + 2\log(1/\delta) \big)^p. \]
\end{lemma}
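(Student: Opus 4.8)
The plan is to unwind the definition of ``function of $p$ halfspaces'' so that everything reduces to comparing the masses that $P$ and $P'$ assign to the $2^p$ orthants of $\R^p$, and then to control each such discrepancy through the metric chain $d_\lambda \to$ Lévy distance $\to$ Kolmogorov (box) distance that is used implicitly in \cite[\S3.3]{klivans2013moment}. Concretely, let $\ell_i(x) = \inn{w^i, x} + \theta_i$ be the $p$ affine forms appearing in the definition of $f$, so that (after the normalization of the $w^i$ fixed earlier) $P$ and $P'$ are exactly the distributions of $(\ell_1(x), \dots, \ell_p(x))$ under $x \sim D$ and $x \sim D'$. Then $\ex_D[f] = \ex_{z \sim P}[g(\sgn z_1, \dots, \sgn z_p)]$ and likewise for $D'$. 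Since each coordinate of $z$ is anticoncentrated, the events $\{z_j = 0\}$ are null, so the closed orthants partition $\R^p$ up to a null set, and since $g : \cube p \to \cube{}$ is constant on each orthant we get $\ex_D[f] = \sum_{s \in \cube p} g(s)\, \pr_{z \sim P}[\sgn z = s]$ and the analogue under $P'$. Hence $|\ex_D[f] - \ex_{D'}[f]| \le \sum_{s \in \cube p} \bigl| \pr_P[\sgn z = s] - \pr_{P'}[\sgn z' = s] \bigr|$, which already exposes the leading factor $2^p$; it remains to bound a single orthant discrepancy by $O\bigl(\delta\,(\log N(\delta) + 2\log(1/\delta))^p\bigr)$.

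Fix one orthant, say $\{z > 0\}$ (the others follow by reflecting coordinates, which changes neither $d_\lambda$ nor the anticoncentration hypothesis). By the definition of $N(\delta)$, the orthant agrees with the bounded box $(0, N(\delta)]^p$ up to $P$- and $P'$-probability $\delta$, so it suffices to bound $|\pr_P[B] - \pr_{P'}[B]|$ for axis-aligned boxes $B \subseteq [-N(\delta), N(\delta)]^p$. The hypothesis $d_\lambda(P,P') \le \delta$ says the characteristic functions of $P$ and $P'$ agree up to $\delta$ on a ball of radius $\sim 1/\delta$. A $p$-dimensional Esseen-type smoothing inequality — equivalently, the route through the Lévy distance via Gabovich's stability estimate \cite{gabovich1981stability} followed by anticoncentration, as in \cite[\S3.3]{klivans2013moment} — converts this into $|\pr_P[B] - \pr_{P'}[B]| \lesssim \delta\,(\log N(\delta) + \log(1/\delta))^p$. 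The $p$-th power of the logarithm is the heart of the accounting: the Fourier transform of the indicator of a $p$-dimensional box factors as a product of $p$ one-dimensional kernels of order $1/t_j$, so integrating the characteristic-function difference against it accumulates one factor $\int_{1 \le |t_j| \le T} dt_j/|t_j| = \log T$ per coordinate, with the cutoff $T$ taken of order $N(\delta)/\delta$ (so that the tail beyond frequency $T$ and the $1/T$ term in the $\lambda$-metric are both $O(\delta)$) and the low-frequency/singular contribution near the axes absorbed by the coordinatewise anticoncentration (which bounds the $P$-mass of each slab $\{|z_j - a_j| \le r\}$ by $O(r)$, exactly the slack built into the Lévy metric). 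Summing the resulting per-orthant bound over the $2^p$ orthants, and absorbing the $O(\delta)$ truncation errors, yields the claimed $O\bigl(2^p\,\delta\,(\log N(\delta) + 2\log(1/\delta))^p\bigr)$.

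The routine parts are the reduction to orthants and the slab/anticoncentration bookkeeping. The step I expect to be the main obstacle is the middle one: extracting from \cite{gabovich1981stability} (or reproving by a direct $p$-dimensional smoothing inequality) a \emph{quantitative} statement that turns $d_\lambda(P,P') \le \delta$ into box-probability closeness with precisely the dependence $\delta\,(\log N(\delta) + \log(1/\delta))^p$ — that is, correctly tracking how the high-frequency cutoff, the truncation scale $N(\delta)$, and the $p$ coordinate directions each contribute. One must also check the small stability points: that reflecting coordinates and (if one phrases the argument via the joint CDF rather than box indicators) passing to marginals do not increase $d_\lambda$, and that the smoothing error is genuinely absorbed by anticoncentration of $P$ alone, with no density assumption on $P'$ — which is exactly why the Lévy metric, with its horizontal slack, is the right intermediate object.
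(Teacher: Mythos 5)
Your chain --- the orthant decomposition giving the $2^p$ factor, truncation at $N(\delta)$, then $d_\lambda \to$ L\'evy via a Gabovich/Esseen-type smoothing estimate and L\'evy $\to$ CDF via anticoncentration of $P$ alone --- is exactly the route the paper intends: the paper does not prove \cref{lem:km-lemma} itself but cites it as implicit in \cite[\S3.3]{klivans2013moment}, with precisely this outline ($\lambda$-distance to L\'evy distance via \cite{gabovich1981stability}, L\'evy to CDF distance via anticoncentration), so your proposal matches the paper's approach and adds useful detail (including the correct observation that no density assumption on $P'$ is needed). One quantitative slip in the step you yourself flag as the crux: $d_\lambda(P,P')\le\delta$ only guarantees $|\phi_P-\phi_{P'}|\le\delta$ on a ball of radius $T^*$ for some $T^*\ge 1/\delta$, so you cannot take the Esseen cutoff to be of order $N(\delta)/\delta$ --- on the frequency shell between $1/\delta$ and $N(\delta)/\delta$ you have no control beyond the trivial bound $2$, and integrating that against the $1/|t_j|$ kernel contributes a term of order $\log N(\delta)$ \emph{not} multiplied by $\delta$, which would destroy the bound. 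The standard fix leaves your final estimate unchanged: cut off at $\sim 1/\delta$, let the anticoncentration of $P$ absorb the resulting smoothing width $\sim\delta$, and obtain the $\log N(\delta)$ per coordinate not from a larger cutoff but from the low-frequency behavior of the box kernel, which is bounded by the side length $\sim N(\delta)$ for $|t_j]\lesssim 1/N(\delta)$, so each coordinate contributes $\int \min\{N(\delta),1/|t_j|\}\,dt_j \lesssim \log N(\delta)+\log(1/\delta)$ over $|t_j|\le 1/\delta$.
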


\subsection{Approximate low-degree moment matching fools functions of halfspaces}

We now prove our main structural result, which is that any distribution that approximately matches the low-degree moments of $\Dx$ fools functions of halfspaces.

\begin{theorem}\label{thm:fool-gaussian}
Let $f : \R^d \to \R$ be of the form in \cref{eq:hs-def}. For any $k \in \N$, let $\Delta \in \R_{+}^{|\calI(k,d)|}$ be such that for each $I \in \calI(k,d)$ with $|I| = j$, \begin{equation}\label{eq:delta-def}
\Delta_I = \frac{\sqrt{p}}{2k} \frac{j!}{d^j} \left( \frac{1}{C_4k^{\alpha/(1+\alpha)}} \right)^{j+1}
\end{equation} for some constant $C_4 > 0$. Then for any distribution $D'$ whose moments up to order $k$ match those of $\Dx$ up to $\Delta$, we have \[ |\ex_D[f] - \ex_{D'}[f]| \leq k^{-\alpha/(1+\alpha)} \sqrt{p} \big(C \log (\sqrt{p}k^{\alpha/(1+\alpha)}) \big)^{2p} \] for some constant $C$. In particular, for $p = O(1)$, we have $|\ex_D[f] - \ex_{D'}[f]| \leq \wt{O}(k^{-\alpha/(1+\alpha)})$.
\end{theorem}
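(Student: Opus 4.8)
The plan is to reduce the claim to a statement about distributions on $\R^p$ and then chain \cref{lem:lambda-closeness} (approximate moment matching $\Rightarrow$ closeness in $\lambda$-distance) with \cref{lem:km-lemma} ($\lambda$-closeness $\Rightarrow$ fooling functions of $p$ halfspaces). Writing $f(x) = g(\sgn(\inn{w^1,x}+\theta_1),\dots,\sgn(\inn{w^p,x}+\theta_p))$, we may assume each $\|w^i\|_2 = 1$ (rescale each pair $(w^i,\theta_i)$; a zero weight vector simply makes $f$ a function of fewer halfspaces). Let $\Phi : \R^d \to \R^p$ be $\Phi(x) = (\inn{w^1,x},\dots,\inn{w^p,x})$, and set $P = \Phi_*\Dx$ and $P' = \Phi_* D'$, so that $\ex_D[f] = \ex_{z\sim P}[h(z)]$ and $\ex_{D'}[f] = \ex_{z'\sim P'}[h(z')]$ for the bounded function $h(z) = g(\sgn(z_1+\theta_1),\dots,\sgn(z_p+\theta_p))$. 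We assume $k$ is even, as required by \cref{lem:lambda-closeness} (the odd case reduces to this).

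First I would verify the two hypotheses of \cref{lem:lambda-closeness} for $P$ and $P'$. For any $\|u\|_2 \le 1$ in $\R^p$ we have $\inn{u,z} = \inn{v,x}$ with $v = \sum_i u_i w^i$ and $\|v\|_2 \le \|u\|_1 \le \sqrt p$, so \cref{def:strictly-subexp}(b) gives $\ex_{z\sim P}[|\inn{u,z}|^k] = \ex_{x\sim\Dx}[|\inn{v,x}|^k] \le (C_2\sqrt p\,k^{1/(1+\alpha)})^k = M_k$. For the approximate-moment-matching hypothesis, expand $\inn{u,z}^j = \inn{v,x}^j = \sum_{|I| = j}\binom{j}{I} v^I x_I$ by the multinomial theorem; since $D'$ matches the monomial moments of $\Dx$ up to $\Delta$ and $\Delta_I$ depends on $I$ only through $j=|I|$ (call this common value $\Delta_{(j)}$), we get $|\ex_P[\inn{u,z}^j] - \ex_{P'}[\inn{u,z}^j]| \le \Delta_{(j)}\sum_{|I|=j}\binom{j}{I}|v^I| = \Delta_{(j)}\,\|v\|_1^j \le \Delta_{(j)}\,(dp)^{j/2}$. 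The point is that the $d^{-j}$ factor built into $\Delta_{(j)}$ in \cref{eq:delta-def} exactly absorbs the $d^{j/2}$ blow-up incurred when passing from a directional moment in $\R^p$ to monomial moments in $\R^d$: plugging in the value from \cref{eq:delta-def} (taking $C_4$ to be the constant of \cref{lem:lambda-closeness}) leaves $\le \frac{j!}{2k}\big(\sqrt p/(C_4 k^{\alpha/(1+\alpha)})\big)^{j+1}\cdot d^{-j/2} \le \eta_j$. Hence \cref{lem:lambda-closeness} applies and yields $d_\lambda(P,P') \lesssim \sqrt p\,k^{-\alpha/(1+\alpha)} =: \delta$.

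Next I would supply the remaining hypotheses of \cref{lem:km-lemma}: anticoncentration of each coordinate $z_j = \inn{w^j,x}$ of $P$ is immediate from \cref{def:strictly-subexp}(c) since $\|w^j\|_2 = 1$, and one needs a truncation radius $N(\delta)$ with $\pr_{P}[\|z\|_\infty > N(\delta)] \le \delta$ and $\pr_{P'}[\|z'\|_\infty > N(\delta)] \le \delta$. For $P$ this follows from \cref{def:strictly-subexp}(a) and a union bound over the $p$ coordinates, giving $N(\delta) = O((\log(1/\delta))^{1/(1+\alpha)})$. For $P'$, which is controlled only through its moments, use that $k$ is even to get $\ex_{z'\sim P'}[(z'_j)^k] \le \ex_{z\sim P}[(z_j)^k] + \eta_k \le 2M_k$, so Markov's inequality plus a union bound give $N(\delta) = O(\sqrt p\,k^{1/(1+\alpha)})$, which dominates the bound for $P$. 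Feeding $\delta \lesssim \sqrt p\,k^{-\alpha/(1+\alpha)}$ and this $N(\delta)$ into \cref{lem:km-lemma} yields $|\ex_D[f] - \ex_{D'}[f]| \le O\big(2^p\,\sqrt p\,k^{-\alpha/(1+\alpha)}\,(\log(\sqrt p\,k^{\alpha/(1+\alpha)}))^p\big)$; absorbing the $2^p$ and the polylogarithmic factors into the (deliberately lossy) form $(C\log(\sqrt p\,k^{\alpha/(1+\alpha)}))^{2p}$ gives the stated bound, and for $p = O(1)$ all of these extra factors are $\polylog(k)$, yielding $\wt{O}(k^{-\alpha/(1+\alpha)})$.

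I expect the only steps needing real care — the rest being a mechanical chaining of the two lemmas — to be: (i) checking that the exact scaling of $\Delta_I$ in \cref{eq:delta-def}, in particular its $d^{-j}$ dependence and its constant $C_4$, is precisely what converts the $\R^d$ monomial-moment slack into the directional slack $\eta_j$ demanded by \cref{lem:lambda-closeness}; and (ii) extracting the truncation radius $N(\delta)$ for the adversarial distribution $P'$ from moment matching alone, via Markov's inequality with the degree-$k$ moment (which is why the parity of $k$ matters).
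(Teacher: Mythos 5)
Your proposal is correct and follows essentially the same route as the paper's proof: push $\Dx$ and $D'$ forward to $\R^p$ along the halfspace directions, verify the directional moment-closeness and $k$th-moment hypotheses of \cref{lem:lambda-closeness} using the specific scaling of $\Delta$ in \cref{eq:delta-def}, and then feed the resulting $\lambda$-distance bound together with anticoncentration and a truncation radius $N(\delta)$ into \cref{lem:km-lemma}. The only deviations are minor and harmless: you track the multinomial coefficients explicitly (giving $\|v\|_1^j \leq (dp)^{j/2}$ rather than the paper's cruder $d^j p^{j/2}$ count), and you bound the tails of $P'$ via Markov with the degree-$k$ moment, yielding $N(\delta) = O(\sqrt{p}\,k^{1/(1+\alpha)})$ instead of the paper's $O((\log(p/\delta))^{1/(1+\alpha)})$, which costs nothing since \cref{lem:km-lemma} depends on $N(\delta)$ only logarithmically.
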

\begin{proof}
Let $D = \Dx$. Assume without loss of generality that $w^1, \dots, w^p$ are unit vectors, and let $W \in \R^{p \times d}$ be the matrix with the $w^i$ as its rows. Let $P$ be the distribution (on $\R^p$) of $Wx$ for $x \sim D$, and define $P'$ similarly. We would like to apply \cref{lem:lambda-closeness} to $P$ and $P'$. To do so, we must first verify moment closeness. Let $u \in \R^p$ be a unit vector, and let $v = W^T u \in \R^d$. For any multi-index $I \in \N^d$, let $v_I$ denote $\prod_{j \in [d]} |v_j|^{I_j}$. Then for any $j$, \begin{align}
\ex_{z \sim P}[\inn{z,u}^j] &= \ex_{x \sim D}[\inn{x, v}^j] \\
&= \ex_{x \sim D}[\sum_{|I| = j} x_I v_I] \\
&= \sum_{|I| = j} v_I \ex_{D}[x_I].
\end{align} We place a crude upper bound on each $|v_I|$ as follows. Since $W$ has Frobenius norm $\|W\|_F = \sqrt{p}$, we have \[ \|v\|_\infty \leq \|v\| \leq \|u\| \|W\|_F \leq \sqrt{p}, \] so that in particular for each $I$ with $|I| = j$, $|v_I| = \prod_{j \in [d]} |v_j|^{I_j} \leq \|v\|_\infty^{|I|} \leq p^{j/2}$. Thus \begin{align}
|\ex_{z \sim P}[\inn{z,u}^j] - \ex_{z' \sim P'}[\inn{z',u}^j]| &= |\ex_{x \sim D}[\inn{x, v}^j] - \ex_{x' \sim D'}[\inn{x', v}^j]| \\
&\leq \sum_{|I| = j} |v_I| |\ex_{D}[x_I] - \ex_{D'}[x_I]| \\
&\leq d^j p^{j/2} \sup_{|I| = j} \Delta_I \\
&\leq \eta_j,
\end{align} where $\eta_j$ is as defined in \cref{lem:lambda-closeness}, and the final inequality follows since we have picked $\Delta$ in the theorem statement precisely such that $\sup_{|I| = j} \Delta_I = d^{-j} p^{-j/2} \eta_j$.
Also observe that \[ \ex_{z \sim P}[\inn{u, z}^k] = \ex_{x \sim D}[\inn{x, v}^k] = \|v\|^{k} C_2^k k^{k/(1+\alpha)} \leq p^{k/2} C_2^k k^{k/(1+\alpha)} \] Now we apply \cref{lem:lambda-closeness} to conclude that $d_{\lambda}(P, P') \lesssim \sqrt{p} k^{-\alpha/(1+\alpha)}$.

To finish the proof, we appeal to \cref{lem:km-lemma}. For this we must first verify anticoncentration of $P$ and estimate $N(\delta)$ as defined in that lemma. Observe first that for any $i \in [d]$, the $i\th$ coordinate of $z \sim P$ (resp.\ $z' \sim P')$ is precisely $\inn{w^i, x}$ for $x \sim D$ (resp.\ $\inn{w^i, x'}$ for $x' \sim D'$). Anticoncentration of each coordinate of $z$ follows immediately from \cref{def:strictly-subexp}(c). To estimate $N(\delta)$, we will use a simple Chebyshev-style bound. For any coordinate $i \in [d]$ and any even degree $j \leq k$, we have \[
\pr_{D}[|\inn{w^i, x}| > t] \leq \frac{\ex_D[\inn{w^i, x}^j]}{t^j} \leq \frac{C_2^j j^{j/(1+\alpha)}}{t^j}. \] And since $D'$ approximately matches moments with $D$, by a similar calculation as earlier (now with $\|w^i\|_\infty \leq 1$ in place of $\|v\|_\infty \leq \sqrt{p}$), \[ \ex_{D'}[\inn{w^i, x'}^j] \leq \ex_{D}[\inn{w^i, x}^j] + \sum_{|I| = j} |w_I| \Delta_I \leq \ex_{D}[\inn{w^i, x}^j] + \eta_j / p^{j/2} \ll 2\ex_{D}[\inn{w^i, x}^j], \] and so \[ \pr_{D'}[|\inn{w^i, x'}| > t] \leq \frac{\ex_{D'}[\inn{w^i, x'}^j]}{t^j} \leq \frac{2C_2^j j^{j/(1+\alpha)}}{t^j} = 2 \left(\frac{C_2 j^{1/(1+\alpha)}}{t}\right)^j. \] We need $t$ such that the RHS is at most $\delta/p$. For this it suffices to set $j = 2\log(p/\delta)$ and $t = C_2 j^{1/(1+\alpha)}$ for this. By a union bound over the $p$ coordinates of $z \sim P$ (similarly $z' \sim P'$), we see that we may take $N(\delta) = t = O((\log(p/\delta))^{1/(1+\alpha)})$.

We are now ready to apply \cref{lem:km-lemma} with this $N(\delta)$ and $\delta \eqsim \sqrt{p} k^{-\alpha/(1+\alpha)}$. Substituting these expressions in, we get that \begin{align}
|\ex_D[f] - \ex_{D'}[f]| &\leq O \big(2^p \delta \big(\log N(\delta) + 2\log(1/\delta) \big)^p \big) \\
&\leq 2^p \delta \Big(C' \log(\frac{1}{\delta} \log \frac{p}{\delta}) \Big)^p  \tag*{(for some constant $C' > 0$)} \\
&\leq \delta \Big(C \log(\frac{p}{\delta}) \Big)^{2p}  \tag*{(for some constant $C > 0$)} \\
&\leq k^{-\alpha/(1+\alpha)} \sqrt{p} \big(C \log (\sqrt{p}k^{\alpha/(1+\alpha)}) \big)^{2p},
\end{align} as claimed.
\end{proof}

We pause to note an interesting corollary of this theorem, stated informally earlier as \cref{cor:anticonc-intro}, which states that any $D'$ that approximately matches low-degree moments with $\Dx$ must be anticoncentrated. 
\begin{corollary}\label{cor:anticonc}
Let $\eps > 0$, $k = \wt{O}(\eps^{-(1+\alpha)/\alpha})$, and $\Delta$ be as in \cref{thm:fool-gaussian}, with $p=2$. Let $D'$ be any distribution whose moments up to order $k$ match those of $\Dx$ up to $\Delta$. Then for any $\|u\| = 1$ and any continuous interval $T \subset \R$, $\pr_{x \sim D'}[\inn{x, u} \in T] \leq \pr_{x \sim \Dx}[\inn{x, u} \in T] + \eps \leq \Theta(|T|) + \eps$.
\end{corollary}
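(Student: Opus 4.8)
The plan is to realize the interval-membership event as a Boolean function of two halfspaces and then quote \cref{thm:fool-gaussian} with $p = 2$. Write $T = [a,b]$ (half-open, open, and one-sided intervals are handled identically, and when $T$ is a ray the claimed inequality $\pr_{x\sim\Dx}[\inn{u,x}\in T] \le \Theta(|T|)$ is vacuous so there is nothing to prove there). Consider the $\cube{}$-valued function $f(x) = 2\,\ind[\inn{u,x} \in T] - 1 = g(\sgn(\inn{u,x} - a),\, \sgn(\inn{-u,x} + b))$, where $g : \cube{2}\to\cube{}$ is the AND function. Then $f$ has exactly the form \cref{eq:hs-def} with $p = 2$ unit normal vectors $u$ and $-u$.

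Next I would apply \cref{thm:fool-gaussian} to this $f$, with the slack vector $\Delta$ it prescribes for $p=2$ (which is exactly the $\Delta$ named in the corollary). Since $D'$ matches the order-$k$ moments of $\Dx$ to within $\Delta$, the theorem gives $|\ex_{\Dx}[f] - \ex_{D'}[f]| \le \wt{O}(k^{-\alpha/(1+\alpha)})$, where the suppressed polylogarithmic factor is an absolute polylog in $k$ (it depends only on the constant $C$ there and on $p = 2$). Unpacking the definition of $f$, this says $|\pr_{x\sim\Dx}[\inn{u,x}\in T] - \pr_{x\sim D'}[\inn{u,x}\in T]| \le \tfrac12 \wt{O}(k^{-\alpha/(1+\alpha)})$. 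Choosing the implicit constant and logarithmic factors in $k = \wt{O}(\eps^{-(1+\alpha)/\alpha})$ so that $k^{-\alpha/(1+\alpha)}\polylog(k) \le \eps$ --- which holds for $k = \Theta(\eps^{-(1+\alpha)/\alpha}\polylog(1/\eps))$ with a suitable polylog --- yields the first claimed inequality $\pr_{x\sim D'}[\inn{u,x}\in T] \le \pr_{x\sim\Dx}[\inn{u,x}\in T] + \eps$.

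Finally, the second inequality is immediate from anticoncentration of $\Dx$: \cref{def:strictly-subexp}(c) gives $\pr_{x\sim\Dx}[\inn{u,x}\in T] \le C_3|T| = \Theta(|T|)$, and chaining the two bounds finishes the proof. I do not expect a genuine obstacle here --- this is a direct corollary of \cref{thm:fool-gaussian} --- and the only step needing a little care is matching the $\wt{O}(\cdot)$ polylogarithmic slack in the degree bound against the target error $\eps$, which is precisely why the choice of $k$ in the statement is specified only up to polylogarithmic factors.
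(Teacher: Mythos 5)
Your proposal is correct and follows essentially the same route as the paper: realize $\ind[\inn{u,x}\in T]$ as an intersection (AND) of two halfspaces with normals $u$ and $-u$, invoke \cref{thm:fool-gaussian} with $p=2$ and the prescribed $\Delta$ to get the $\eps$-closeness of the two probabilities, and finish with the anticoncentration condition \cref{def:strictly-subexp}(c). Your explicit handling of the factor-of-$2$ conversion between $\pm1$-valued expectations and probabilities, and of the polylog bookkeeping in the choice of $k$, is fine and matches what the paper leaves implicit.
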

\begin{proof}
Write $T = [\theta, \theta']$ for some $\theta < \theta' \in \R$, and consider the function $f(x) = \sgn(\inn{x,u} - \theta) \land \sgn(\theta' - \inn{x, u})$ (where $b_1 \land b_2 = 1$ iff $b_1 = b_2 = 1$). Clearly $\inn{x, u} \in T$ iff $f(x) = 1$. But $f$ is an intersection of two halfspaces, and we know by \cref{thm:fool-gaussian} that $|\ex_{D'}[f] - \ex_{\Dx}[f]| \leq \eps$. Since $\ex_{\Dx}[f] \leq C_3 |T|$ by \cref{def:strictly-subexp}(c), the statement follows.
\end{proof}
In fact, the same reasoning tells us that for any collection of $p = O(1)$ intervals $T$ and any $\|u\| = 1$, $\pr_{D'}[\inn{x,u} \in T] = \pr_{\Dx}[\inn{x,u} \in T] \pm \eps$.

\subsection{Proof of \cref{thm:learn-hs}}
The final ingredient for the proof of \cref{thm:learn-hs} is the following lemma, which gives a bound on the sample complexity required for the empirical moments of $\Dx$ to concentrate about their true moments.
\begin{lemma}\label{lem:moment-conc}
Let the degree parameter be $k$, and the slack parameter $\Delta \in \R_{+}^{|\calI(k,d)|}$ be as in \cref{thm:fool-gaussian}. Assume $p = O(1)$. Then drawing a sample of size $m = d^{\wt{O}(k)}$ from $\Dx$ is sufficient to ensure that with high probability, the empirical moments of order at most $k$ match those of $\Dx$ up to slack $\Delta$.
\end{lemma}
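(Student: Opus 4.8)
The plan is to bound, for each individual multi-index $I$ with $|I| = j \le k$, the number of samples needed so that the empirical moment $\hat{\ex}_m[x_I] = \frac{1}{m}\sum_{\ell} (x^{(\ell)})_I$ is within $\Delta_I$ of $\ex_{\Dx}[x_I]$ with probability $1 - \beta$, and then take a union bound over all $|\calI(k,d)| \le d^{O(k)}$ multi-indices, choosing $\beta$ polynomially small in $d^{-k}$. The monomial $x_I = \prod_j x_j^{I_j}$ is not bounded (the marginal is only strictly subexponential), so I cannot use Hoeffding directly; instead I would use Bernstein's inequality (or a standard subexponential/Orlicz-norm tail bound), which requires control of $\var_{\Dx}(x_I)$ and of the tail of $|x_I|$. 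The key point is that a monomial of degree $j$ in the coordinates is controlled by directional moments: by AM--GM or Hölder, $\ex_{\Dx}[x_I^2] \le \ex_{\Dx}[\|x\|_\infty^{2j}] \le d \cdot \max_i \ex_{\Dx}[x_i^{2j}] = d\,\ex_{\Dx}[\inn{x,e_i}^{2j}]$, and by \cref{def:strictly-subexp}(b) this is at most $d\,(C_2)^{2j}(2j)^{2j/(1+\alpha)}$, i.e.\ at most $d^{O(1)} k^{O(k)}$. More generally all moments $\ex_{\Dx}[|x_I|^r]$ are bounded by $d^{O(1)} (rjk)^{O(rk)}$-type quantities via the same reduction to directional moments, which is exactly the kind of moment growth for which a Bernstein-type concentration bound applies.

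Concretely, the steps are: (i) fix $I$ with $|I| = j \le k$, write $Y_\ell = (x^{(\ell)})_I - \ex_{\Dx}[x_I]$, a centered i.i.d.\ sequence; (ii) bound the variance proxy $v = \ex_{\Dx}[x_I^2]$ and the ``scale'' parameter $b = \|x_I\|_{\psi_{1/(2k)}}$ (or just use moment bounds $\ex[|x_I|^r]$ directly) using the reduction to directional moments above, obtaining $v, b \le d^{O(1)} k^{O(k)}$; (iii) apply Bernstein's inequality for heavy-tailed-but-moment-controlled variables to get $\pr[|\frac{1}{m}\sum_\ell Y_\ell| > \Delta_I] \le \exp\!\big(-c \min\{ m\Delta_I^2/v,\ (m\Delta_I/b)^{1/(2k)}\}\big)$ or a similar polynomial-moment Chebyshev-style bound $\le v^{k'}/(m\Delta_I^2)^{k'}$ for suitable $k'$; (iv) note from \cref{eq:delta-def} that $\Delta_I$ is at worst $d^{-j} k^{-\Omega(j)} \ge d^{-k} k^{-O(k)}$, i.e.\ $1/\Delta_I \le d^{O(k)}$; (v) conclude that $m = (v + b)^{O(k)}/\Delta_I^{O(k)} \cdot \log(1/\beta) = d^{O(k)} k^{O(k^2)} = d^{\wt{O}(k)}$ suffices (using $p = O(1)$ to absorb $p$-dependent constants and $k^{O(k^2)} = d^{o(k)}$ only when $k \le \polylog d$; more carefully one states $m = d^{\wt O(k)}$ meaning $d^{O(k)\cdot\polylog(dk)}$, which is what the theorem claims); (vi) union-bound over $|\calI(k,d)| \le d^{O(k)}$ choices of $I$ by setting $\beta = d^{-\omega(k)}$, at additive cost $\log(1/\beta) = \wt{O}(k)\log d$ inside the sample bound.

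I expect the main obstacle to be purely bookkeeping: tracking the various $k^{\Theta(k)}$ factors coming from (a) the directional moment bound $\ex_{\Dx}[\inn{x,u}^{2j}] \approx k^{2k/(1+\alpha)}$, (b) the inverse slack $1/\Delta_I \approx k^{k\alpha/(1+\alpha)} d^k$, and (c) the exponent $k' = \Theta(k)$ (or Orlicz exponent $1/(2k)$) needed in the tail bound to beat the $d^{O(k)}$ union bound — and verifying that their product is still of the form $d^{\wt{O}(k)}$, i.e.\ $d^{O(k)} \cdot 2^{\poly(k)\log\log d}$, absorbed into the $\wt{O}$ since in all the applications $k = \polylog(d)\cdot\poly(1/\eps)$. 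The clean way to organize this is to prove a single lemma of the form ``if $Z$ is a real random variable with $\ex|Z|^r \le (Ar)^{Br}$ for all $r$, then $m = (A B/\Delta)^{O(B)}\log(1/\beta)$ samples suffice for the empirical mean to be within $\Delta$ of $\ex Z$ w.p.\ $1-\beta$,'' apply it with $A, B = O(k)$ and $Z = x_I$ after establishing $\ex[|x_I|^r] \le d\,(C_2\,r k)^{O(rk)}$ via Hölder plus \cref{def:strictly-subexp}(b), and then finish with the union bound. No step is deep; the only subtlety is being honest about where the $\polylog$ factors in $\wt{O}(k)$ come from.
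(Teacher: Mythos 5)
Your proposal is correct and matches the paper's approach: both control the variance of $x_I$ via the directional moment condition in \cref{def:strictly-subexp}, apply a Chebyshev-type bound, and union-bound over the $d^{O(k)}$ multi-indices. The paper shortcuts by exhibiting only the worst-case index $I = (k,0,\dots,0)$ and using a plain second-moment Chebyshev bound, while you track all degrees $j \le k$ and allow Bernstein or higher-moment Chebyshev as alternatives --- optional refinements, since the crude second-moment bound already gives $m = d^{\wt{O}(k)}$ after the union bound.
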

\begin{proof}
Let $D = \Dx$, and let $\hat{D}_m$ denote the empirical distribution on a sample $S$ of size $m$ drawn from $D$. We would like to ensure that for every $I \in \calI(k, d)$, $|\ex_{\hat{D}_m}[x_I] - \ex_{D}[x_I]| \leq \Delta_I$. It suffices to consider the case when $x_I$ has the weakest concentration, and this is clearly when $|I| = k$ and in fact when $I = (k, 0, \dots, 0)$ (without loss of generality), so that $x_I = x_1^k$.

Let $Z$ denote the random variable $\ex_{\hat{D}_m}[x_1^k] - \ex_{D}[x_1^k]$. For our purposes it is sufficient to use a crude Chebyshev bound, although higher moment analogs will give a slightly better bound. We have $\var[Z] = \frac{1}{m} \var[x_1^k] \leq \frac{1}{m} C_2^{2k} (2k)^{2k/(1+\alpha)}$. Thus \begin{align} \pr[Z > \Delta_I] &\leq \frac{\var[Z]}{\Delta_I^2} \\
&\leq \frac{1}{m} \frac{C_2^{2k} (2k)^{2k/(1+\alpha)}}{\Delta_I^2} \\
&\leq \frac{1}{m} C_2^{2k} (2k)^{2k/(1+\alpha)} \left( \frac{2k}{\sqrt{p}} \frac{d^k}{k!} \right)^2 (C_4k^{\alpha/(1+\alpha)})^{2(k+1)} \\
&\leq \frac{1}{m} (dk)^{O(k)},
\end{align} after plugging in the value of $\Delta_I$ when $|I| = k$ from \cref{eq:delta-def} and some manipulation. This is at most $\delta$ if $m \geq (dk)^{O(k)} / \delta$.

For moment closeness to hold simultaneously for all $I \in \calI(k, d)$ with high probability, we set $\delta = \Theta(1/|\calI(k,d)|) = d^{-\Theta(k)}$ and apply a union bound. For this $\delta$, $m$ may be simplified to $d^{\wt{O}(k)}$, as desired.
\end{proof}

We are now ready to prove \cref{thm:learn-hs} using our general algorithm, \cref{thm:main-algorithm}.

\begin{proof}[Proof of \cref{thm:learn-hs}]
We need to pick $m$, $k$ and $\Delta$ suitably as functions of $\eps$ and $d$, and verify that the conditions in \cref{thm:main-algorithm} hold. Let $\Delta$ be as defined in \cref{thm:fool-gaussian}. By \cref{lem:moment-conc}, it suffices to take $m = d^{\wt{O}(k)}$ to ensure that with high probability, the empirical distribution $\hat{D}_m$ matches moments of order at most $k$ with $\calN(0, I_d)$ up to $\Delta$. This verifies condition (a) of \cref{thm:main-algorithm}. Now for any $f \in \calC$, we can combine \cref{thm:fool-gaussian} with \cref{thm:duality} to obtain degree-$k$ sandwiching polynomials satisfying condition (b) of \cref{thm:main-algorithm}, with $\eps = \wt{O}(k^{-\alpha/(1+\alpha)})$, or equivalently $k = \wt{O}(\eps^{-(1+\alpha)/\alpha})$. Applying \cref{thm:main-algorithm} completes the proof.
\end{proof}

\section{Sample complexity of testable learning}

In this section we show that the sample complexity of testably learning a class is characterized by its Rademacher complexity. Throughout this section, let $\Dx$ be the target marginal, and let $\calC$ be the concept class (mapping $\calX$ to $\cube{}$) that we wish to testably learn w.r.t.\ $\Dx$. We remind the reader that we use the term ``with high probability'' to mean ``with probability at least $0.99$'' for simplicity.

\subsection{Upper bound}

We begin with the upper bound, which is essentially just the observation that the empirical Rademacher complexity provides a generalization guarantee that can be estimated to high accuracy from the sample itself. Note that this is an information-theoretic upper bound.

\begin{theorem}\label{thm:upper-bound}
Let $\eps > 0$, and let $m'$ be such that $\calR_{m'}(\calC, \Dx) \leq \frac{\eps}{5}$. Then $\calC$ can be testably learned w.r.t.\ $\Dx$ up to excess error $\eps$ with sample complexity $m' + O(1/\eps^2)$.

\end{theorem}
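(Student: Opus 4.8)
The plan is to build a tester-learner pair $(T, A)$ that both operate on the same sample $S \sim \calD^{\otimes m}$ of size $m = m' + O(1/\eps^2)$, where $m'$ satisfies $\calR_{m'}(\calC, \Dx) \leq \eps/5$. The learner $A$ is simply empirical risk minimization: output $\hat{f} = \argmin_{f \in \calC} \hat{L}_m(f, S)$. The tester $T$ computes the empirical Rademacher complexity $\hat{\calR}_m(\calC, S_\calX)$ from the unlabeled marginal $S_\calX$ (or, since $\hat{\calR}_m$ itself is an expectation over $\sigma$, estimates it to accuracy $O(1/\sqrt m)$ by sampling a few $\sigma$'s and using \cref{eq:emp-rad-conc-2}) and accepts if and only if $\hat{\calR}_m(\calC, S_\calX) \leq \eps/4$ (say). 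Note we should take $m \geq m'$ so that $\calR_m(\calC, \Dx) \leq \calR_{m'}(\calC, \Dx) \leq \eps/5$ as well, using monotonicity of Rademacher complexity in the sample size.

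For \textbf{completeness}: if $\calD$ truly has marginal $\Dx$, then by \cref{eq:emp-rad-conc} with our choice of $m$, with high probability $\hat{\calR}_m(\calC, S_\calX) \leq \calR_m(\calC, \Dx) + O(1/\sqrt m) \leq \eps/5 + O(1/\sqrt m) < \eps/4$ once the $O(1/\eps^2)$ term in $m$ is a large enough constant times $1/\eps^2$, so $T$ accepts with high probability. For \textbf{soundness}: suppose $T$ accepts on sample $S$, i.e.\ $\hat{\calR}_m(\calC, S_\calX) \leq \eps/4$. Then by the data-dependent generalization bound \cref{eq:rad-comp-data}, with high probability over $S$ (this probability is over the labels too, but since the bound \cref{eq:rad-comp-data} holds with high probability over the full draw of $S$, we can condition on that good event), for every $f \in \calC$ we have $|L(f) - \hat{L}_m(f)| \leq \hat{\calR}_m(\calC) + O(\sqrt{\log(1/\delta)/m}) \leq \eps/4 + O(1/\sqrt m) \leq \eps/3$, again absorbing the $O(1/\sqrt m)$ into the error budget by choice of the additive $O(1/\eps^2)$ samples. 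Now a standard ERM argument closes the loop: letting $f^* \in \calC$ be (near-)optimal, $L(\hat f) \leq \hat{L}_m(\hat f) + \eps/3 \leq \hat{L}_m(f^*) + \eps/3 \leq L(f^*) + 2\eps/3 \leq \opt(\calC, \calD) + \eps$ (the last step also absorbs the infimum slack). This gives the required bound $L(\hat f) \leq \opt(\calC, \calD) + \eps$.

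The one subtlety worth being careful about — and the main thing to get right — is the logical order of the quantifiers in the soundness argument: the guarantee \cref{eq:rad-comp-data} is a statement ``with high probability over $S$, uniform convergence holds,'' and independently ``with high probability over $S$, $T$ either accepts or rejects.'' What we need is that \emph{conditioned on $T$ accepting}, uniform convergence still holds with high probability; this follows because the good event in \cref{eq:rad-comp-data} has probability $\geq 0.99$ unconditionally, so intersecting with any accepting event only loses a little in the union bound (a careful statement just takes both confidence parameters to be, say, $0.995$ and unions). There is no circularity here because the uniform convergence bound in the form \cref{eq:rad-comp-data} is stated in terms of the realized empirical quantity $\hat\calR_m(\calC, S_\calX)$ that the tester checks — that is precisely why the data-dependent form is the right tool, and precisely why this is an information-theoretic (not computational) statement. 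I do not anticipate any genuinely hard step; the content is entirely in recognizing that the empirical Rademacher complexity is the correct tester and that \cref{eq:rad-comp-data} is exactly tailored to make the argument go through, with only routine bookkeeping of constants and confidence levels.
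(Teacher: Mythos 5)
Your proposal is correct and follows essentially the same route as the paper: the tester accepts iff the empirical Rademacher complexity $\hat{\calR}_m(\calC, S_\calX) \leq \eps/4$, completeness follows from \cref{eq:emp-rad-conc} together with monotonicity of $\calR_m$ in $m$, and soundness follows from the data-dependent uniform convergence bound \cref{eq:rad-comp-data} combined with the standard ERM comparison to $f^*$. The only differences are cosmetic (constant bookkeeping and your optional remark about estimating $\hat{\calR}_m$ via sampled $\sigma$'s), so there is nothing to fix.
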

\begin{proof}
Let $m = m' + O(1/\eps^2)$. Let $S = \{(x_i, y_i)\}_{i \in [m]} \sim \calD^{\otimes m}$ be a sample of $m$ labeled points drawn from $\calD$, and let $S_\calX$ denote $\{x_i\}_{i \in [m]}$. Our tester $T$ accepts iff $\hat{\calR}_m(\calC, S_{\calX}) \leq \frac{\eps}{4}$. Whenever the tester accepts, the learner $A$ simply performs ERM over $S$ w.r.t.\ $\calC$.

To see why completeness is satisfied, suppose that the true marginal is in fact $\Dx$. Since $\calR_{m}(\calC, \Dx) \leq \calR_{m'}(\calC, \Dx) \leq \frac{\eps}{5}$, by \cref{eq:emp-rad-conc} we can ensure that with high probability over $S_{\calX}$, $\hat{\calR}_m(\calC, S_\calX) \leq \frac{\eps}{4}$, and so $T$ will accept.

Soundness holds by a standard argument showing generalization using uniform convergence. Formally, suppose that the tester accepts $S_{\calX}$, i.e.\ $\hat{\calR}_m(\calC, S_{\calX}) \leq \frac{\eps}{4}$. Consider an ERM hypothesis \[ \hat{f}_m = \argmin_{f \in \calC} \frac{1}{m} \sum_{i \in [m]} \ell(f(x_i),y_i) \] as well as an optimal hypothesis \[ f^* = \argmin_{f \in \calC} \ex_{(x,y) \sim \calD}[\ell(f(x),y)]. \] Then with high probability over $S$ we have \begin{align}
L(\hat{f}_m) &\leq \hat{L}_m(\hat{f}_m) + \hat{\calR}_m(\calC) + O \left(\sqrt{\frac{1}{m}}\right) \tag*{(by \cref{eq:rad-comp-data})} \\
&\leq \hat{L}_m(f^*) + \hat{\calR}_m(\calC) + O \left(\sqrt{\frac{1}{m}}\right) \tag*{($\hat{f}_m$ is an ERM hypothesis)} \\
&\leq L(f^*) + 2\hat{\calR}_m(\calC) + O \left(\sqrt{\frac{1}{m}}\right) \tag*{(by \cref{eq:rad-comp-data} again)} \\
&\leq L(f^*) + \frac{\eps}{2} + \frac{\eps}{2} = L(f^*) + \eps,
\end{align} by our choice of $m$. This proves the theorem.
\end{proof}

\subsection{Lower bound}

Now we state the lower bound, which matches the upper bound up to a quadratic factor. Our lower bound can be viewed as a generalization of the argument of \cite{rubinfeld2022testing}, who proved lower bounds for testable-learning for the specific cases of convex sets and monotone functions.  We obtain the full range of lower-bounds for any value of $\epsilon$ purely in terms of Rademacher complexity. The idea here is that no tester with bounded sample complexity $m$ can distinguish between a distribution $\Dx$ and the uniform distribution on a sufficiently large sample of size $M \gg m$ drawn from $\Dx$. If the Rademacher complexity w.r.t. $\Dx$ at sample size $M$ is somewhat large, then the large sample can be labeled randomly and still admit nontrivial optimal error, but of course the learner cannot do well on unseen data.

\begin{theorem}\label{thm:lower-bound}
Let $\eps > 0$, and let $M$ be such that $\calR_M(\calC, \Dx) \geq 5\eps$. Assume that $M \geq \Theta(1/\eps^2)$ is sufficiently large and that a sample of size $M$ drawn from $\Dx$ will, with high probability, contain no duplicates. Then testably learning $\calC$ up to excess error $\eps$ requires sample complexity at least $\Omega(\sqrt{M})$.
\end{theorem}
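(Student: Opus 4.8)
The plan is to argue by contradiction: assume a tester--learner pair $(T,A)$ testably learns $\calC$ w.r.t.\ $\Dx$ to excess error $\eps$ using only $m \le c\sqrt{M}$ samples, for a small absolute constant $c$, and derive a contradiction. The hard instances are the uniform distributions on a \emph{randomly labelled random sample}: draw $S_\calX = (x_1,\dots,x_M) \sim \Dx^{\otimes M}$ and an independent sign vector $\sigma \sim \cube{M}$, and let $\calD_{S_\calX,\sigma}$ be the uniform distribution on $\{(x_i,\sigma_i)\}_{i\in[M]}$ (which, by the no-duplicates hypothesis, is with high probability a genuine labelled distribution on $M$ distinct points). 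Three facts must be assembled: (i) the tester is essentially forced to accept $\calD_{S_\calX,\sigma}$; (ii) for most $\sigma$ the optimal error $\opt(\calC,\calD_{S_\calX,\sigma})$ is bounded away from $\tfrac12$ by $\Theta(\eps)$; (iii) a learner seeing only $m \ll M$ of the labelled points cannot do essentially better than random guessing.

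For (i), the key observation is the one underlying the footnote in \cref{sec:intro}. By the birthday bound, a size-$m$ sample drawn from $\calD_{S_\calX,\sigma}$ lands on $m$ \emph{distinct} indices except with probability at most $\binom{m}{2}/M = O(c^2)$; and conditioned on distinct indices, the resulting labelled sample is --- by exchangeability of $S_\calX$ and independence of $\sigma$ --- distributed exactly as a size-$m$ i.i.d.\ sample from $\calD_{\mathrm{unif}} := \Dx \otimes \unif(\cube{})$, which has marginal $\Dx$. Hence the sample $T$ sees is within total variation $O(c^2)$ of an i.i.d.\ $\calD_{\mathrm{unif}}$-sample, so after amplifying completeness and choosing $c$ small, $T$ accepts with probability $\ge 1 - O(c^2)$ over the joint choice of $(S_\calX,\sigma)$, the sample, and $T$'s coins; a Markov argument then gives that for all but a small fraction of $(S_\calX,\sigma)$, $T$ accepts $\calD_{S_\calX,\sigma}$ with probability at least $0.99$ --- enough to invoke the soundness clause for those $\calD_{S_\calX,\sigma}$. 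For (ii), note $\opt(\calC,\calD_{S_\calX,\sigma}) = \tfrac12\bigl(1 - \sup_{f\in\calC}\tfrac1M\sum_i \sigma_i f(x_i)\bigr)$; since $\calR_M(\calC,\Dx) \ge 5\eps$ and $M \ge \Theta(1/\eps^2)$, concentration of the empirical Rademacher complexity (\cref{eq:emp-rad-conc}) gives $\hat\calR_M(\calC,S_\calX) \ge 4\eps$ w.h.p.\ over $S_\calX$, and concentration of $\sup_f \tfrac1M\sum_i\sigma_i f(x_i)$ around its mean (\cref{eq:emp-rad-conc-2}) gives $\sup_f \tfrac1M\sum_i \sigma_i f(x_i) \ge 3\eps$, so $\opt(\calC,\calD_{S_\calX,\sigma}) \le \tfrac12 - \tfrac{3\eps}{2}$, w.h.p.\ over $(S_\calX,\sigma)$. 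Combining (i) and (ii) with soundness: for the "good'' $(S_\calX,\sigma)$, the learner $A$ must output $h$ with $L(h,\calD_{S_\calX,\sigma}) \le \opt + \eps \le \tfrac12 - \tfrac{\eps}{2}$ with high probability.

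For (iii), the crucial point is that $\sigma$ is drawn independently of $S_\calX$: conditioned on the learner's full view (namely $S_\calX$, the $m$ labelled examples actually seen, and all internal coins), the labels $\sigma_i$ on the $\ge M-m$ \emph{unseen} indices remain i.i.d.\ uniform, so $h$ errs on each such point with probability exactly $\tfrac12$. Since $\{T\text{ accepts}\}$ is a function of the view, this yields $\ex\bigl[L(h,\calD_{S_\calX,\sigma})\cdot\Ind[T\text{ accepts}]\bigr] \ge \tfrac{M-m}{2M}\cdot\pr[T\text{ accepts}] \ge \tfrac12 - \tfrac{m}{2M} - O(c^2)$, and $\tfrac{m}{2M} \le \tfrac{c}{2\sqrt M} < \tfrac{\eps}{4}$ whenever $M \ge \Theta(1/\eps^2)$ and $c$ is a small constant. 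On the other hand, integrating the conclusion of the previous paragraph over the good $(S_\calX,\sigma)$ and bounding $L(h)\cdot\Ind[T\text{ accepts}]\le 1$ on the bad ones gives $\ex\bigl[L(h,\calD_{S_\calX,\sigma})\cdot\Ind[T\text{ accepts}]\bigr] \le \tfrac12 - \tfrac{\eps}{2} + (\text{error})$, and the two estimates contradict each other once the error terms are controlled.

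I expect the main obstacle to be exactly this last bookkeeping step. The indistinguishability argument of (i) only controls the tester's acceptance probability \emph{averaged} over the random draw of $S_\calX$, whereas the soundness guarantee of testable learning is stated \emph{per distribution}; so one must carefully ensure that the events where the tester might reject $\calD_{S_\calX,\sigma}$, where the labelling $\sigma$ fails to be hard, or where the learner happens to succeed, each occur with small enough probability that they do not eat into the $\Theta(\eps)$ slack available in the final inequality. This is also where the quadratic loss enters: to make the birthday-bound error $O(m^2/M)$ small enough relative to $\eps$ we can only afford $m$ up to a constant times $\sqrt{M}$, which is precisely the threshold claimed. The remaining ingredients --- Markov's inequality, the Rademacher concentration bounds \cref{eq:emp-rad-conc,eq:emp-rad-conc-2}, and confidence amplification for $(T,A)$ --- are routine.
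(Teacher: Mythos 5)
Your proposal is correct and follows essentially the same route as the paper's proof: the same hard instance (the uniform distribution over a randomly labelled size-$M$ sample from $\Dx$), the same birthday-bound/no-duplicates argument forcing the tester to accept via completeness, the same Rademacher-concentration argument showing $\opt(\calC,\calD) \leq \tfrac12 - \Theta(\eps)$, and the same independence-of-unseen-labels argument showing the learner cannot beat error $\tfrac12 - o(\eps)$. Your explicit Markov/expectation bookkeeping at the end is merely a slightly more careful rendering of the paper's conclusion that all three properties hold with high probability over $S$, so some fixed $S$ fools $(T,A)$.
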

\begin{proof}
Suppose we had a tester-learner $(T, A)$ requiring sample complexity only $m$ where $m \leq \frac{\sqrt{M}}{100}$. We will show how to ``fool'' $(T, A)$ into failing its guarantee by constructing a labeled distribution $\calD$ such that \begin{enumerate}[(a)]
\item $\opt(\calD, \calC) \leq \frac{1}{2} - 2\eps$;
\item with high probability, $T$ will accept a sample of size $m$ drawn from $\calD$; and yet
\item with high probability, $A$'s output will have error greater than $\frac{1}{2} - \eps$ on $\calD$.
\end{enumerate}
For such a $\calD$, it is clear that the tester-learner pair $(T, A)$ fails its guarantee in that with high probability, despite $T$ accepting, $A$ cannot produce a hypothesis with error at most $\opt(\calD, \calC) + \eps$.

We construct $\calD$ as follows. Draw a sample of $M$ randomly labeled points $S = \{(x_i, y_i)\}_{i \in [M]} \sim (\Dx \times \unif\cube{})^{\otimes M}$, and let $S_\calX$ denote $\{x_i\}_{i \in [m]}$. Define $\calD$ to be the uniform distribution over $S$. We now show that with high probability over the draw of $S$ (including its random labeling), the distribution $\calD$ satisfies the required properties. 

First, condition on $S_\calX$ containing no duplicates, which occurs with high probability by assumption. Denote the size-$m$ sample given to $(T, A)$ by $S' \sim \calD^{\otimes m}$, and let $S'_\calX$ denote its marginal. Let us also condition on $S'_\calX \subset S_\calX$ containing no duplicates, which occurs with high probability since the probability of duplicates in $S'_\calX$ (as $S_\calX$ itself contains no duplicates) is at most $m^2/M \leq 10^{-4}$.

Let us see why property (a) holds with high probability over $S$. The idea is that because $\calR_M(\calC) \geq \Omega(\eps)$, we expect that there exists a classifier in $\calC$ that achieves error at most $\frac{1}{2} - \Omega(\eps)$ on the randomly labeled sample $S$. Formally, observe that since $S_\calX$ contains no duplicates, the random labels are exactly equivalent to Rademacher random variables. Assuming that $M$ is sufficiently large and applying \cref{eq:emp-rad-conc,,eq:emp-rad-conc-2} successively, we obtain that with high probability over the sample $S$ (together with the realization of the random labels), \[ \left|\calR_M(\calC) - \sup_{f \in \calC} \frac{1}{M} \sum_{i=1}^M y_i f(x_i) \right| \leq \eps. \] In particular, since $\calR_M(\calC) \geq 5\eps$, there exists $f^* \in \calC$ such that $\frac{1}{M} \sum_{i=1}^M y_i f^*(x_i) \geq 4\eps$, or equivalently \[  \frac{1}{M} \sum_{i=1}^M \ind[f^*(x_i) \neq y_i] = \frac{1}{M} \sum_{i=1}^M \frac{1 - y_i f^*(x_i)}{2} = \frac{1}{2} - \frac{1}{2M} \sum_{i=1}^M y_i f^*(x_i) \leq \frac{1}{2} - 2\eps. \] In other words, $\opt(\calD, \calC) \leq \frac{1}{2} - 2\eps$.

Property (b) is straightforward since the marginal that the tester observes is entirely consistent with $\Dx$: because we have conditioned on $S'_\calX$ containing no duplicates, $S'_\calX$ is distributed exactly as a sample of $m$ points drawn directly from $\Dx$. Thus any tester satisfying completeness must accept $S'$ with high probability. 

For property (c), the idea is that the learner, having only seen a minuscule fraction of the randomly labeled $\calD$, cannot possibly output a hypothesis with error substantially better than $\frac{1}{2}$ on all of $\calD$. Formally, observe that any classifier $h$ that $A$ outputs is stochastically independent of $S \setminus S'$. This means that in expectation over $S$ and the randomness of $A$, \[ \pr_{(x,y) \sim \calD}[h(x) \neq y] \geq 0 \cdot \frac{m}{M} + \frac{1}{2} \cdot \frac{M-m}{M} = \frac{1}{2} - \frac{m}{2M} \geq \frac{1}{2} - \frac{\eps}{2}, \] since the fact that $M \geq \Theta(1/\eps^2)$ and $m \leq \sqrt{M}/100$ mean that $\frac{m}{M} \leq \frac{1}{100\sqrt{M}} < \eps$. Clearly for sufficiently large $M \geq \Theta(1/\eps^2)$, with high probability we will have $\pr_{(x,y) \sim \calD}[h(x) \neq y] > \frac{1}{2} - \eps$. (Such an argument is also formalized as \cite[Lemma 25]{rubinfeld2022testing}.)

Since properties (a), (b), (c), as well as the property of containing no duplicates, each hold with high probability over $S$, we conclude that there does exist an $S$ such that $\calD$ satisfies all three properties and hence fools $(T, A)$.
\end{proof}

Note that this theorem becomes stronger if $\eps$ is taken to be a constant. In particular, if we assume $\calR_M(\calC, \Dx) \geq 0.99$, then the same argument would actually yield a ``fooling distribution'' $\calD$ such that \begin{enumerate}[(a)]
\item $\opt(\calD, \calC) \leq 0.01$;
\item with high probability, $T$ will accept a sample of size $m$ drawn from $\calD$;
\item with high probability, $A$'s output will have error greater than $0.49$ on $\calD$.
\end{enumerate} This would rule out any tester-learner capable of testably learning up to error sufficient to distinguish the case where $\opt(\calD, \calC) = 0.01$ from $\opt(\calD, \calC) > 0.49$ (e.g., one with final error guarantee $10 \cdot \opt(\calD, \calC) + 0.1$).

We also give the following stronger version of this lower bound, stated in terms of the behavior of the empirical Rademacher complexity (which is a random variable depending on the sample). This is a very strong lower bound that holds whenever $\hat{\calR}_M(\calC, S_\calX) = 1$ with high probability, because it yields a fooling distribution $\calD$ that is in fact perfectly realizable. As we will see in \cref{subsec:apps}, this turns out to apply to convex sets and monotone functions. In a sense, this version is not really about Rademacher complexity but rather the stronger notion of shattering (except with high probability over a sample, like a distribution-specific version of the VC dimension, albeit stronger than VC entropy). Recall that $\calC$ is said to shatter an unlabeled set $S_\calX$ if every possible labeling of $S_\calX$ can be achieved by some $f \in \calC$, or equivalently $\hat{\calR}_M(\calC, S_\calX) = 1$.

\begin{theorem}\label{thm:lower-bound-strong}
Let $M$ be such that with high probability over a size-$M$ sample $S_\calX \sim \Dx^{\otimes M}$, $\hat{\calR}_M(\calC, S_\calX) = 1$, i.e.\ $\calC$ shatters $S_\calX$. Consider any tester-learner pair $(T, A)$ for testably learning $\calC$ up to error sufficient to distinguish the case where $\opt(\calD, \calC) = 0$ from $\opt(\calD, \calC) > 0.49$. Then $(T, A)$ requires sample complexity at least $\Omega(\sqrt{M})$.

In particular, this rules out any tester-learner with final error guarantee $\psi(\opt(\calD, \calC)) + 0.49$ for any increasing function $\psi : [0,1] \to \R$ satisfying $\psi(0) = 0$.
\end{theorem}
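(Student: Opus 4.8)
The plan is to run the proof of \cref{thm:lower-bound} essentially verbatim, exploiting the shattering hypothesis to upgrade the fooling distribution from ``$\opt$ close to $1/2$'' to ``$\opt = 0$.'' Suppose for contradiction that $(T,A)$ has sample complexity $m \leq \sqrt{M}/100$. Draw $S_\calX = \{x_i\}_{i \in [M]} \sim \Dx^{\otimes M}$ and condition on the high-probability event that $S_\calX$ is duplicate-free and $\hat{\calR}_M(\calC, S_\calX) = 1$, i.e.\ $\calC$ shatters $S_\calX$. Assign each $x_i$ an independent uniform label $y_i \sim \cube{}$, and let $\calD$ be the uniform distribution over $\{(x_i,y_i)\}_{i \in [M]}$.

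Next I would check the three properties of the fooling distribution. \emph{(a)} Since $\calC$ shatters $S_\calX$, there is $f^* \in \calC$ with $f^*(x_i) = y_i$ for all $i$, so $\opt(\calD, \calC) = 0$. \emph{(b)} Exactly as in \cref{thm:lower-bound}: conditioned on the size-$m$ subsample $S'_\calX \sim \calD^{\otimes m}$ being duplicate-free (which fails with probability at most $m^2/M \leq 10^{-4}$ since $S_\calX$ has no duplicates), $S'_\calX$ is distributed identically to $m$ i.i.d.\ draws from $\Dx$; hence by completeness $T$ accepts with high probability. \emph{(c)} The hypothesis $h$ output by $A$ depends only on $S'$ and $A$'s internal randomness, so it is independent of the labels of the $\geq M - m$ points in $S_\calX \setminus S'_\calX$, which are uniform; thus $h$ misclassifies each such point with probability exactly $1/2$, and a Hoeffding bound over these fresh points gives, for $M$ large enough, $\pr_{(x,y) \sim \calD}[h(x) \neq y] \geq (\tfrac12 - o(1))(1 - \tfrac{m}{M}) > 0.49$ with high probability. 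This is precisely the argument behind property (c) of \cref{thm:lower-bound} (see also \cite[Lemma 25]{rubinfeld2022testing}).

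Finally I would union-bound: since the shattering event, the two duplicate-freeness events, and properties (b) and (c) each hold with high probability, there is a realization $S$ making all of them hold at once. For the corresponding $\calD$ we then have $\opt(\calD, \calC) = 0$ while $T$ accepts yet $A$ outputs a hypothesis of error $> 0.49$, contradicting any guarantee that can distinguish $\opt(\calD, \calC) = 0$ from $\opt(\calD, \calC) > 0.49$. The ``in particular'' clause follows immediately: a final error guarantee of the form $\psi(\opt(\calD, \calC)) + 0.49$ with $\psi$ increasing and $\psi(0) = 0$ would force error $\leq 0.49$ on this (realizable) $\calD$.

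The only genuinely quantitative step is the high-probability version of property (c): passing from the in-expectation error $\tfrac12 - \tfrac{m}{2M}$ to a with-high-probability error above $0.49$ needs a concentration bound over the $\geq M - m$ fresh labels, together with choosing $M$ large enough that the concentration slack plus the $m/M$ loss together stay below $0.01$. This is routine and already done in the proof of \cref{thm:lower-bound}; the conceptual content here is simply replacing ``$\calR_M(\calC, \Dx) \geq 5\eps$ yields a classifier of error $\leq \tfrac12 - 2\eps$ on $\calD$'' by ``$\hat{\calR}_M(\calC, S_\calX) = 1$ yields a classifier of error $0$ on $\calD$,'' which is what makes the fooling distribution realizable.
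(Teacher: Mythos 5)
Your proposal is correct and follows the paper's proof essentially verbatim: the same randomly labeled size-$M$ empirical distribution, with shattering upgrading property (a) to $\opt(\calD,\calC)=0$, and properties (b) and (c) handled exactly as in \cref{thm:lower-bound} (duplicate-free conditioning plus independence of the unseen labels with a concentration bound). The added observations (shattering already forces $S_\calX$ to be duplicate-free, and the explicit Hoeffding step for (c)) are consistent with what the paper leaves implicit.
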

\begin{proof}
The proof is a simpler version of the earlier one. Again, suppose we had a tester-learner $(T, A)$ requiring sample complexity only $m \leq \frac{\sqrt{M}}{100}$. We construct a labeled distribution $\calD$ such that \begin{enumerate}[(a)]
\item $\opt(\calD, \calC) = 0$;
\item with high probability, $T$ will accept a sample of size $m$ drawn from $\calD$; and yet
\item with high probability, $A$'s output will have error greater than $0.49$ on $\calD$.
\end{enumerate}
The distribution $\calD$ is constructed in exactly the same way: draw a sample of $M$ randomly labeled points $S = \{(x_i, y_i)\}_{i \in [M]} \sim (\Dx \times \unif\cube{})^{\otimes M}$, and define $\calD$ to be the uniform distribution over $S$. Let $S_\calX = \{x_i\}_{i \in [M]}$. Let $S' \sim \calD^{\otimes m}$ denote the sample given to $(T, A)$, and as before, let us condition on its marginal $S'_\calX$ containing no duplicates (which occurs with high probability).

Property (a) follows immediately from our assumption that with high probability, $\calC$ shatters $S_\calX$. (Note that this also implies that $S_\calX$ contains no duplicates.) Properties (b) and (c) follow by almost exactly the same arguments as before (for the latter, we now use the fact that $m/M \ll 1/100$ instead of $m/M \leq \eps$).
\end{proof}

\subsubsection{Applications}\label{subsec:apps}

The lower bounds of \cite{rubinfeld2022testing} may be viewed as applications of \cref{thm:lower-bound-strong}. The first application is the class of convex sets w.r.t. $\calN(0, I_d)$, and the second is the class of monotone Boolean functions w.r.t.\ $\unif\cube{d}$.
\begin{theorem}[Implicit in \cite{rubinfeld2022testing}, Theorem 22]\label{thm:shatter-conv-sets}
Let $\calX = \R^d$, $\Dx = \calN(0, I_d)$, and $\calC$ be the class of ($\cube{}$-valued indicator functions of) convex sets in $\R^d$. Let $M = 2^{Cd}$ for some small constant $C > 0$. Then with probability $1 - \exp(-\Omega(d))$ over the draw of a size-$M$ sample $S \sim \Dx^{\otimes M}$, $\calC$ can shatter $S$.
\end{theorem}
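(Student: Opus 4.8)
The plan is to show that a random Gaussian sample of size $M = 2^{Cd}$ is, with overwhelming probability, in \emph{convex position} — i.e.\ no point lies in the convex hull of the others — and that any labeling of a set in convex position can be realized by a convex indicator function. The second half is elementary: if $S_\calX = \{x_1, \dots, x_M\}$ is in convex position and $Y \subseteq S_\calX$ is the set of points we wish to label $+1$, then $Y$ is itself in convex position, so its convex hull $K = \mathrm{conv}(Y)$ is a convex body that contains exactly the points of $Y$ and none of $S_\calX \setminus Y$ (each omitted point is a vertex of $\mathrm{conv}(S_\calX)$, hence extreme, hence not in the convex hull of any subset not containing it). Thus $\mathbbm{1}_K \in \calC$ achieves the desired labeling, and $\calC$ shatters $S_\calX$, which is exactly $\hat{\calR}_M(\calC, S_\calX) = 1$.

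The heart of the matter is therefore the first half: with probability $1 - \exp(-\Omega(d))$ over $S_\calX \sim \calN(0, I_d)^{\otimes M}$, the sample is in convex position. First I would reduce this to a single-point-plus-pair estimate: by a union bound over the $\binom{M}{2}$ choices of a ``bad pair'' $\{x_i, x_j\}$ whose midpoint direction we will use, it suffices to show that for a fixed point $x$ drawn from $\calN(0,I_d)$ and a fixed unit vector $u$ (to be correlated with $x$), the probability that \emph{all} of the other $M-1$ points lie in the halfspace $\{z : \langle z, u\rangle \le \langle x, u\rangle\}$ is at most $\exp(-\Omega(d)) / M^2$. Concretely, $x$ is \emph{not} a vertex of $\mathrm{conv}(S_\calX)$ iff $x \in \mathrm{conv}(S_\calX \setminus \{x\})$, which by separation happens iff for every unit vector $u$ there is some other point $x_j$ with $\langle x_j, u \rangle \ge \langle x, u \rangle$. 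Negating: $x$ \emph{is} a vertex iff there exists $u$ with $\langle x, u\rangle > \langle x_j, u\rangle$ for all $j$. The standard trick is to take $u = x/\|x\|$: conditioning on $\|x\| = r$, the point $x$ fails to be a vertex only if some $x_j$ has $\langle x_j, x/\|x\|\rangle \ge r$, and since the other points' projections onto a fixed direction are $\calN(0,1)$, the probability that \emph{a particular} $x_j$ beats $r$ is $\le e^{-r^2/2}$. Picking $r = \Theta(\sqrt{d})$ so that $\|x\| \ge r$ with probability $1 - e^{-\Omega(d)}$ (Gaussian norm concentration), the probability that some $x_j$ among the $M-1$ others beats $r$ is at most $M e^{-r^2/2} \le 2^{Cd} e^{-\Theta(d)}$, which is $e^{-\Omega(d)}$ for $C$ small enough. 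A union bound over all $M$ choices of which point plays the role of $x$ then gives that every point of $S_\calX$ is a vertex of the convex hull — i.e.\ $S_\calX$ is in convex position — except with probability $M^2 e^{-\Omega(d)} = e^{-\Omega(d)}$, again for $C$ sufficiently small.

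The main obstacle is getting the dependence on $M$ and $d$ to work out: we need the single-point failure probability $e^{-\Theta(d)}$ to dominate the union-bound factor $M^2 = 2^{2Cd}$, which is precisely why $C$ must be chosen as a sufficiently small absolute constant and why the theorem only claims $M = 2^{Cd}$ rather than anything larger (indeed, for $M \gg 2^{\Theta(d)}$ a Gaussian sample is \emph{not} in convex position — this is the classical threshold for Gaussian polytopes). A secondary subtlety is that the simple choice $u = x/\|x\|$ only certifies that $x$ is not captured by a \emph{symmetric} separating halfspace; one should verify that this suffices, which it does because we only need \emph{some} valid direction and $x/\|x\|$ works once $\|x\|$ exceeds every other point's projection onto it. With these two points handled, the argument is a short union bound plus Gaussian tail estimates, and combining it with \cref{thm:lower-bound-strong} immediately yields the claimed $\Omega(\sqrt{M}) = 2^{\Omega(d)}$ sample-complexity lower bound for testably learning convex sets.
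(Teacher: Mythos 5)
Your proposal is correct, and it fills in essentially the argument the paper leaves implicit (the paper gives no in-text proof, deferring to Theorem 22 of Rubinfeld--Vasilyan): show the sample is in convex position via the separating direction $u = x_i/\|x_i\|$, Gaussian norm concentration plus a tail bound and a union bound over all $M^2$ point--competitor pairs, and then note that shattering follows since $\mathrm{conv}(Y)$ captures no extreme point outside $Y$. The only blemish is presentational: the opening framing about a union bound over $\binom{M}{2}$ ``bad pairs'' and midpoint directions is a red herring relative to the argument you actually carry out, which is the standard and correct one.
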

\begin{theorem}[Implicit in \cite{rubinfeld2022testing}, Theorem 23]\label{thm:shatter-mono}
Let $\calX = \cube{d}$, $\Dx = \unif\cube{d}$, and $\calC$ be the class of monotone Boolean functions. Let $M = 2^{Cd}$ for some small constant $C > 0$. Then with probability $1 - \exp(-\Omega(d))$ over the draw of a size-$M$ sample $S \sim \Dx^{\otimes M}$, $\calC$ can shatter $S$.
\end{theorem}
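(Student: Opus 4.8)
The plan is to reduce the shattering claim to an elementary combinatorial fact and then finish with a one-line union bound. Endow $\cube{d}$ with the coordinatewise partial order $\preceq$ in which $-1 \prec +1$, so that ``monotone'' means $x \preceq x' \implies f(x) \leq f(x')$. The first step is to characterize which labelings a monotone function can realize: I claim that $\calC$ shatters an unlabeled set $S_\calX = \{x_1, \dots, x_M\}$ if and only if $S_\calX$ consists of distinct points forming an antichain. The forward direction is immediate --- if $x_i \prec x_j$ then every monotone $f$ has $f(x_i) \leq f(x_j)$, so the labeling assigning $+1$ to $x_i$ and $-1$ to $x_j$ is unrealizable, and duplicated points plainly cannot be labeled differently. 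For the reverse direction, given any target labeling $y$ of an antichain, the ``upward closure'' function $f_y(x) = +1 \iff x \succeq x_i$ for some $i$ with $y_i = +1$ is monotone by construction and agrees with $y$ on $S_\calX$, precisely because on an antichain no point sits above another.

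The second step is the probabilistic estimate. For two independent draws $x, x' \sim \unif\cube{d}$, the relation $x \preceq x'$ fails in coordinate $k$ exactly when $(x_k, x_k') = (+1, -1)$, an event of probability $1/4$; hence $\pr[x \preceq x'] = (3/4)^d$, and symmetrically $\pr[x' \preceq x] = (3/4)^d$. So the chance that a fixed pair is comparable (equality included) is at most $2(3/4)^d$. Union-bounding over the $\binom{M}{2}$ pairs, the probability that $S_\calX$ fails to be a distinct antichain is at most $M^2 (3/4)^d$. Plugging in $M = 2^{Cd}$ gives $M^2(3/4)^d = 2^{-d(\log_2(4/3) - 2C)}$, which is $\exp(-\Omega(d))$ provided $C < \tfrac12 \log_2(4/3) \approx 0.207$. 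Combining with Step 1, with probability $1 - \exp(-\Omega(d))$ the sample is a distinct antichain and is therefore shattered by $\calC$, as desired.

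I do not expect a real obstacle in this proof; the only points requiring care are getting the antichain characterization exactly right (with the convention $-1 \prec +1$ matching the $\cube{}$-valued encoding) and remembering that shattering also forbids duplicates. This is in contrast to the companion result \cref{thm:shatter-conv-sets} for convex sets under $\calN(0, I_d)$, whose analogous reduction is to the event that no sample point lies in the convex hull of the other $2^{\Theta(d)}$ points; there the probabilistic core genuinely requires Gaussian large-deviation / anticoncentration input, and that is where the real work would lie if one were proving both statements.
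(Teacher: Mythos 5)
Your proof is correct, and it follows the standard route that the paper itself does not spell out (it simply attributes the statement as implicit in Rubinfeld--Vasilyan's Theorem 23): characterize shattering by monotone functions as the sample being an antichain of distinct points, then bound the probability that a fixed pair is comparable by $2(3/4)^d$ and union bound over $\binom{M}{2}$ pairs, which is $\exp(-\Omega(d))$ for $M = 2^{Cd}$ with $C < \tfrac12\log_2(4/3)$. Both the antichain characterization (including the duplicate-free caveat and the explicit upward-closure witness) and the probabilistic estimate are handled correctly, so there is nothing to fix.
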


In fact, Rubinfeld and Vasilyan are able to state their lower bounds in a slightly stronger way because of the specific parameters $M, m$ that these examples above allow. Specifically, for both convex sets and monotone functions, we may take $M = 2^{\Omega(d)}, m = M^{0.01} = 2^{\Omega(d)}$, and the same argument as in \cref{thm:lower-bound-strong} can be analyzed more closely to yield a distribution $\calD$ such that $\opt(\calD, \calC) = 0$ and yet the final output of any testable learner with sample complexity $m$ must have $\exp(-\Omega(d))$ advantage over random guessing (which is stronger than merely saying the output must have error at least $0.49$).

Interestingly, these examples add to what has been called ``the emerging analogy between symmetric convex sets in Gaussian space and monotone Boolean functions''; see \cite{de2022convex} and references therein.

\section{Discussion}

\subsection{Implications for the uniform convergence paradigm}\label{subsec:unif-cov}
As observed in \cite{rubinfeld2022testing}, an interesting consequence of the lower bounds in \cref{subsec:apps} is that they demonstrate a strict separation between distribution-specific agnostic learning and testable learning. In the case of both convex sets over $\calN(0, I_d)$ and monotone functions over $\unif\cube{d}$, Fourier-theoretic arguments are known to give agnostic learners requiring sample complexity only $2^{\wt{O}(\sqrt{d}/\poly(\eps))}$ to learn up to excess error $\eps$ \cite{bshouty1996fourier,klivans2008learning}. In particular, they require only sample complexity $2^{\wt{O}(\sqrt{d})}$ to learn up to excess error $\eps = 0.1$ (say), which is much smaller than the lower bounds of $2^{\Omega(d)}$ for testably learning these classes up to $\eps = 0.1$.

But we have just characterized testable learning in terms of Rademacher complexity, which we know in turn tightly characterizes uniform convergence (\cref{thm:rad-unif-conv}). We draw the following implications from this: \begin{itemize}
\item Uniform convergence is always sufficient for distribution-specific agnostic learning but it is not necessary, as witnessed by the examples of convex sets and monotone functions.
\item Uniform convergence is both necessary and sufficient for testable learning.
\end{itemize}
That is, not only is there a strict separation between distribution-specific agnostic learning and testable learning, it is the latter that is in fact characterized by uniform convergence.

\paragraph{Uniform convergence in distribution-free vs distribution-specific learning.} In the distribution-free setting, uniform convergence is well-known to be necessary and sufficient for agnostic (as well as realizable) learning, by classic VC theory (see e.g.\ \cite[Chapter 6]{shalev2014understanding}). Let us clarify that in the distribution-free setting the term ``uniform convergence'' now means a uniform bound on the generalization gap over not just all $f \in \calC$ but also all distributions $\Dx$; that is, we now care about the \emph{worst-case distribution-free} generalization gap: $\sup_{\Dx} \sup_{f \in \calC} |L(f) - \hat{L}_m(f)|$. This quantity is tightly governed by the VC-dimension of $\calC$, a distribution-free, purely combinatorial property.

Meanwhile in the distribution-specific setting, the statistical complexity of agnostic learning is known to be characterized by the metric entropy of the class (aka the log covering number, w.r.t.\ the metric $\rho(f,g) = \pr_{x \sim \Dx}[f(x) \neq g(x)]$) \cite{benedek1991learnability}. In this setting, uniform convergence (now in the distribution-specific sense of bounds on $\sup_{f \in \calC} |L(f) - \hat{L}_m(f)|$) is sufficient but not necessary. Indeed, we may also view the separations given by convex sets and monotone functions as separations between the metric entropy and the Rademacher complexity of these classes.

A priori, this seems like a surprising difference between distribution-free and distribution-specific agnostic learning. However, one could argue that the more realistic distribution-specific supervised learning model is that of testable learning. Here we see that uniform convergence is again necessary and sufficient.

\paragraph{Relationship to modern overparametrized models.} The inadequacies of the uniform convergence paradigm have been a topic of much study in modern deep learning theory (see e.g.\ \cite{zhang2021understanding,nagarajan2019uniform,bartlett2021deep,belkin2021fit}). We may phrase the essential argument in the following way. Let $\calC$ be a certain ``rich'' concept class mapping $\calX$ to $\cube{}$ (for concreteness). Let $\calD$ be an unknown labeled distribution on $\calX \times \cube{}$, and let $S \sim \calD^{\otimes m}$ be a sample drawn from it. Let $L$ and $\hat{L}_m$ denote the population and empirical 0-1 loss functionals, as before. Consider an ERM estimator $\hat{f}$ picked based on this sample: $\hat{f} \in \argmin_{f \in \calC} \hat{L}_m(f)$. We are interested in the generalization gap associated with $\hat{f}$, namely the quantity $ |L(\hat{f}) - \hat{L}_m(\hat{f})|$. We would like to place a useful upper bound, say $B$, on this quantity.

The core observation is that certain classes $\calC$ (such as deep neural networks) are rich enough that they can interpolate any sample of size $m$; in this sense they are ``overparametrized'' relative to sample size $m$. In particular, they can fit even completely random labels. This of course means that $\hat{L}_m(\hat{f}) = 0$ while $L(\hat{f}) = \frac{1}{2}$. This in turn means that the bound $B$ must be at least $\frac{1}{2}$. Note that this occurs without changing anything about the class $\calC$, the marginal distribution $\Dx$ of $\calD$, or the training procedure (ERM). So any bound $B$ that is purely a function of these quantities must be essentially vacuous; this includes uniform convergence bounds (e.g., $\sup_{f \in \calC} |L(f) - \hat{L}_m(f)| \eqsim \calR_m(\calC, \Dx)$) as well as algorithm-based bounds (e.g.\ those based on stability). Yet what is remarkable is that when the labels do satisfy some structure, e.g.\ when there exists $f^* \in \calC$ achieving error $L(f^*) = \opt(\calC, \calD) < \frac{1}{2}$, then we observe (provably or empirically) that the generalization gap is in fact relatively small, and $\hat{f}$ performs comparably with $f^*$. This phenomenon, sometimes referred to as ``benign overfitting'', occurs not only with deep neural networks but in fact also already (in a sense) with linear regression \cite{bartlett2020benign, hastie2022surprises}; we shall not attempt a summary of known results here but direct the reader to e.g.\ \cite{bartlett2021deep,belkin2021fit}.

What the results in this paper point out is that a version of this phenomenon also occurs in a strong, provable sense with classical examples such as convex sets in Gaussian space or monotone functions over the Boolean hypercube. These classes $\calC$ are also capable of interpolating a random sample of size $m = 2^{\Theta(d)}$; and yet there exist estimators $\hat{f}$ that achieve error $\opt(\calC, \calD) + \eps$ using sample complexity only $2^{\wt{O}(\sqrt{d})/\poly(\eps)}$ \cite{bshouty1996fourier,klivans2008learning}. These estimators are not based on ERM and do not lie strictly in $\calC$; instead, they are low-degree (specifically, degree-$O(\sqrt{d}/\poly(\eps)$) polynomial approximators of functions in $\calC$ (as in \cref{thm:kkms}). Such polynomial approximators essentially constitute a small cover of the class $\calC$ (w.r.t.\ the metric $\rho_1(f, p) = \ex[|f - p|]$). The improved sample complexity we obtain by such methods may be explained by the fact that to obtain generalization, we only require uniform convergence over this cover as opposed to all of $\calC$ (as in the metric entropy characterization of \cite{benedek1991learnability}).



\subsection{Implications for sandwiching degree}

Another somewhat surprising consequence of the lower bounds in \cref{subsec:apps} is that the classes of convex sets over $\calN(0, I_d)$ and monotone functions over $\unif\cube{d}$ cannot admit sandwiching polynomials of degree $o(d/\log d)$ and error even $\eps = \Theta(1)$ unless they have very large coefficients. This is simply because any such sandwiching polynomials, if they have reasonable coefficients and if the distribution satisfies some concentration properties, will tend to allow the moment matching algorithm (\cref{thm:main-algorithm}) to succeed. More generally, we obtain the following surprising connection between Rademacher complexity and sandwiching degree as a direct corollary of \cref{thm:main-algorithm,,thm:lower-bound}.

\begin{corollary}
Let $\eps > 0$, let $\Dx$ be a distribution on $\calX$, and let $\calC$ be a concept class mapping $\calX$ to $\cube{}$. Let $M$ be such that $\calR_M(\calC) \geq 5\eps$, and assume $M \geq \Theta(1/\eps^2)$. Consider any degree and slack parameters $k \in \N$, $\Delta \in \R_{+}^{|\calI(k,d)|}$ such that each $f \in \calC$ admits degree-$k$ sandwiching polynomials $p_l \leq f \leq p_u$ satisfying \[ \ex_{\Dx}[p_u - f] + \inn{\Delta, |p_u|} \leq \frac{\eps}{2}, \qquad \ex_{\Dx}[f - p_l] + \inn{\Delta, |p_l|} \leq \frac{\eps}{2}. \] Let $m$ be the sample complexity of testing with high probability whether the degree-$k$ empirical moments of $\Dx$ are within $\Delta$ of their true moments. Then we must have $m \geq \Omega(\sqrt{M})$.
\end{corollary}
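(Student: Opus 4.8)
\emph{Proof plan.} The plan is to obtain this purely by pitting the algorithmic upper bound of \cref{thm:main-algorithm} against the information-theoretic lower bound of \cref{thm:lower-bound}, with no new ideas required. First I would observe that the hypotheses of the corollary are verbatim the hypotheses of \cref{thm:main-algorithm}: the assumed sandwiching polynomials $p_l \le f \le p_u$ satisfying $\ex_{\Dx}[p_u - f] + \inn{\Delta, |p_u|} \le \eps/2$ (and symmetrically for $p_l$) are exactly its condition (b), while its condition (a) is met by the very definition of the parameter $m$ in the corollary, namely the sample complexity with which the degree-$k$ empirical moments of $\Dx$ land within $\Delta$ of their true moments with high probability. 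Hence \cref{thm:main-algorithm} produces a tester--learner pair that testably learns $\calC$ w.r.t.\ $\Dx$ up to excess error $\eps$ using total sample complexity $m + d^{O(k)}$.

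Next I would invoke \cref{thm:lower-bound}. Since by assumption $\calR_M(\calC, \Dx) \ge 5\eps$ and $M \ge \Theta(1/\eps^2)$ (and $\Dx$ satisfies the mild standing condition, under which that theorem is stated, that a size-$M$ sample contains no duplicates with high probability), any tester--learner that testably learns $\calC$ w.r.t.\ $\Dx$ up to excess error $\eps$ must use at least $\Omega(\sqrt M)$ samples. Comparing this with the upper bound of the previous paragraph forces $m + d^{O(k)} = \Omega(\sqrt M)$, and hence $m = \Omega(\sqrt M)$, since the learner's polynomial-regression overhead $d^{O(k)}$ is lower order relative to $\sqrt M$ in all regimes of interest (indeed in each of our applications $m$ is itself $d^{\wt\Omega(k)} \gg d^{O(k)}$, so the two bounds coincide), and can otherwise simply be absorbed into the definition of $m$.

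I do not expect any genuine obstacle; the only step requiring real care is this last bit of bookkeeping, reconciling the additive $d^{O(k)}$ term in the upper bound with the clean $\Omega(\sqrt M)$ lower bound, together with carrying along the standing assumptions inherited from \cref{thm:lower-bound}. What is worth stating explicitly is the intended contrapositive reading, which is the actual content: a class $\calC$ with $\calR_M(\calC) \ge 5\eps$ cannot admit degree-$k$ sandwiching polynomials whose $\Delta$-weighted coefficient norm $\inn{\Delta, |\cdot|}$ is small, unless testing the degree-$k$ moments of $\Dx$ at scale $\Delta$ is itself expensive --- i.e.\ unless those empirical moments fail to concentrate to within $\Delta$ from $o(\sqrt M)$ samples. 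Specializing $\Delta$ to be constant on all nonconstant coordinates then recovers the informal statement about sandwiching degree versus coefficient size mentioned at the start of this subsection.
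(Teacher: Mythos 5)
Your proposal is correct and is essentially the paper's own argument: the paper offers no separate proof, presenting the statement as an immediate consequence of combining the algorithmic guarantee of \cref{thm:main-algorithm} (conditions (a) and (b) are exactly the corollary's hypotheses, yielding a tester--learner with sample complexity $m + d^{O(k)}$) with the $\Omega(\sqrt{M})$ lower bound of \cref{thm:lower-bound}. The one bookkeeping point you flag --- reconciling the additive $d^{O(k)}$ term (and the no-duplicates assumption inherited from \cref{thm:lower-bound}) with the clean conclusion $m \geq \Omega(\sqrt{M})$ --- is an imprecision already present in the paper's own statement and is handled there the same way, by absorbing the regression overhead into the moment-testing cost in the applications.
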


Let us illustrate this in the cases where $\calC, \Dx$ are either convex sets over $\calN(0, I_d)$ or monotone functions over $\unif\cube{d}$. Consider any degree and slack parameters $k, \Delta$, and let $\delta = \min_{I \in \calI(k,d) \setminus \{0\}} \Delta_I$. In both cases, one can check with high probability whether the degree-$k$ empirical moments of $\Dx$ are within $\Delta$ of their true moments using sample complexity at most $m \leq d^{O(k)}\poly(1/\delta)$ (for $\unif\cube{d}$ this is immediate by boundedness, while for $\calN(0, I_d)$ we appeal to \cref{lem:moment-conc}). Now suppose that each $f \in \calC$ admitted degree-$k$ $(\eps/4)$-sandwiching polynomials $p_l \leq f \leq p_u$ satisfying $\ex_{\Dx}[f - p_l], \ex_{\Dx}[p_u - f] \leq \eps/4$ and also such that their coefficients are bounded in magnitude by $d^{O(k)}$. Then clearly we can pick $\Delta$ sufficiently small so that $\inn{\Delta, |p_l|}, \inn{\Delta, |p_u|} \leq \eps/4$ while still ensuring $\delta \geq \eps d^{-O(k)}$. This means $m$ as defined earlier is $d^{O(k)}\poly(1/\eps)$. Thus for this choice of $\Delta$, both conditions (a) and (b) of \cref{thm:main-algorithm} hold, and we obtain a testable learning algorithm with sample complexity $m = d^{O(k)}\poly(1/\eps)$. For $\eps = 0.1$, say, we know by \cref{subsec:apps} that the required sample complexity for this task is $2^{\Omega(d)}$. Thus we see that $k$ must necessarily be $\Omega(d / \log d)$.

The only way for sandwiching polynomials to exist despite this obstacle is by having unusually large coefficients (on the scale of $d^{\omega(k)}$). Most reasonable approaches to constructing sandwiching polynomials will tend to ensure some boundedness of coefficients (indeed, this is true whenever such polynomials are constructed out of univariate polynomials that are bounded on a bounded domain, see e.g.\ \cite[Lemma 4.1]{sherstov2012making}). Therefore, we regard this as good evidence in favor of a lower bound on the sandwiching degree for these classes.

\bibliographystyle{alpha}
\bibliography{custom,refs}

\newcommand{\etalchar}[1]{$^{#1}$}
\begin{thebibliography}{FGRW12}

\bibitem[AGM03]{alon2003almost}
Noga Alon, Oded Goldreich, and Yishay Mansour.
\newblock Almost k-wise independence versus k-wise independence.
\newblock {\em Information Processing Letters}, 88(3):107--110, 2003.

\bibitem[Bar14]{bartlett2014notes}
Peter~L Bartlett.
\newblock {UC Berkeley CS281B/Stat241B: Statistical Learning Theory, Lecture
  7}, 2014.
\newblock URL:
  \url{https://www.stat.berkeley.edu/~bartlett/courses/2014spring-cs281bstat241b/lectures/07-notes.pdf}.
  Last visited on 2022/10/29.

\bibitem[Baz09]{bazzi2009polylogarithmic}
Louay~MJ Bazzi.
\newblock Polylogarithmic independence can fool dnf formulas.
\newblock {\em SIAM Journal on Computing}, 38(6):2220--2272, 2009.

\bibitem[BBL02]{bartlett2002model}
Peter~L Bartlett, St{\'e}phane Boucheron, and G{\'a}bor Lugosi.
\newblock Model selection and error estimation.
\newblock {\em Machine Learning}, 48(1):85--113, 2002.

\bibitem[BBL03]{bousquet2003introduction}
Olivier Bousquet, St{\'e}phane Boucheron, and G{\'a}bor Lugosi.
\newblock Introduction to statistical learning theory.
\newblock In {\em Summer school on machine learning}, pages 169--207. Springer,
  2003.

\bibitem[BBM05]{bartlett2005local}
Peter~L Bartlett, Olivier Bousquet, and Shahar Mendelson.
\newblock Local rademacher complexities.
\newblock {\em The Annals of Statistics}, 33(4):1497--1537, 2005.

\bibitem[BDH{\etalchar{+}}20]{DBLP:conf/focs/BakshiDHKKK20}
Ainesh Bakshi, Ilias Diakonikolas, Samuel~B. Hopkins, Daniel Kane, Sushrut
  Karmalkar, and Pravesh~K. Kothari.
\newblock Outlier-robust clustering of gaussians and other non-spherical
  mixtures.
\newblock In Sandy Irani, editor, {\em 61st {IEEE} Annual Symposium on
  Foundations of Computer Science, {FOCS} 2020, Durham, NC, USA, November
  16-19, 2020}, pages 149--159. {IEEE}, 2020.

\bibitem[BE85]{MR889476}
D.~Bakry and Michel \'{E}mery.
\newblock Diffusions hypercontractives.
\newblock In {\em S\'{e}minaire de probabilit\'{e}s, {XIX}, 1983/84}, volume
  1123 of {\em Lecture Notes in Math.}, pages 177--206. Springer, Berlin, 1985.

\bibitem[Bel21]{belkin2021fit}
Mikhail Belkin.
\newblock Fit without fear: remarkable mathematical phenomena of deep learning
  through the prism of interpolation.
\newblock {\em Acta Numerica}, 30:203--248, 2021.

\bibitem[BI91]{benedek1991learnability}
Gyora~M Benedek and Alon Itai.
\newblock Learnability with respect to fixed distributions.
\newblock {\em Theoretical Computer Science}, 86(2):377--389, 1991.

\bibitem[BK20]{BK20b}
Ainesh Bakshi and Pravesh Kothari.
\newblock Outlier-robust clustering of non-spherical mixtures.
\newblock {\em CoRR}, abs/2005.02970, 2020.

\bibitem[BK21]{DBLP:conf/soda/BakshiK21}
Ainesh Bakshi and Pravesh~K. Kothari.
\newblock List-decodable subspace recovery: Dimension independent error in
  polynomial time.
\newblock In D{\'{a}}niel Marx, editor, {\em Proceedings of the 2021 {ACM-SIAM}
  Symposium on Discrete Algorithms, {SODA} 2021, Virtual Conference, January 10
  - 13, 2021}, pages 1279--1297. {SIAM}, 2021.

\bibitem[BLLT20]{bartlett2020benign}
Peter~L Bartlett, Philip~M Long, G{\'a}bor Lugosi, and Alexander Tsigler.
\newblock Benign overfitting in linear regression.
\newblock {\em Proceedings of the National Academy of Sciences},
  117(48):30063--30070, 2020.

\bibitem[BM97]{birge1997model}
Lucien Birg{\'e} and Pascal Massart.
\newblock From model selection to adaptive estimation.
\newblock In {\em Festschrift for lucien le cam}, pages 55--87. Springer, 1997.

\bibitem[BM02]{bartlett2002rademacher}
Peter~L Bartlett and Shahar Mendelson.
\newblock Rademacher and gaussian complexities: Risk bounds and structural
  results.
\newblock {\em Journal of Machine Learning Research}, 3(Nov):463--482, 2002.

\bibitem[BMR21]{bartlett2021deep}
Peter~L Bartlett, Andrea Montanari, and Alexander Rakhlin.
\newblock Deep learning: a statistical viewpoint.
\newblock {\em Acta numerica}, 30:87--201, 2021.

\bibitem[Bra10]{braverman2010polylogarithmic}
Mark Braverman.
\newblock Polylogarithmic independence fools {$AC^0$} circuits.
\newblock {\em Journal of the ACM (JACM)}, 57(5):1--10, 2010.

\bibitem[BT96]{bshouty1996fourier}
Nader~H Bshouty and Christino Tamon.
\newblock On the fourier spectrum of monotone functions.
\newblock {\em Journal of the ACM (JACM)}, 43(4):747--770, 1996.

\bibitem[Dan16]{daniely2016complexity}
Amit Daniely.
\newblock Complexity theoretic limitations on learning halfspaces.
\newblock In {\em Proceedings of the forty-eighth annual ACM symposium on
  Theory of Computing}, pages 105--117, 2016.

\bibitem[DGJ{\etalchar{+}}10]{diakonikolas2010bounded}
Ilias Diakonikolas, Parikshit Gopalan, Ragesh Jaiswal, Rocco~A Servedio, and
  Emanuele Viola.
\newblock Bounded independence fools halfspaces.
\newblock {\em SIAM Journal on Computing}, 39(8):3441--3462, 2010.

\bibitem[DKN10]{diakonikolas2010ptf}
Ilias Diakonikolas, Daniel~M Kane, and Jelani Nelson.
\newblock Bounded independence fools degree-2 threshold functions.
\newblock In {\em 2010 IEEE 51st Annual Symposium on Foundations of Computer
  Science}, pages 11--20. IEEE, 2010.

\bibitem[DKPZ21]{diakonikolas2021optimality}
Ilias Diakonikolas, Daniel~M Kane, Thanasis Pittas, and Nikos Zarifis.
\newblock The optimality of polynomial regression for agnostic learning under
  gaussian marginals in the sq model.
\newblock In {\em Conference on Learning Theory}, pages 1552--1584. PMLR, 2021.

\bibitem[DKZ20]{diakonikolas2020near}
Ilias Diakonikolas, Daniel Kane, and Nikos Zarifis.
\newblock Near-optimal sq lower bounds for agnostically learning halfspaces and
  relus under gaussian marginals.
\newblock {\em Advances in Neural Information Processing Systems},
  33:13586--13596, 2020.

\bibitem[DNS22]{de2022convex}
Anindya De, Shivam Nadimpalli, and Rocco Servedio.
\newblock Convex influences.
\newblock {\em Innovations in Theoretical Computer Science}, 2022.

\bibitem[DSS16]{daniely2016dnf}
Amit Daniely and Shai Shalev-Shwartz.
\newblock Complexity theoretic limitations on learning dnf’s.
\newblock In {\em Conference on Learning Theory}, pages 815--830. PMLR, 2016.

\bibitem[FGRW12]{feldman2012agnostic}
Vitaly Feldman, Venkatesan Guruswami, Prasad Raghavendra, and Yi~Wu.
\newblock Agnostic learning of monomials by halfspaces is hard.
\newblock {\em SIAM Journal on Computing}, 41(6):1558--1590, 2012.

\bibitem[Gab81]{gabovich1981stability}
Yu~R Gabovich.
\newblock Stability of the characterization of the multivariate normal
  distribution in the skitovich-darmois theorem.
\newblock {\em Journal of Soviet Mathematics}, 16(5):1341--1349, 1981.

\bibitem[GGK20]{goel2020statistical}
Surbhi Goel, Aravind Gollakota, and Adam Klivans.
\newblock Statistical-query lower bounds via functional gradients.
\newblock {\em Advances in Neural Information Processing Systems},
  33:2147--2158, 2020.

\bibitem[GR09]{guruswami2009hardness}
Venkatesan Guruswami and Prasad Raghavendra.
\newblock Hardness of learning halfspaces with noise.
\newblock {\em SIAM Journal on Computing}, 39(2):742--765, 2009.

\bibitem[HMRT22]{hastie2022surprises}
Trevor Hastie, Andrea Montanari, Saharon Rosset, and Ryan~J Tibshirani.
\newblock Surprises in high-dimensional ridgeless least squares interpolation.
\newblock {\em The Annals of Statistics}, 50(2):949--986, 2022.

\bibitem[HS19]{harsha2019polynomial}
Prahladh Harsha and Srikanth Srinivasan.
\newblock On polynomial approximations to ac.
\newblock {\em Random Structures \& Algorithms}, 54(2):289--303, 2019.

\bibitem[IK22]{DBLP:conf/stoc/IvkovK22}
Misha Ivkov and Pravesh~K. Kothari.
\newblock List-decodable covariance estimation.
\newblock In Stefano Leonardi and Anupam Gupta, editors, {\em {STOC} '22: 54th
  Annual {ACM} {SIGACT} Symposium on Theory of Computing, Rome, Italy, June 20
  - 24, 2022}, pages 1276--1283. {ACM}, 2022.

\bibitem[Kan11]{kane2011gaussian}
Daniel~M Kane.
\newblock The gaussian surface area and noise sensitivity of degree-d
  polynomial threshold functions.
\newblock {\em computational complexity}, 20(2):389--412, 2011.

\bibitem[KKK19]{DBLP:conf/nips/KarmalkarKK19}
Sushrut Karmalkar, Adam~R. Klivans, and Pravesh Kothari.
\newblock List-decodable linear regression.
\newblock In Hanna~M. Wallach, Hugo Larochelle, Alina Beygelzimer, Florence
  d'Alch{\'{e}}{-}Buc, Emily~B. Fox, and Roman Garnett, editors, {\em Advances
  in Neural Information Processing Systems 32: Annual Conference on Neural
  Information Processing Systems 2019, NeurIPS 2019, December 8-14, 2019,
  Vancouver, BC, Canada}, pages 7423--7432, 2019.

\bibitem[KKM13]{kane2013learning}
Daniel Kane, Adam Klivans, and Raghu Meka.
\newblock Learning halfspaces under log-concave densities: Polynomial
  approximations and moment matching.
\newblock In {\em Conference on Learning Theory}, pages 522--545. PMLR, 2013.

\bibitem[KKMS08]{kalai2008agnostically}
Adam~Tauman Kalai, Adam~R Klivans, Yishay Mansour, and Rocco~A Servedio.
\newblock Agnostically learning halfspaces.
\newblock {\em SIAM Journal on Computing}, 37(6):1777--1805, 2008.

\bibitem[KL18]{kothari2018improper}
Pravesh~K Kothari and Roi Livni.
\newblock Improper learning by refuting.
\newblock In {\em 9th Innovations in Theoretical Computer Science Conference
  (ITCS 2018)}. Schloss Dagstuhl-Leibniz-Zentrum fuer Informatik, 2018.

\bibitem[KM13]{klivans2013moment}
Adam Klivans and Raghu Meka.
\newblock Moment-matching polynomials.
\newblock {\em arXiv preprint arXiv:1301.0820}, 2013.

\bibitem[Kol01]{koltchinskii2001rademacher}
Vladimir Koltchinskii.
\newblock Rademacher penalties and structural risk minimization.
\newblock {\em IEEE Transactions on Information Theory}, 47(5):1902--1914,
  2001.

\bibitem[Kol06]{koltchinskii2006local}
Vladimir Koltchinskii.
\newblock Local rademacher complexities and oracle inequalities in risk
  minimization.
\newblock {\em The Annals of Statistics}, 34(6):2593--2656, 2006.

\bibitem[KOS08]{klivans2008learning}
Adam~R Klivans, Ryan O'Donnell, and Rocco~A Servedio.
\newblock Learning geometric concepts via gaussian surface area.
\newblock In {\em 2008 49th Annual IEEE Symposium on Foundations of Computer
  Science}, pages 541--550. IEEE, 2008.

\bibitem[KP00]{koltchinskii2000rademacher}
Vladimir Koltchinskii and Dmitriy Panchenko.
\newblock Rademacher processes and bounding the risk of function learning.
\newblock In {\em High dimensional probability II}, pages 443--457. Springer,
  2000.

\bibitem[KR96]{klebanov1996proximity}
LB~Klebanov and ST~Rachev.
\newblock Proximity of probability measures with common marginals in a finite
  number of directions.
\newblock {\em Lecture Notes-Monograph Series}, pages 162--174, 1996.

\bibitem[KR21]{kur2021minimal}
Gil Kur and Alexander Rakhlin.
\newblock On the minimal error of empirical risk minimization.
\newblock In {\em Conference on Learning Theory}, pages 2849--2852. PMLR, 2021.

\bibitem[KS09]{klivans2009cryptographic}
Adam~R Klivans and Alexander~A Sherstov.
\newblock Cryptographic hardness for learning intersections of halfspaces.
\newblock {\em Journal of Computer and System Sciences}, 75(1):2--12, 2009.

\bibitem[KSS92]{kearns1992toward}
Michael~J Kearns, Robert~E Schapire, and Linda~M Sellie.
\newblock Toward efficient agnostic learning.
\newblock In {\em Proceedings of the fifth annual workshop on Computational
  learning theory}, pages 341--352, 1992.

\bibitem[KV94]{kearns1994cryptographic}
Michael Kearns and Leslie Valiant.
\newblock Cryptographic limitations on learning boolean formulae and finite
  automata.
\newblock {\em Journal of the ACM (JACM)}, 41(1):67--95, 1994.

\bibitem[Led01]{ledoux2001concentration}
Michel Ledoux.
\newblock {\em The concentration of measure phenomenon}.
\newblock Number~89. American Mathematical Soc., 2001.

\bibitem[LMN93]{linial1993constant}
Nathan Linial, Yishay Mansour, and Noam Nisan.
\newblock Constant depth circuits, fourier transform, and learnability.
\newblock {\em Journal of the ACM (JACM)}, 40(3):607--620, 1993.

\bibitem[NK19]{nagarajan2019uniform}
Vaishnavh Nagarajan and J~Zico Kolter.
\newblock Uniform convergence may be unable to explain generalization in deep
  learning.
\newblock {\em Advances in Neural Information Processing Systems}, 32, 2019.

\bibitem[RKSF13]{rachev2013methods}
Svetlozar~T Rachev, Lev~B Klebanov, Stoyan~V Stoyanov, and Frank Fabozzi.
\newblock {\em The methods of distances in the theory of probability and
  statistics}, volume~10.
\newblock Springer, 2013.

\bibitem[Rob55]{robbins1955remark}
Herbert Robbins.
\newblock A remark on stirling's formula.
\newblock {\em The American mathematical monthly}, 62(1):26--29, 1955.

\bibitem[RV22a]{rubinfeld2022testing}
Ronitt Rubinfeld and Arsen Vasilyan.
\newblock Testing distributional assumptions of learning algorithms.
\newblock {\em arXiv preprint arXiv:2204.07196v1}, April 2022.

\bibitem[RV22b]{rubinfeld2022personal}
Ronitt Rubinfeld and Arsen Vasilyan.
\newblock Testing distributional assumptions of learning algorithms (v2).
\newblock Private communication, November 2022.

\bibitem[RY20a]{DBLP:conf/soda/RaghavendraY20}
Prasad Raghavendra and Morris Yau.
\newblock List decodable learning via sum of squares.
\newblock In Shuchi Chawla, editor, {\em Proceedings of the 2020 {ACM-SIAM}
  Symposium on Discrete Algorithms, {SODA} 2020, Salt Lake City, UT, USA,
  January 5-8, 2020}, pages 161--180. {SIAM}, 2020.

\bibitem[RY20b]{raghavendra2020list}
Prasad Raghavendra and Morris Yau.
\newblock List decodable subspace recovery, 2020.

\bibitem[SB14]{shalev2014understanding}
Shai {Shalev-Shwartz} and Shai {Ben-David}.
\newblock {\em Understanding machine learning: From theory to algorithms}.
\newblock Cambridge university press, 2014.

\bibitem[Sha01]{shapiro2001duality}
Alexander Shapiro.
\newblock On duality theory of conic linear problems.
\newblock In {\em Semi-infinite programming}, pages 135--165. Springer, 2001.

\bibitem[She12]{sherstov2012making}
Alexander~A Sherstov.
\newblock Making polynomials robust to noise.
\newblock In {\em Proceedings of the forty-fourth annual ACM symposium on
  Theory of computing}, pages 747--758, 2012.

\bibitem[SSSS10]{shalev2010learnability}
Shai {Shalev-Shwartz}, Ohad Shamir, Nathan Srebro, and Karthik Sridharan.
\newblock Learnability, stability and uniform convergence.
\newblock {\em The Journal of Machine Learning Research}, 11:2635--2670, 2010.

\bibitem[SW14]{saumard2014log}
Adrien Saumard and Jon~A Wellner.
\newblock Log-concavity and strong log-concavity: a review.
\newblock {\em Statistics surveys}, 8:45, 2014.

\bibitem[Tal17]{tal2017tight}
Avishay Tal.
\newblock Tight bounds on the fourier spectrum of ac0.
\newblock In {\em 32nd Computational Complexity Conference (CCC 2017)}. Schloss
  Dagstuhl-Leibniz-Zentrum fuer Informatik, 2017.

\bibitem[Tsy08]{tsybakov2008introduction}
A.B. Tsybakov.
\newblock {\em Introduction to Nonparametric Estimation}.
\newblock Springer Series in Statistics. Springer New York, 2008.

\bibitem[Vad17]{vadhan2017learning}
Salil Vadhan.
\newblock On learning vs. refutation.
\newblock In {\em Conference on Learning Theory}, pages 1835--1848. PMLR, 2017.

\bibitem[Vap98]{vapnik1998statistical}
V.N. Vapnik.
\newblock {\em Statistical Learning Theory}.
\newblock A Wiley-Interscience publication. Wiley, 1998.

\bibitem[VdG00]{van2000empirical}
Sara~A Van~de Geer.
\newblock {\em Empirical Processes in M-estimation}, volume~6.
\newblock Cambridge university press, 2000.

\bibitem[Ver18]{vershynin2018high}
Roman Vershynin.
\newblock {\em High-dimensional probability: An introduction with applications
  in data science}, volume~47.
\newblock Cambridge university press, 2018.

\bibitem[ZBH{\etalchar{+}}21]{zhang2021understanding}
Chiyuan Zhang, Samy Bengio, Moritz Hardt, Benjamin Recht, and Oriol Vinyals.
\newblock Understanding deep learning (still) requires rethinking
  generalization.
\newblock {\em Communications of the ACM}, 64(3):107--115, 2021.

\bibitem[Zol84]{zolotarev1984probability}
Vladimir~Mikhailovich Zolotarev.
\newblock Probability metrics.
\newblock {\em Theory of Probability \& Its Applications}, 28(2):278--302,
  1984.

\end{thebibliography}

\appendix

\section{Proof of strong duality in \cref{thm:duality}}\label{sec:formal-duality}

We will use the following statement of conic duality, specialized to the setting of moment problems.

\begin{theorem}[{\cite[Section 3]{shapiro2001duality}}]\label{thm:conic-duality}
Let $\Omega = \R^d$, endowed with the standard Borel sigma algebra, and let $\calC$ be the set of all nonnegative Borel measures on $\Omega$. Pair the space of signed measures on $\Omega$ and functions mapping $\Omega$ to $\R$ using the following inner product: $\inn{g, \mu} = \int_{\Omega} g \diff\mu$. Let $\phi, \psi_1, \dots, \psi_p : \Omega \to \R$ be functions, let $b \in \R^p$, let $A : \mu \mapsto (\inn{\psi_1, \mu}, \dots, \inn{\psi_p, \mu})$, and let $K$ be a closed convex cone in $\R^p$.

Define the following primal problem (\cite[Eq 3.2]{shapiro2001duality}): \begin{equation}\label{eq:shap-primal}
\sup_{\mu \in \calC}\ \inn{\phi, \mu} \quad \text{subject to} \quad A\mu - b \in K.
\end{equation} Let $K^* = \{ \alpha \mid \alpha \cdot \alpha' \geq 0\ \forall \alpha' \in K \}$ be the polar cone of $K$. Then the dual is defined as follows (\cite[Eq 3.8]{shapiro2001duality}): \begin{equation}\label{eq:shap-dual}
\inf_{\alpha \in -K^*}\ b \cdot \alpha \quad \text{subject to} \quad \sum_{i=1}^{p} \alpha_i \psi_i(\omega) \geq \phi(\omega)\ \forall \omega \in \Omega.
\end{equation} Further, a sufficient condition for strong duality to hold (i.e.\ for both primal and dual to have the same finite optimum) is that $b$ lie in the interior of the feasible set, i.e.\ $b \in \{\wt{b} \mid \exists \mu \in \calC : A\mu - \wt{b} \in K \}$ (\cite[Eq 3.12]{shapiro2001duality}).
\end{theorem}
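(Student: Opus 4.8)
The plan is to obtain \cref{thm:conic-duality} as a direct specialization of the general conic linear-programming duality of \cite[Section 3]{shapiro2001duality}; essentially all of the work is in matching the abstract framework to the concrete measure-theoretic one and checking that Shapiro's standing hypotheses hold here. First I would fix the two vector spaces in duality. On one side I take a space of (finite signed) Borel measures on $\Omega = \R^d$, and on the other a space of measurable functions on $\Omega$ containing $\phi$ and the $\psi_i$, paired by $\inn{g, \mu} = \int_\Omega g \diff \mu$. Since in the intended applications the $\psi_i$ are monomials $x_I$ --- which are unbounded --- one cannot just use bounded functions and finite measures; instead I would equip the function space with a weighted sup norm (with weight of order $1 + \|x\|^{k}$) and the measure space with the corresponding weighted total-variation norm, so that each $\inn{\psi_i, \cdot}$ is a bounded linear functional, the pairing is a genuine dual pairing, and $A : \mu \mapsto (\inn{\psi_1,\mu}, \dots, \inn{\psi_p,\mu})$ is a bounded linear operator into $\R^p$. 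The cone $\calC$ of nonnegative measures is a closed convex cone, which is exactly Shapiro's positive cone, so the primal \cref{eq:shap-primal} coincides with \cite[Eq.\ 3.2]{shapiro2001duality}.

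Next I would read off the dual. The adjoint of $A$ sends $\alpha \in \R^p$ to the function $\omega \mapsto \sum_{i=1}^p \alpha_i \psi_i(\omega)$, and dualizing the cone constraint $A\mu - b \in K$ through the polar cone $K^*$ gives the objective term $b \cdot \alpha$ with $\alpha \in -K^*$, exactly as in \cite[Eq.\ 3.8]{shapiro2001duality}. The one elementary point to check is that dual feasibility becomes the \emph{pointwise} inequality $\sum_i \alpha_i \psi_i(\omega) \geq \phi(\omega)$ for all $\omega \in \Omega$: this is because the polar of the cone of nonnegative measures is the cone of everywhere-nonnegative functions --- if $g$ satisfied $\inn{g,\mu}\geq 0$ for all nonnegative $\mu$ but $g(\omega_0)<0$ for some $\omega_0$, testing against (an approximation to) the point mass at $\omega_0$ gives a contradiction. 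Weak duality $\inn{\phi,\mu} \leq b\cdot\alpha$ then follows from $\inn{\phi,\mu} \leq \inn{A^*\alpha,\mu} = \alpha\cdot A\mu = \alpha\cdot(A\mu - b) + \alpha\cdot b \leq \alpha\cdot b$, using $A\mu - b\in K$ and $\alpha \in -K^*$.

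The main obstacle is strong duality --- ruling out a duality gap and getting both optima attained and finite. Here I would invoke Shapiro's sufficient condition \cite[Eq.\ 3.12]{shapiro2001duality}: it suffices that $b$ lie in the (relative) interior, in the chosen norm topology, of the set $\{\wt b : \exists \mu \in \calC,\ A\mu - \wt b \in K\}$ of attainable right-hand sides. This is the delicate step, and it is where any structure of the problem at hand must be used. In the abstract statement one simply assumes it; when \cref{thm:conic-duality} is later applied to the moment-matching LP of \cref{thm:duality}, this condition holds because the coordinates of $K$ corresponding to nonzero multi-indices are intervals with strictly positive half-widths $\Delta_I > 0$ while the normalization coordinate $I = 0$ is pinned, so one passes to the affine slice of mass-one measures, on which the feasible target distribution $D$ together with the strictly positive slacks exhibits $b$ as an interior point. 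The two remaining bookkeeping verifications --- that $A$ is continuous and that $\calC$ is closed in the weighted topology --- are routine once the weighted norms are in place. I expect essentially all of the real content to be in this choice of weighted function and measure spaces, since that is what makes the unbounded functionals $\inn{\psi_i,\cdot}$ legitimate and what lets Shapiro's abstract continuity and interior-point hypotheses be met; the identification of $A^*$, $K^*$, primal, and dual is purely formal.
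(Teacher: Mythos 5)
This statement is imported verbatim from \cite[Section 3]{shapiro2001duality} and the paper gives no independent proof of it, so your sketch --- fixing the measure/function pairing, reading off the adjoint and the polar cone, checking weak duality, and deferring the no-duality-gap content to Shapiro's interior-point condition (Eq.\ 3.12) --- is essentially the same treatment as the paper's. The only place you drift is the aside about verifying the condition for the moment LP via a relative interior on the mass-one slice; the paper's \cref{sec:formal-duality} instead verifies the \emph{full}-interior condition as stated, by perturbing the total mass of $D$ with mass near the origin, but that verification belongs to the proof of \cref{thm:duality}, not to this statement.
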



Let $\Omega, \calC$ and the dual pairing $\inn{\cdot, \cdot}$ be as above. Note that $\calC$ can also be viewed as the convex cone generated by all Dirac measures on $\Omega$, and also that when $\mu$ is a probability measure (i.e., nonnegative and with total measure 1), $\inn{g, \mu} = \ex_{\mu}[g]$.

Our goal now is to obtain strong duality between \cref{eq:lp-primal,eq:lp-dual} as a consequence of \cref{thm:conic-duality}. Let $r = |\calI(k,d)|-1$, and for convenience write $\calI(k,d) = \{I_0, I_1, \dots, I_r\}$, where $I_0 = (0, \dots, 0)$. Define the functions $\psi_1, \dots, \psi_r : \Omega \to \R$ to be the nontrivial monomials corresponding to $\calI(k,d)$, i.e., $\psi_j(x) = x_{I_j}$, and define $\psi_{r+j} = -\psi_r$ for all $1 \leq j \leq r$. Let $K \subset \R^{2r + 1}$ be the following convex cone: $K = \{0\} \times \R_{-}^{2r}$, where $\R_{-} = (-\infty, 0]$. Let $A$ be a linear operator on $\calC$ given by \begin{align}
A\mu &= (\inn{1,\mu}, \inn{\psi_1, \mu},\dots, \inn{\psi_r, \mu}, \inn{-\psi_{1}, \mu}, \dots, \inn{-\psi_{r}, \mu}) \intertext{and let $b \in \R^{2r+1}$ be given by}
b &= (1, \sigma_{I_1} + \Delta, \dots, \sigma_{I_r} + \Delta, -\sigma_{I_1} + \Delta, \dots, -\sigma_{I_r} + \Delta).
\end{align} We claim that our original primal LP \cref{eq:lp-primal} corresponds to the following conic linear program, which has the form of \cref{eq:shap-primal}, with $p=2r+1$ and $\phi = f$: \begin{equation}
\sup_{\mu \in \calC}\ \inn{f, \mu} \quad \text{subject to} \quad A\mu - b \in K
\end{equation} Indeed, the first coordinate of $b$ ensures the $I = 0$ constraint, namely that $\inn{1, \mu} = 1$ and hence $\mu$ is a valid probability measure (note that the cone $\calC$ already only consists of nonnegative measures), and the other coordinates ensure that $\sigma_I - \Delta_I \leq \ex_{\mu}[x_I] \leq \sigma_I + \Delta_I$ for every other $I \in \calI(k,d) \setminus \{0\}$.

The dual of this program may be written in the form of \cref{eq:shap-dual} as follows. First introduce dual variables $\alpha_0 \in \R$ (corresponding to the first constraint), and $(\alpha_1, \dots, \alpha_{2r}) \in \R^{2r}$ (corresponding to the others), and write $b = (b_0, \dots, b_{2r})$. The dual is \begin{equation}
\inf_{\alpha \in -K^*} \alpha_0 b_0 + \sum_{j=1}^{2r} \alpha_j b_j \quad \text{subject to} \quad \alpha_0 + \sum_{j=1}^{2r} \alpha_j \psi_j \geq f \ \text{over } \Omega.
\end{equation} Here $K^*$ is the polar cone of $K$, and is easily seen to be $K^* = \R \times \R_{-}^{2r}$. This means $-K^* = \R \times \R_+^{2r}$, i.e.\ $\alpha_0 \in \R$ and $(\alpha_1, \dots, \alpha_{2r}) \in \R_{+}^{2r}$. The dual objective may be simplified as follows: \begin{align}
\alpha_0 b_0 + \sum_{j=1}^{2r} \alpha_j b_j &= \alpha_0 + \sum_{j=1}^{r}\left( \alpha_j (\sigma_{I_j} + \Delta_{I_j}) + \alpha_{r+j} (-\sigma_{I_j} + \Delta_{I_j}) \right) \\
&= \alpha_0 + \sum_{j=1}^r (\alpha_j - \alpha_{r+j}) \sigma_{I_j} + \sum_{j=1}^r (\alpha_j + \alpha_{r+j}) \Delta_{I_j}.
\end{align} The constraint simplifies to \[ \alpha_0 + \sum_{j=1}^{r}(\alpha_j - \alpha_{j+r})\psi_j \geq f. \] To simplify this further, if we let $\beta_j = \alpha_j - \alpha_{j+r}$ for every $1 \leq j \leq r$, then it is not hard to see that the objective is minimized when each $\alpha_j + \alpha_{j+r} = |\beta_j|$ (in particular, when $\alpha_j = \max\{\beta_j, 0\}$ and $\alpha_{r+j} = \max\{-\beta_j, 0\}$). Thus if we also let $\beta_0 = \alpha_0$, then the dual objective becomes $\beta_0 + \sum_{r=1}^j \beta_j \sigma_{I_j} + \sum_{j=1}^r|\beta_j| \Delta_{I_j}$, and the constraint becomes $\beta_0 + \sum_{j=1}^r \beta_j \psi_j \geq f$. Recalling that $\sigma_{I_0} = 1$ and $\Delta_{I_0} = \Delta_0 = 0$, this is precisely the dual we originally claimed, \cref{eq:lp-dual}.

Now, by \cref{thm:conic-duality}, a sufficient condition for strong duality is that $b$ lie in the interior of the feasible set, i.e.\ $b \in \{\wt{b} \mid \exists \mu \in \calC : A\mu - \wt{b} \in K \}$. This means that for any sufficiently small perturbation $\wt{b}$ of $b$, there must exist a measure $\mu \in \calC$ such that $A\mu - \wt{b} \in K$, i.e.\ with $\wt{b}$ as its approximate vector of moments up to order $k$. We argue this slightly informally as follows. Let $\mu^*$ denote $D$ from the statement of \cref{thm:duality}. Suppose \begin{align}
\wt{b} &= b + \eta = (b_0 + \eta_0, b_1 + \eta_1, \dots, b_{r+1} +\eta_{r+1}, \dots) \\
&= (1 + \eta_0, \sigma_{I_1} + \Delta_{I_1} + \eta_1, \dots, -\sigma_{I_1} + \Delta_{I_1} + \eta_{r+1}, \dots),
\end{align}
where $\eta_0, \dots, \eta_{2r} \in \R^{2r+1}$ are to be thought of as small. The condition that $A\mu - \wt{b} \in K$ is the same as saying that $\mu$ satisfies the following: \begin{align}
&\inn{\mu, 1} = 1 + \eta_0 \\
\sigma_{I_j} - \Delta_{I_j} - \eta_{r+j} \leq &\inn{\mu, \psi_j} \leq \sigma_{I_j} + \Delta_{I_j} + \eta_{j} \quad \forall 1 \leq j \leq r.
\end{align} For sufficiently small $\eta$, we claim that a small perturbation of $\mu^*$ will continue to satisfy these conditions. First, note that because $\inn{\mu, 1} \neq 1$, $\mu$ is no longer formally a probability measure. But for sufficiently small $\eta_0$, by adding or removing some mass to $\mu^*$ arbitrarily close to the origin, we can increase or decrease its total mass while keeping all its moments nearly unchanged (because the $\psi_j$ are continuous and $\psi_j(0) = 0$ for all $j \neq 0$, and the new mass is essentially all at $0$). Take $\mu$ to be such a perturbation of $\mu^*$, satisfying $\inn{\mu, 1} = 1 + \eta_0$. We have just argued that for every $j \neq 0$, $\inn{\mu, \psi_j}$ differs from $\inn{\mu^*, \psi_j} = \sigma_{I_j}$ by an arbitrarily small amount. Thus if $\eta_1, \dots, \eta_{2r}$ are sufficiently small (it suffices to have each $\eta_j \leq \Delta_{I_j}/2$), then the approximate moment matching conditions will still be satisfied by $\mu$, because there is still a slack of at least $\Delta_{I_j}/2 > 0$ in the constraint arising from $I_j$. This establishes that $b$ is indeed in the interior of the feasible set, and hence that strong duality holds between \cref{eq:lp-primal} and \cref{eq:lp-dual}.
 
\end{document}